\def\stdvers{0} \def\COLTvers{1}
\def\version{\stdvers}
\newcommand{\coltauthor}{}
\newtheorem{theorem}{Theorem}
\newtheorem{lemma}[theorem]{Lemma}
\newtheorem{definition}{Definition}
\newenvironment{innerproof}[1][Proof]
{\begin{proof}[#1]}
{\end{proof}}
\newenvironment{innerproof}[1][Proof]
{\par\noindent{\bfseries\upshape #1}}{\hfill$\blacktriangle$\\[2mm]}
\newtheorem*{claim*}{Claim}
\def\reals{\mathbb{R}}
\def\R{\reals}
\def\N{\mathbb{N}}
\def\eps{\epsilon}
\def\poly{\mathrm{poly}}
\def\Z{\mathbb{Z}}
\def\cum{\mathrm{cum}}
\def\Var{\mathrm{Var}}
\def\Poisson{\mathsf{Poisson}}
\def\Pois{\Poisson}
\def\Bin{\mathrm{Bin}}
\def\Multinom{\mathrm{Multinom}}
\def\Normal{\mathcal{N}}
\def\Norm{\Normal}
\def \deq{\sim}
\def\m{\mathrm{m}}
\def\ds{\rule{0pt}{1.5ex}}
\newcommand{\abs}[1]{\left|#1\right|}
\newcommand{\norm}[1]{\left\|#1\right\|}
\newcommand{\prob}[1]{{\mathsf{Pr}}\left(#1\right)}
\newcommand{\cumtns}[2]{\ensuremath{\kappa_{#1}^{#2}}}
\newcommand{\E}[1]{\mathbb{E}\left(#1\right)}
\newcommand{\spn}[1]{\mathrm{span}\left(#1\right)}
\newcommand{\diag}[1]{\mathrm{diag}\left( #1 \right)}
\renewcommand{\vec}[1]{\mathrm{vec}\left( #1 \right)}
\renewcommand{\dim}{n}
\newcommand{\means}{m}
\renewcommand{\d}{\operatorname{d}}
\newcommand{\Vr}[1]{\mathrm{Var}\left(#1\right)}
\newcommand{\Conjug}[1]{}
\newcommand{\suchthat}{\;\ifnum\currentgrouptype=16 \middle\fi|\;}
\newcommand{\angles}[1]{\left\langle #1 \right\rangle}
\newcommand{\dist}{\mathrm{dist}}
\newcommand{\email}[1]{\href{mailto:#1}{\texttt{#1}}}
\DeclareRobustCommand{\stirling}{\genfrac\{\}{0pt}{}}
\newcommand{\norms}[1]{{\lVert#1\rVert}^2}
\def\final{1}  
\newcommand{\lnote}[1]{[{\small Luis: \textbf{#1}}]\marginpar{*}}
\newcommand{\nnote}[1]{[{\small Navin: \textbf{#1}}]\marginpar{*}}
\newcommand{\vnote}[1]{[{\small Jimmy: \textbf{#1}}]\marginpar{*}}
\newcommand{\anonnote}[1]{[{\small anon: \textbf{#1}}]\marginpar{*}}
\newcommand{\sidecomment}[1]{\marginpar{\tiny #1}}
\newcommand{\details}[1]{[[#1]]}
\newcommand{\lnote}[1]{}
\newcommand{\nnote}[1]{}
\newcommand{\vnote}[1]{}
\newcommand{\anonnote}[1]{}
\newcommand{\sidecomment}[1]{}
\newcommand{\details}[1]{}
\title{The More, the Merrier:\\ the Blessing of Dimensionality for Learning  Large Gaussian Mixtures }
\date{}
\author{Joseph Anderson \\
Ohio State University \\
\email{andejose@cse.ohio-state.edu}
\and
Mikhail Belkin \\
Ohio State University \\
\email{mbelkin@cse.ohio-state.edu}
\and
Navin Goyal\\
Microsoft Research India \\
\email{navingo@microsoft.com}
\and
Luis Rademacher\\
Ohio State University\\
\email{lrademac@cse.ohio-state.edu}
\and
James Voss \\
Ohio State University\\
\email{vossj@cse.ohio-state.edu}
}
\begin{document}

\maketitle

\begin{abstract}
In this paper we show that very large mixtures of Gaussians are efficiently learnable in high dimension.
More precisely, we prove that a mixture with known identical covariance matrices whose number of components is a polynomial of any fixed degree in the dimension $n$ is polynomially learnable as long as a certain non-degeneracy condition on the means is satisfied. It turns out that this condition is generic in the sense of smoothed complexity, as soon as the dimensionality of the space is high enough. 
Moreover, we prove that no such condition can possibly exist in low dimension and the problem of learning the parameters is generically hard.  In contrast, much of the existing work on Gaussian Mixtures relies on low-dimensional projections and thus hits an artificial barrier. 

Our main result on mixture recovery relies on a new ``Poissonization"-based technique, which transforms a mixture of Gaussians to a linear map of a product distribution. The problem of learning this map can be efficiently solved using some recent results on tensor decompositions and Independent Component Analysis (ICA), thus giving an  algorithm for recovering the mixture. 
In addition, we combine our low-dimensional hardness results for Gaussian mixtures with  Poissonization to show how to embed difficult instances of low-dimensional Gaussian mixtures into the ICA setting, thus establishing exponential information-theoretic lower bounds for underdetermined ICA in low dimension. To the best of our knowledge, this is the first such result  in the literature.


In addition to contributing to  the problem of Gaussian mixture learning, we
 believe that this work is among the first steps toward better understanding the rare phenomenon of the ``blessing of dimensionality" in the computational aspects of statistical inference.
\processifversion{vCOLT}{\footnote{We would like to submit this paper for consideration for the Best Student Paper award.}}
\end{abstract}
\begin{vCOLT}
\begin{keywords}
  Gaussian mixture models, tensor methods, blessing of dimensionality, smoothed analysis, Independent Component Analysis
\end{keywords}
\end{vCOLT}


\section{Introduction}

The question of  recovering a probability distribution from a finite set of samples is one of the most fundamental questions of statistical inference. While classically such problems have  been considered in 
low dimension, more recently inference in high dimension has drawn significant attention in statistics and computer science literature. 

In particular, an active line of investigation in theoretical computer science has dealt with the question of learning a Gaussian Mixture Model in high dimension. This line of work was started in~\cite{dasgupta99} where  the first algorithm to recover parameters using a number of samples polynomial in the dimension was presented. The method relied on random projections to a low dimensional space and required certain separation conditions for the means of the Gaussians.  Significant work was done in order to weaken the separation conditions and to generalize the  result (see e.g.,~\cite{dasgupta00,arora01,vempala02, achlioptas05,feldman06}). 
Much of this work has polynomial sample and time complexity but requires strong separation conditions on the Gaussian components. A completion of the attempts to weaken the separation conditions was achieved 
in~\cite{BelkinFOCS10} and~\cite{moitra10}, where it was shown that arbitrarily small separation was sufficient for learning a general mixture with a fixed number of components in polynomial time. Moreover, a one-dimensional example given in~\cite{moitra10} showed  that  an exponential dependence on the number of components was unavoidable unless strong separation requirements were imposed.  Thus the question of polynomial learnability appeared to be settled.  It is worth noting that  while quite different in many aspects, all of these  papers used a general scheme similar to that in the original work~\cite{dasgupta00} by reducing  high-dimensional inference to a small number of low-dimensional problems through appropriate projections.

However,  a surprising result was recently proved in~\cite{HsuK13}. The authors showed that a 
mixture of $d$ Gaussians
 in dimension $d$ 
could be learned using a polynomial number of samples, assuming a non-degeneracy condition 
on the configuration of the means.  The result in~\cite{HsuK13} is inherently high-dimensional as that condition is 
never satisfied when  the means belong to  a lower-dimensional space. 
Thus the problem of learning a mixture gets progressively  computationally easier as the dimension increases, a ``blessing of dimensionality!" It is important to note that this was quite different from much of the previous work, which had primarily used projections to lower-dimension spaces. 

Still, there remained a large gap between the worst case impossibility of efficiently learning more than a fixed number of Gaussians  in low dimension and the situation when the number of components is equal to the dimension. Moreover, it was not completely clear whether the underlying problem was  genuinely easier in high dimension or our algorithms in low dimension were suboptimal. The one-dimensional example  in~\cite{moitra10} cannot answer this question as it is a specific worst-case scenario, which can be potentially ruled out by some genericity condition. 

In our paper we take a step to eliminate this gap by showing that even very large mixtures of Gaussians can be polynomially learned. More precisely, we show that a mixture of $m$ Gaussians with equal known covariance can be polynomially learned as long as $m$ is bounded from above by a polynomial of the dimension $n$ and a  certain more complex non-degeneracy condition for the means is satisfied. We show that if $n$ is high enough, these non-degeneracy conditions are  generic in the smoothed complexity sense. Thus for any fixed $d$, $O(n^d)$ generic Gaussians can be polynomially learned in dimension $n$.

Further, we prove that no such condition can exist in low dimension.  A  measure of non-degeneracy must be monotone in the sense that adding Gaussian components must make the condition number worse. 
However, we show that for $k^2$ points uniformly sampled from $[0,1]$ there are (with high probability) two mixtures of    unit Gaussians with means on non-intersecting subsets of these points, whose $L^1$ distance is $O^*(e^{-{k}})$ and which are thus not polynomially identifiable. More generally, in dimension $n$ the distance becomes  $O^*(e^{-\sqrt[n]{k}})$.  That is, the conditioning improves as the dimension increases, which is consistent with our algorithmic results.


To summarize, our contributions are as follows:
\begin{enumerate}
\item We show that for any $q$, a mixture of $n^q$ Gaussians in dimension $n$ can be learned in time and  number of samples polynomial in $n$ and a certain ``condition number" $\sigma$. We show that if the dimension is sufficiently high, this results in an algorithm polynomial from the smoothed analysis point of view (Theorem \ref{thm:correctness}). To do that we provide smoothed analysis of the condition number using certain results from~\cite{RudelsonVershynin} and anti-concentration inequalities.  
    The main technical ingredient of the algorithm  is a new ``Poissonization" technique to reduce Gaussian mixture estimation to a problem of recovering a linear map of a product distribution known as underdetermined Independent Component Analysis (ICA). 
We combine this with the recent work on efficient algorithms for underdetermined ICA from~\cite{GVX} to obtain the necessary bounds.

\item We show that in low dimension polynomial identifiability fails in a certain generic sense (see Theorem~\ref{thm:low-dim-identifiability}).  Thus the efficiency of our main algorithm is truly a consequence of the "blessing of dimensionality" and no comparable algorithm exists in low dimension. The analysis is based on  results from approximation theory and Reproducing Kernel Hilbert Spaces.

Moreover, we combine the approximation theory results with the Poissonization-based  technique to show how to embed difficult instances of low-dimensional Gaussian mixtures into the ICA setting, thus establishing exponential information-theoretic lower bounds for underdetermined Independent Component Analysis in low dimension. To the best of our knowledge, this is the first such result  in the literature.

\end{enumerate}

We discuss our main contributions more formally now.
The notion of Khatri--Rao power $A^{\odot d}$ of a matrix $A$ is defined in Section~\ref{sec:Prelims}.
\begin{theorem}[Learning a GMM with Known Identical Covariance] \label{thm:correctness}
Suppose $\means \geq \dim$ and
let $\epsilon, \delta > 0$.
Let $w_1 \Norm(\mu_1, \Sigma) + \ldots + w_m \Norm(\mu_m, \Sigma)$ be an $\dim$-dimensional GMM, i.e.
$\mu_i \in \R^n$, $w_i > 0$, and $\Sigma \in \R^{n \times n}$. 
Let $B$ be the $\dim \times \means$ matrix whose $i^{th}$ column is $\mu_i / \norm{\mu_i}$.
If there exists $d \in 2\N$ so that $\sigma_m\left(B^{\odot d/2} \right) > 0$, then Algorithm \ref{alg:reduction} recovers each $\mu_i$ to within $\epsilon$ accuracy with probability $1 - \delta$.
Its sample and time complexity are at most
\begin{align*}
\poly\left(m^{d^2},
      \sigma^{d^2}, u^{d^2}, w^{d^2},
      d^{d^2}, r^{d^2}, 1/\epsilon, 1/\delta, 1/b, \log^{d^2}\bigl({1}/{(b \eps \delta)}\bigr) \right)
\end{align*}
where 
$w \geq \max_{i}(w_i)/\min_{i}(w_i)$,
$u \geq \max_i \norm{\mu_i}$,
$r \geq \big(\max_i\norm{\mu_i} + 1)/(\min_i\norm{\mu_i})\big)$,
$0 < b \leq \sigma_m(B^{\odot d/2})$ are bounds provided to the algorithm, and
$\sigma = \sqrt{\lambda_{\max}(\Sigma)}$.
\end{theorem}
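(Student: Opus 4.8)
The plan is to realize the mixture as an instance of (noisy) underdetermined ICA through Poissonization and then invoke the ICA machinery of \cite{GVX}. First I would fix a scale parameter $\lambda > 0$ and draw a $\Pois(\lambda)$ number of samples from the mixture; by the splitting (thinning) property of the Poisson distribution, the number $N_i$ of points falling in component $i$ is then an \emph{independent} $\Pois(\lambda w_i)$ variable. Forming the aggregate vector $Y = \sum_{\text{points}} X$ gives a random vector whose cumulant generating function factors as $K_Y(t) = \sum_i \lambda w_i\bigl(\exp(\T{t}\mu_i + \tfrac12 \T{t}\Sigma t) - 1\bigr)$. The point of this identity is that $Y$ behaves like the ICA observation $B' N$, where $B'$ is the $\dim \times \means$ matrix of means and $N = (N_1,\dots,N_m)$ is a vector of independent Poisson sources, contaminated only by $\Sigma$-dependent corrections.

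The second step is to isolate the signal tensor. Differentiating $K_Y$ at the origin and using that the moment tensors of $\Normal(0,\Sigma)$ are known, the order-$d$ cumulant decomposes as $\cum_d(Y) = \sum_i \lambda w_i \mu_i^{\otimes d} + (\text{lower-order-in-}\mu\text{ terms contracted with known powers of }\Sigma)$. Since $\Sigma$ is given and the lower cumulants of $Y$ are estimable, I would subtract these correction terms (solving recursively in the order) to recover an estimate of the clean tensor $C_d := \sum_i \lambda w_i \mu_i^{\otimes d} = \sum_i (\lambda w_i \norm{\mu_i}^d)\, b_i^{\otimes d}$, where $b_i = \mu_i/\norm{\mu_i}$ is the $i$-th column of $B$. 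Flattening $C_d$ into an $n^{d/2}\times n^{d/2}$ matrix yields $B^{\odot d/2} D (B^{\odot d/2})^{\mathrm{T}}$ with $D = \diag{\lambda w_i\norm{\mu_i}^d}$, so the non-degeneracy hypothesis $\sigma_m(B^{\odot d/2})>0$ is exactly the full-column-rank condition that makes this symmetric rank-$m$ decomposition, and hence the directions $b_i$ and the diagonal scalars, identifiable.

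I would then hand the resulting Poissonized ICA instance to the underdetermined ICA algorithm of \cite{GVX}, which carries out precisely this order-$d$ symmetric tensor decomposition and returns the columns of $B$ together with the weights $\lambda w_i \norm{\mu_i}^d$. The recovered directions, combined with lower-order cumulants (for instance $\E{Y} = \lambda \sum_i w_i \mu_i$) used to fix the norms $\norm{\mu_i}$ and signs, reconstruct each $\mu_i$ up to the unavoidable permutation. This is what Algorithm~\ref{alg:reduction} carries out.

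Finally, the quantitative claim reduces to an error-propagation analysis. I would bound the finite-sample error of the empirical cumulant tensors of $Y$ in terms of the high moments of the Poissonized vector (which scale with $\lambda$, $u$, and $\sigma$), the $n^{d}$ tensor size, and the number of samples; then push this perturbation through the subtraction step and through the stability bounds of the \cite{GVX} decomposition, whose dependence on $1/\sigma_m(B^{\odot d/2})$ is controlled by the supplied lower bound $b$. Choosing $\lambda$ to balance signal strength against sample cost and collecting the polynomial dependencies on $m$, $d$, $\sigma$, $u$, $w$, $r$, $1/\epsilon$, $1/\delta$, $1/b$ then yields the stated complexity. I expect this last error-propagation step to be the main obstacle: the variance of an order-$d$ empirical cumulant is large and the ICA stability bounds amplify it by inverse powers of the condition number, which is the source of the large $d^2$ exponents, so the delicate part is verifying that all of these blow-ups remain simultaneously polynomial.
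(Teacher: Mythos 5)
Your first step (Poissonization with the thinning property) matches the paper, but the handoff to ICA is where the argument breaks. The Poissonized aggregate is $Y \deq AS + \eta(R)$, where $\eta(R) = \sum_{j=1}^{R}\eta_j$ and $R = \sum_{i=1}^m S_i$: this noise is neither Gaussian nor independent of the signal $S$, so Theorem~\ref{thm:UICA_noisy} of \cite{GVX} simply does not apply to samples of $Y$. The paper's central technical device exists precisely to repair this: fix a threshold $\tau$, condition on $R \leq \tau$, and add compensating noise $\eta(\tau - R)$ so that the total noise becomes exactly $\Normal(0,\tau\Sigma)$, independent of $S$; then a total variation argument (Lemmas~\ref{lem:poisson-threshold} and \ref{lem:total-variation}) shows that running \textbf{UnderdeterminedICA} on the samplable restricted model \eqref{eq:full-approximate-model} is, with high probability, equivalent to running it on the ideal model \eqref{eq:full-ideal-model}, with the rejection event folded into the failure probability. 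Your proposal never confronts this dependence; ``hand the Poissonized instance to \cite{GVX}'' is unjustified at the sample level.

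Your alternative tensor-level route (recursively subtract the known $\Sigma$-corrections from the cumulants of the compound-Poisson $Y$ to obtain $C_d = \sum_i \lambda w_i \mu_i^{\otimes d}$) is algebraically sound, but it cannot lean on \cite{GVX} either: their theorem is an end-to-end guarantee for samples from a genuine noisy ICA model, not a modular stability bound for decomposing a given perturbed tensor, so you would have to construct and analyze that decomposition yourself. Moreover, your identifiability claim is false as stated: full column rank of $B^{\odot d/2}$ does not make the single flattening $B^{\odot d/2} D (B^{\odot d/2})^T$ identifiable --- for $d=2$ this is just a PSD matrix, which admits a continuum of rank-one decompositions, and for even $d$ the sign of $b_i$ is invisible in $b_i^{\otimes d}$ in any case. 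One needs a second matrix forming a pencil (a contraction of an odd-order tensor, or, as \cite{GVX} actually do, derivative tensors of the second characteristic function at several random points) to run a simultaneous diagonalization. Finally, your plan to fix norms and signs from $\E{Y} = \lambda \sum_i w_i \mu_i$ fails exactly in the underdetermined regime $m > n$ that the theorem targets: the directions $b_i$ are then linearly dependent, so the coefficients in that expansion are not determined. The paper resolves sign and scale instead by lifting every sample to $(Z^T,1)^T$ with a noiseless last coordinate and normalizing the columns of the lifted mixing matrix; some substitute for this step is necessary, and your proposal has none that works.
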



Given that the means have been estimated, the weights can be recovered using the tensor structure of higher order cumulants (see Section~\ref{sec:Prelims} for the definition of cumulants). 
This is shown in Appendix \ref{sec:WeightRecovery}.

We show that $\sigma_{\min}(A^{\odot d})$ is large in the smoothed analysis sense, namely, if we 
start with a base matrix $A$ and perturb each entry randomly to get $A'$, then
$\sigma_{\min}(A^{\odot d})$ is likely to be large. More precisely,

\begin{theorem}\label{thm:smoothed-sigma-min-intro}
For $n > 1$, let $M \in \R^{n \times \binom{n}{2}}$ be an arbitrary matrix. Let 
$N \in \R^{n \times \binom{n}{2}}$ 
be a randomly sampled matrix with each entry iid from $\Normal(0, \sigma^2)$, for 
$\sigma > 0$. Then, for some absolute constant $C$,
\begin{align*}
\prob{\sigma_{\min}((M+N)^{\odot 2}) \leq {\sigma^2}/{n^7}} \leq {2C}/{n}.
\end{align*}
\end{theorem}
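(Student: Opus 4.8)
The plan is to control the smallest singular value of the tall matrix $A^{\odot 2}$, where $A = M+N$ has columns $a_1,\dots,a_{\binom{n}{2}} \in \R^n$ and the $i^{th}$ column of $A^{\odot 2}$ is $a_i \otimes a_i \in \R^{n^2}$, by the worst-case distance from a column to the span of the others, and then to show each such distance is, with high probability, not too small via anti-concentration. First I would invoke the deterministic fact that for any matrix with columns $\phi_1,\dots,\phi_k$ one has $\sigma_{\min} \ge \frac{1}{\sqrt{k}}\min_i \dist(\phi_i, \spn{\{\phi_j : j\ne i\}})$: when the minimum distance is positive the columns are independent and this follows from $\norm{A^+}_F^2 = \sum_i \dist_i^{-2}$, and otherwise it is trivial. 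Applied here with $k = \binom{n}{2}\le n^2$, it suffices to prove that with probability at least $1-2C/n$ every distance $\dist_i := \dist(a_i^{\otimes 2}, V_i)$, where $V_i := \spn{\{a_j^{\otimes 2} : j\ne i\}}$, satisfies $\dist_i \ge \sigma^2/n^6$; since $\sqrt{\binom{n}{2}}\le n$, this yields $\sigma_{\min}(A^{\odot 2}) \ge \sigma^2/n^7$.

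For a fixed $i$ I would condition on all columns $a_j$, $j\ne i$, so that $V_i$ becomes a fixed subspace of the symmetric subspace $\mathrm{Sym}^2(\R^n)\subset\R^{n^2}$ of dimension at most $\binom{n}{2}-1$. Because $\dim\mathrm{Sym}^2(\R^n) = \binom{n+1}{2} = \binom{n}{2}+n$, the orthogonal complement of $V_i$ inside $\mathrm{Sym}^2(\R^n)$ has dimension at least $n+1\ge 1$, so I may fix a unit vector $w_0$ in it. Reshaping $w_0$ into a symmetric matrix $W_0$ with $\norm{W_0}_F = \norm{w_0} = 1$ gives $\langle w_0, a_i^{\otimes 2}\rangle = a_i^\mathrm{T} W_0 a_i$, and since $w_0\in V_i^\perp$ is a unit vector, $\dist_i \ge \abs{a_i^\mathrm{T} W_0 a_i}$. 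The problem is thereby reduced to the anti-concentration of a single quadratic form $p(a_i) = a_i^\mathrm{T} W_0 a_i$ in the Gaussian vector $a_i \sim \Normal(m_i,\sigma^2 I_n)$, where $m_i$ is the $i^{th}$ column of $M$.

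The key step is the Carbery--Wright inequality: for a degree-$2$ polynomial $p$ of a standard Gaussian, $\prob{\abs{p}\le \epsilon\,\E{p^2}^{1/2}} \le 2C_{\mathrm{CW}}\,\epsilon^{1/2}$. The crux, and the main obstacle, is a lower bound on $\E{p^2}$ that is \emph{uniform} over the adversarial base matrix $M$ and over the (also $M$-dependent) subspace $V_i$. Writing $a_i = m_i + \sigma g$ with $g\sim\Normal(0,I_n)$ and using that the degree-$1$ and degree-$2$ parts of a Gaussian polynomial are uncorrelated, I would compute $\Var(p) = 2\sigma^4\norm{W_0}_F^2 + 4\sigma^2\norm{W_0 m_i}^2 \ge 2\sigma^4\norm{W_0}_F^2 = 2\sigma^4$, relying on $\Var(g^\mathrm{T} W_0 g) = 2\norm{W_0}_F^2$ for symmetric $W_0$. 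The point is that the pure-Gaussian (degree-$2$) contribution alone already gives a bound independent of the mean $m_i$, which only \emph{increases} the variance; this mean-independence is exactly what makes the perturbation argument work against a worst-case $M$. Hence $\E{p^2}^{1/2}\ge \Var(p)^{1/2}\ge \sqrt{2}\,\sigma^2$, and since the resulting estimate does not depend on the conditioning it holds unconditionally, giving $\prob{\dist_i\le t}\le\prob{\abs{p}\le t}\le \tfrac{2C_{\mathrm{CW}}}{2^{1/4}}\,t^{1/2}/\sigma$.

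Finally I would set $t = \sigma^2/n^6$, so that $t^{1/2}/\sigma = 1/n^3$ and each column fails with probability at most $\tfrac{2C_{\mathrm{CW}}}{2^{1/4}}\,n^{-3}$; a union bound over the $\binom{n}{2}\le n^2/2$ columns bounds the total failure probability by $\tfrac{C_{\mathrm{CW}}}{2^{1/4}}\,n^{-1}$, which is $2C/n$ for $C := C_{\mathrm{CW}}/(2\cdot 2^{1/4})$. On the complementary event every $\dist_i\ge \sigma^2/n^6$, and the column-distance bound then yields $\sigma_{\min}(A^{\odot 2})\ge \sigma^2/n^7$, as claimed. The only genuinely delicate ingredients are the mean-independent variance lower bound and the dimension count on $\mathrm{Sym}^2(\R^n)$ guaranteeing a nonzero direction $w_0\in V_i^\perp$; everything else is routine bookkeeping of constants.
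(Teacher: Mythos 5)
Your proof is correct, and it shares the paper's overall architecture: lower bound $\sigma_{\min}$ by the minimum column-to-span distance, reduce each distance to anti-concentration of a quadratic polynomial in one column's Gaussian perturbation, apply Carbery--Wright with a variance bound that is uniform in the adversarial mean, then union bound. But the execution differs in a genuine way. The paper does not work with $(M+N)^{\odot 2}$ directly: it passes to the multilinear restriction $(M+N)^{\ominus 2}$, whose rows are only the strictly off-diagonal products $i<j$, uses $\sigma_{\min}(A^{\odot 2}) \geq \sigma_{\min}(A^{\ominus 2})$, and takes the normal vector inside $\R^{\binom{n}{2}}$. This makes the resulting polynomial \emph{multilinear}, so the Carbery--Wright variant stated in the paper's appendix (the MOS form, which requires multilinearity) applies verbatim; the price is that the normal vector only exists because the columns of $(M+N)^{\ominus 2}$ are almost surely linearly independent, which the paper must argue separately. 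You instead stay in $\R^{n^2}$, observe that every column $a_i^{\otimes 2}$ lies in $\mathrm{Sym}^2(\R^n)$ of dimension $\binom{n}{2}+n$, and obtain the normal direction $w_0$ unconditionally by dimension counting, with no linear-independence argument at all. The price on your side is that your quadratic $a_i^{\mathrm{T}} W_0 a_i$ contains diagonal terms $(W_0)_{kk} g_k^2$, so it is \emph{not} multilinear and the lemma as quoted in the paper cannot be invoked; you need the general Carbery--Wright inequality for arbitrary degree-$d$ polynomials of Gaussians. That general form does hold (it is the original Carbery--Wright statement), but you should cite it explicitly rather than the multilinear version. Your variance bound $\Var(p) = 2\sigma^4\norm{W_0}_F^2 + 4\sigma^2\norm{W_0 m_i}^2 \geq 2\sigma^4$ plays exactly the role of the paper's $\Var(P)\geq\sigma^4$: in both cases the adversarial mean only adds nonnegative terms, which is the heart of the smoothed-analysis argument. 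Net: you trade the $\ominus$ device and the a.s.-independence lemma for the general Carbery--Wright inequality and a symmetric-tensor dimension count; both routes are sound and give the same quantitative conclusion.
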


We point out the simultaneous and independent work of \cite{DBLP:journals/corr/BhaskaraCMV13}, where the authors prove learnability results related to our Theorems~\ref{thm:correctness} and \ref{thm:smoothed-sigma-min-intro}. We now provide a comparison. The results in 
\cite{DBLP:journals/corr/BhaskaraCMV13}, which are based on tensor decompositions,  are stronger in that they can learn mixtures of axis-aligned Gaussians (with
non-identical covariance matrices) without requiring to know the covariance matrices in advance. Their results hold
under a smoothed analysis setting similar to ours. To learn a mixture of roughly $n^{\ell/2}$ Gaussians up to an 
accuracy of $\epsilon$ their algorithm has running time and sample complexity 
$\mathrm{poly}_\ell(n, 1/\epsilon, 1/\rho)$ and succeeds with probability at least $1- \exp(-Cn^{1/3^\ell})$, 
where the means are perturbed by adding an $n$-dimensional Gaussian from $\Normal(0, I_n \rho^2/n )$. 
On the one hand, the success probability of their algorithm is much better (as a function of $n$, exponentially close to $1$ as opposed to polynomially close to $1$, as in our result). 
On the other hand, this comes at a high price in terms of the running time and sample complexity: The polynomial $\mathrm{poly}_\ell(n, 1/\epsilon, 1/\rho)$ above has degree
exponential in $\ell$, unlike the degree of our bound which is polynomial in $\ell$.
Thus, in this respect, the two results can be regarded as incomparable points on an error vs running time 
(and sample complexity) trade-off curve. 
Our result is based on a reduction from learning GMMs to ICA  which could be of  independent interest given that both 
problems are extensively studied in somewhat disjoint communities.  The technique  of Poissonization is, to the best of our knowledge, new in the GMM setting.
Moreover, our analysis can be used in the reverse
direction to obtain hardness results for ICA.

Finally, in Section~\ref{identifiability_low_dimension} we show that in low dimension the situation is very different from the high-dimensional generic efficiency given by Theorems \ref{thm:correctness} and \ref{thm:smoothed-sigma-min-intro}: The problem is generically hard. More precisely, we show:
\begin{theorem}\label{thm:low-dim-identifiability}
Let $X$ be a set of $k^2$ points uniformly sampled from $[0,1]^n$. Then with high probability there exist two mixtures with equal number of unit Gaussians $p$, $q$ centered on disjoint subsets of $X$, such that, for some $C>0$,
\[
\|p-q\|_{L^1(\R^n)} < e^{-C \left({k}/{\log k}\right)^{{1}/{n}} }.
\]
\end{theorem}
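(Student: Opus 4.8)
The plan is to construct the two mixtures as the (normalized) positive and negative parts of a single tiny signed combination of the given Gaussians. Concretely, I will look for a subset $S$ of the sampled points together with real coefficients $(c_j)_{j\in S}$, not all zero, satisfying $\sum_{j\in S} c_j = 0$ and $\norm{f}_{L^1(\R^n)}$ extremely small, where $f := \sum_{j\in S} c_j \Norm(x_j, I)$. Writing $c = c^+ - c^-$ for the positive and negative parts and setting $\gamma := \norm{c}_1/2 = \sum_{j} c_j^+ = \sum_j c_j^-$ (the two sums being equal precisely because $\sum_j c_j = 0$), the normalized combinations $p := \gamma^{-1}\sum_j c_j^+ \Norm(x_j,I)$ and $q := \gamma^{-1}\sum_j c_j^- \Norm(x_j, I)$ are genuine mixtures of unit Gaussians supported on disjoint subsets of $X$, and $\norm{p-q}_{L^1(\R^n)} = \norm{f}_{L^1(\R^n)}/\gamma$. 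Since $c$ is defined only up to scale I may fix $\norm{c}_1 = 2$, so $\gamma = 1$ and $\norm{p-q}_{L^1(\R^n)} = \norm{f}_{L^1(\R^n)}$; the whole problem thus reduces to exhibiting a small-$L^1$, mean-zero signed combination.

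The source of such a combination is clustering: if many points lie in a tiny ball, the corresponding Gaussians are nearly linearly dependent. First I would partition $[0,1]^n$ into subcubes of side $\rho$; the densest subcube then contains at least $k^2\rho^n$ of the $k^2$ points. (This already holds deterministically, but the uniform sampling lets me, with high probability, pin down a cluster of the desired cardinality inside a cube of the desired side and later split its index set into two nonempty groups of equal size.) Let $S$ be the points in that subcube, $x_0$ its center, and $D$ a degree parameter. Taylor-expanding the mean-map $x \mapsto \Norm(x, I)$ about $x_0$ to order $D$, the degree-$<D$ part of each $\Norm(x_j, I)$ lies in the span of the derivatives $\{\partial^\alpha \Norm(\cdot\,; x_0, I) : \abs{\alpha} < D\}$, a space of dimension $\binom{n+D-1}{n}$. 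As soon as $\abs{S} > \binom{n+D-1}{n}$, the homogeneous system $\sum_{j\in S} c_j (x_j - x_0)^\alpha = 0$ over all $\abs{\alpha} < D$ (the case $\alpha = 0$ being exactly $\sum_j c_j = 0$) has a nonzero solution $c$, and for such $c$ only the order-$D$ Taylor remainders $R_j$ of $\Norm(x_j,I)$ survive, so that $f = \sum_{j\in S} c_j R_j$.

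The technical core is then an $L^1$ bound on the remainder. Using $\partial^\alpha \Norm(x;\mu,I) = \bigl(\prod_i \mathrm{He}_{\alpha_i}(x_i-\mu_i)\bigr)\Norm(x;\mu,I)$ and Cauchy--Schwarz against the Gaussian weight, $\norm{\partial^\alpha \Norm(\cdot\,;\mu,I)}_{L^1(\R^n)} = \prod_i \E{\abs{\mathrm{He}_{\alpha_i}(Z)}} \le \prod_i \sqrt{\alpha_i!} = \sqrt{\alpha!}$, uniformly in $\mu$. Feeding this into the integral form of the order-$D$ remainder over the ball of radius $\rho$ gives $\norm{R_j}_{L^1(\R^n)} \lesssim (\rho\sqrt{n})^{D}/\sqrt{D!}$ up to a $\poly(D)$ factor, whence, with $\norm{c}_1 = 2$, $\norm{p-q}_{L^1(\R^n)} \le \norm{c}_1 \max_{j}\norm{R_j}_{L^1(\R^n)} \lesssim (\rho\sqrt{n})^{D}/\sqrt{D!}$. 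It remains to optimize: I would choose $\rho$ and $D$ subject to the clustering constraint $k^2\rho^n \gtrsim \binom{n+D-1}{n} \approx D^n$, which forces $\rho \sim D/k^{2/n}$ and leaves $(\rho\sqrt n)^D/\sqrt{D!} = \exp\bigl(-\Theta(D\log k) + O(D\log D)\bigr)$; taking $D$ of order $\tfrac{n}{\log k}(k/\log k)^{1/n}$ makes the first term dominate and yields the stated $e^{-C(k/\log k)^{1/n}}$, the $\log k$ inside the exponent being precisely the price of this balance.

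The main obstacle I anticipate is exactly this last optimization and the remainder estimate behind it: one must verify that the super-exponential decay $1/\sqrt{D!}$ coming from the high-order Gaussian (Hermite) derivatives genuinely overcomes the factor $\rho^{-D}$ and the growth in the number $\binom{n+D-1}{n}$ of monomials, all while keeping $\abs{S} > \binom{n+D-1}{n}$ feasible for points confined to a cube of side $\rho$. The remaining bookkeeping --- that $p$ and $q$ can be taken with equal numbers of components and disjoint supports --- is minor: disjointness and nonemptiness are automatic from $c\neq 0$ together with $\sum_j c_j = 0$, and equal cardinality can be arranged using the (at least one-dimensional, and with a few extra clustered points higher-dimensional) space of admissible $c$. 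Finally, I would observe that this construction is the $L^1$/RKHS dual of the statement that the Gaussian kernel Gram matrix of clustered points has an exponentially small least eigenvalue, which is the form in which the approximation-theory literature records the same phenomenon.
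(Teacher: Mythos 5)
Your construction is correct in its core and takes a genuinely different route from the paper. The paper argues globally: with high probability the sample has fill $h = O((\log k/k)^{1/n})$, and two kernel interpolants $f_{X,k}, f_{Y,k}$ of a fixed smooth $f = \mathcal{K}g$ built on disjoint point sets are each $\exp(A \frac{\log h}{h})$-close to $f$ in $L^\infty$ (Lemma~\ref{lem:linfinity}, imported from the RKHS/sampling-inequality literature); their difference is a signed combination of Gaussians which is then split by sign and renormalized (Theorem~\ref{thm:differentmixtures}). You argue locally: pigeonhole gives a cube of side $\rho$ holding $\gtrsim k^2\rho^n$ points, annihilating all moments of order $<D$ kills the degree-$<D$ Taylor part of the clustered Gaussians, and the surviving remainders are controlled by the Hermite bound $\|\partial^\alpha \Norm(\cdot\,;\mu,I)\|_{L^1} \le \sqrt{\alpha!}$. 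Your estimates check out: Cauchy--Schwarz over $\{|\alpha|=D\}$ gives $\|R_j\|_{L^1} \le \poly(D)\,(\rho\sqrt n)^D/\sqrt{D!}$, and with $\rho \sim D/k^{2/n}$ and $D \sim (k/\log k)^{1/n}$ the exponent is $-\Theta(D\log k)$, which beats the stated bound. Your approach is more elementary and self-contained, needs no probabilistic fill estimate (only distinctness of the points), avoids the paper's renormalization step (your two parts sum to exactly $1$ by construction), and in fact gives a stronger bound: pushing $D$ toward its ceiling $k^{2/n}$ yields roughly $\exp(-c\,k^{2/n}\log k)$. What the paper's route buys is a statement about \emph{every} point configuration with small fill, i.e., ubiquity, rather than about a single dense cluster.

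The one genuine gap is the theorem's explicit requirement that $p$ and $q$ have an \emph{equal number} of components; this is not the minor bookkeeping you claim. Nothing forces the number of strictly positive entries of $c$ to equal the number of strictly negative ones, and your remedy (``use a kernel of dimension $\ge 2$'') is an unproved and delicate claim: along a path from $c$ to $-c$ inside the kernel, the signed count (number of positive minus number of negative entries) jumps by $\pm 2$ at each simple sign crossing and by other even amounts at simultaneous crossings, so a discrete intermediate-value argument can skip over $0$, and the parity of the count is tied to the number of nonvanishing coordinates, which you do not control. Two rigorous repairs exist. One is the paper's: build many pairs $(p_i,q_i)$ on disjoint clusters, pigeonhole on the integer difference of component counts, and recombine two pairs with equal difference as $p = \frac12(p_1+q_2)$, $q = \frac12(p_2+q_1)$. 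The other stays entirely inside your framework and is cleaner: pair the cluster points as $(y_i,z_i)$ and seek an antisymmetric combination $\sum_i t_i\bigl(\Norm(y_i,I)-\Norm(z_i,I)\bigr)$ annihilating all moments of orders $1,\dotsc,D-1$ (the order-$0$ moment vanishes automatically); a nonzero $t$ exists once the cluster has $\ge 2\binom{n+D-1}{n}$ points, which only changes $\rho$ by a constant factor, and then both the positive and the negative part consist of exactly $\#\{i : t_i \ne 0\}$ Gaussians on disjoint centers. With either patch, your argument is a complete and valid proof.
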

Combining the above lower bound with our reduction provides a similar lower bound for ICA; 
see a discussion on the connection with ICA below.
Our lower bound gives an information-theoretic barrier. This is in contrast to conjectured computational 
barriers that arise in related settings based on the noisy parity problem (see \cite{HsuK13} for pointers).
The only previous information-theoretic lower bound for learning GMMs we are aware of is due to \cite{moitra10}
and holds for two specially designed one-dimensional mixtures.

\paragraph{Connection with ICA.}
A key observation of \cite{HsuK13} is that methods based on the higher order statistics used in Independent Component Analysis (ICA) can be adapted to the setting of learning a Gaussian Mixture Model.  In ICA, samples are of the form $X = \sum_{i=1}^m A_i S_i$ where the latent random variables $S_i$ are independent, and the column vectors $A_i$ give the directions in which each signal $S_i$ acts.  The goal is to recover the vectors $A_i$ up to inherent ambiguities. The ICA problem is typically posed when $m$ is at most the dimensionality of the observed space (the ``fully determined" setting), as recovery of the directions $A_i$ then allows one to demix the latent signals.  The case where the number of latent source signals exceeds the dimensionality of the observed signal $X$ is the \emph{underdetermined ICA} setting.%
\footnote{See \cite[Chapter 9]{ComonJutten} for a recent account of algorithms for underdetermined ICA.}
Two well-known algorithms for underdetermined ICA are given in \cite{FOOBI} and \cite{BIOME}. 
Finally, \cite{GVX} provides an algorithm with rigorous polynomial time and sampling bounds \vnote{same exponential caveats as our proposed technique in this paper} for underdetermined ICA in high dimension in the presence of Gaussian noise.

Nevertheless, our analysis of the mixture models can be embedded in ICA  to show exponential information-theoretic hardness of performing ICA in low-dimension, and thus establishing the blessing of dimensionality for ICA as well. 
\begin{theorem}\label{thm:low-dim-identifiability-ica}
Let $X$ be a set of $k^2$ random $n$-dimensional unit vectors.
Then with high probability, there exist two disjoint subsets of $X$, such that when these two sets form the columns of matrices $A$ and $B$ respectively, there exist noisy ICA models $AS+\eta$ and $BS' + \eta'$ which are exponentially close as distributions in $L^1$ distance and satisfying:  (1) The coordinate random variables of $S$ and $S'$ are scaled Poisson random variables.  For at least one coordinate random variable, $S_i = \alpha X$, where $X \sim \Poisson(\lambda)$ is such that $\alpha$ and $\lambda$ are polynomially bounded away from 0.  (2) The Gaussian noises $\eta$ and $\eta'$ have polynomially bounded directional covariances.
\end{theorem}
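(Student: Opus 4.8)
The plan is to push the two exponentially close Gaussian mixtures produced by Theorem~\ref{thm:low-dim-identifiability} through the same Poissonization reduction that underlies Theorem~\ref{thm:correctness}, and to argue that this reduction both (i) outputs genuine noisy ICA models of the advertised shape and (ii) distorts $L^1$ distance by at most a polynomial factor. Concretely, I would instantiate Theorem~\ref{thm:low-dim-identifiability} so that the $k^2$ sample points play the role of the means; with high probability this yields two unit-covariance mixtures $p=\sum_{i\in I}w_i\Normal(\mu_i,I)$ and $q=\sum_{j\in J}w_j\Normal(\mu_j,I)$ supported on disjoint subsets $I,J$ of the points, with $\|p-q\|_{L^1(\R^n)}<e^{-C(k/\log k)^{1/n}}$. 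After normalizing $\mu_i\mapsto\mu_i/\norm{\mu_i}$ (the random unit vectors forming the columns of $A$ and $B$), the residual scalars $\norm{\mu_i}$ are, whp, bounded away from $0$, since a uniform point in $[0,1]^n$ avoids a neighborhood of the origin.

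Next I would make the ICA structure explicit. The Poissonization map $\Phi$ sends a density $f$ to the law of $\sum_{t=1}^{T}x_t$, where $T\sim\Poisson(\lambda)$ and the $x_t$ are i.i.d.\ from $f$. Applying Poisson thinning to $\Phi(p)$, the counts $R_i\sim\Poisson(\lambda w_i)$ of draws landing in component $i$ are independent, so $\Phi(p)$ is distributed as $\sum_i R_i\mu_i+\eta=AS+\eta$, where $S_i=\norm{\mu_i}R_i$ is a scaled Poisson (giving $\alpha=\norm{\mu_i}$ and per-coordinate rate $\lambda w_i$, both polynomially bounded away from $0$ once $\lambda$ is chosen polynomial and the $w_i$ are balanced) and $\eta=\sum_{t\le T}g_t$ is the aggregated Gaussian noise, whose directional variance is $\E{T}=\lambda$ and hence polynomially bounded. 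Carrying out the identical construction on $q$ produces $BS'+\eta'$. Since this is precisely the noisy ICA instance assembled in the forward Poissonization reduction, the structural claims (1)--(2) should follow from the same accounting used there.

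The heart of the argument is a coupling bound showing that $\Phi$ is Lipschitz in total variation. Write $\rho=\tfrac12\|p-q\|_{L^1(\R^n)}=\|p-q\|_{\mathrm{TV}}$. I would couple $\Phi(p)$ and $\Phi(q)$ by sharing the count $T$ and drawing each of the $T$ paired samples from the optimal coupling of $(p,q)$, so that a given pair disagrees with probability $\rho$. The two sums agree whenever all $T$ pairs agree, whence
\[
\|\Phi(p)-\Phi(q)\|_{\mathrm{TV}}\le 1-\E{(1-\rho)^{T}}=1-e^{-\lambda\rho}\le \lambda\rho .
\]
Thus the laws of $AS+\eta$ and $BS'+\eta'$ are within $\lambda\,\|p-q\|_{L^1(\R^n)}$ in $L^1$ distance, and since $\lambda$ is polynomial this remains $e^{-C'(k/\log k)^{1/n}}$ for a slightly smaller constant $C'$. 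Together with the parameter bounds above, this gives two exponentially close ICA models of the required form.

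The step I expect to be the main obstacle is reconciling the noise. The aggregated $\eta=\sum_{t\le T}g_t$ has covariance $T\cdot I$ with $T$ Poisson, i.e.\ it is a compound-Poisson--Gaussian rather than a fixed-covariance Gaussian independent of the sources; the real work is to argue (as in the forward reduction) that this is admissible as the ``Gaussian noise'' of the ICA model and that its directional covariance stays polynomially bounded, all while keeping $\lambda$ large enough that the Poisson sources are non-degenerate yet small enough that the Lipschitz factor $\lambda$ above does not erode the exponential gap. Balancing these competing demands on $\lambda$, and transferring the genericity of Theorem~\ref{thm:low-dim-identifiability} from cube-uniform means to random unit-vector directions, are the only nonroutine points.
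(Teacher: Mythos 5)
Your overall route is the same as the paper's: take the two exponentially close mixtures $p,q$ from Theorem~\ref{thm:low-dim-identifiability}, push them through the Poissonization reduction of Section~\ref{sec:reduction}, and argue that Poissonization is TV-Lipschitz with a polynomial factor. Your coupling lemma (share $T$, couple each of the $T$ draws optimally, bound $1-\E{(1-\rho)^T}=1-e^{-\lambda\rho}\leq\lambda\rho$) is a clean packaging of what the paper does by conditioning on $R=z$ and applying the triangle inequality $d_{TV}(C+D,E+F)\leq d_{TV}(C,E)+d_{TV}(D,F)$ to $z$ mixture draws plus leftover Gaussians; these are essentially interchangeable.

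However, there is a genuine gap, and it sits exactly where you deferred it. The models you exhibit, $\Phi(p)=AS+\sum_{t\leq T}g_t$, are \emph{not} noisy ICA models in the sense required by the theorem: the aggregated noise $\sum_{t\leq T}g_t$ is compound-Poisson--Gaussian rather than Gaussian, and it is not independent of $S$, since $T=\sum_i R_i$ is determined by the sources. Condition (2) of the statement requires genuine Gaussian noises $\eta,\eta'$, and the paper itself emphasizes (in Section~\ref{sec:reduction}) that $AS+\eta(R)$ ``fails to satisfy even the assumptions of the noisy ICA model'' for precisely these two reasons. The resolution is not a routine admissibility argument but the central structural device of the reduction: one passes to the \emph{ideal} model \eqref{eq:full-ideal-model}, $X=AS+\eta(\tau)$, where $\tau$ is a fixed, nonrandom threshold chosen polynomially large (linear in $m$, via Lemma~\ref{lem:poisson-threshold}) so that $\prob{R>\tau}$ is exponentially small. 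This model has genuine $\Normal(0,\tau I)$ noise independent of $S$, so conditions (1)--(2) hold with polynomially bounded parameters; the lower bound is then proved for the pair of ideal models $X_p=A_pS_p+\eta(\tau)$, $X_q=A_qS_q+\eta(\tau)$. Your coupling argument must be adapted accordingly: condition on $R\leq\tau$ (the event $R>\tau$ contributes an exponentially small amount to the TV distance), and on $R=z\leq\tau$ decompose $X_p$ as $z$ draws from the mixture $p$ plus $\tau-z$ pure Gaussian draws, coupling the mixture draws optimally and sharing the leftover Gaussians; this gives a bound of order $\tau\rho+\prob{R>\tau}$, which is still exponentially small. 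Note that coupling must happen at the level of the smoothed (mixture) draws, since the discrete measures on the two disjoint mean sets are at TV distance $1$. A secondary unresolved point is the matching to the theorem's hypothesis that the columns of $A$ and $B$ are the unit vectors of $X$ themselves: the paper handles this by reformulating Theorem~\ref{thm:low-dim-identifiability} with means drawn from the unit ball/sphere, whereas normalizing cube-sampled means changes the distribution of the columns; you flag this too but do not carry it out.
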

We sketch the proof of Theorem~\ref{thm:low-dim-identifiability-ica} in Appendix~\ref{sec:sketch-pf-thm-ICA-bound}.

\paragraph{Discussion.} Most problems become harder in high dimension, often exponentially harder, a behavior known as ``the curse of dimensionality."  Showing that a complex problem does not become exponentially harder often constitutes major progress in its understanding. In this work we demonstrate a reversal of this curse, showing that  the lower dimensional instances are exponentially harder than those in high dimension. This seems to be a rare situation in statistical inference and computation. In particular, while high-dimensional concentration of mass can sometimes be a blessing of dimensionality, in our case the generic computational efficiency of our problem comes from anti-concentration.

We hope that this work will enable better understanding of this unusual phenomenon and its applicability to a wider class of computational and statistical problems.

%


\section{Preliminaries}
\label{sec:Prelims}
The singular values of a 
matrix $A \in \R^{m \times n}$ will be ordered in the decreasing order: $\sigma_1 \geq
\sigma_2 \geq \dotsb \geq \sigma_{\min(m, n)}$. By $\sigma_{\min}(A)$ we mean $\sigma_{\min(m,n)}$. 

For a real-valued random variable $X$, the \emph{cumulants} of
$X$ are polynomials in the moments of $X$. For $j \geq 1$, the $j$th
cumulant is denoted $\cum_j(X)$. Denoting $\m_j := \E{X^j}$, we have, for
example: $\cum_1(X) = \m_1, \cum_2(X) = \m_2 - \m_1^2, \cum_3(X) =
\m_3 - 3 \m_2\m_1 + 2\m_1^3$. In general, cumulants can be defined 
as certain coefficients of a power series expansion of the logarithm of the characteristic function of $X$:
$
\log(\mathbb{E}_X(e^{itX})) = \sum_{j=1}^{\infty}\cum_j(X) \frac{(it)^j}{j!} 
$. 
The first two cumulants are the same as the expectation and the
variance, resp. Cumulants have the property that for two independent
random variables $X, Y$ we have $\cum_j(X+Y) = \cum_j(X) + \cum_j(Y)$ (assuming
that the first $j$ moments exist for both $X$ and $Y$).  Cumulants are degree-$j$ homogeneous, i.e.~if $\alpha \in \R$ and $X$ is a random variable, then
$\cum_j(\alpha X) = \alpha^j\cum_j(X)$.  The first two
cumulants of the standard Gaussian distribution are the mean, $0$, and the 
variance, $1$, and all subsequent Gaussian cumulants have value $0$.

\paragraph{Gaussian Mixture Model.}


For $i = 1, 2, \dots, m$, define Gaussian random vectors
$\eta_i \in \R^n$ with distribution $\eta_i \sim \Normal(\mu_i, \Sigma_i)$ where
$\mu_i \in \R^{n}$ and $\Sigma_i\in \R^{n\times n}$.  Let
$h$ be an integer-valued random variable which takes on value $i \in [m]$ with probability 
$w_i > 0$, henceforth called weights. (Hence $\sum_{i=1}^m w_i = 1$.)  Then, the random vector drawn as $Z = \eta_h$ is said
to be a Gaussian Mixture Model (GMM) 
$w_1 \Normal(\mu_1, \Sigma_1) + \ldots + w_m \Normal(\mu_m, \Sigma_m)$. 
The sampling of $Z$ can be interpreted as first picking
one of the components $i \in [m]$ according to the weights, and then sampling a Gaussian vector from 
component $i$.  We will be primarily interested in the mixture of
identical Gaussians of known covariance.  In particular, there exists known $\Sigma \in \R^{n\times n}$ such that $\Sigma_i = \Sigma$ for each $i$.
Letting $\eta \sim \Normal(0, \Sigma)$, and denoting by $\mathbf{e}_h$ the 
random variable which takes on the $i$\textsuperscript{th} canonical vector
$\mathbf{e}_i$ with probability $w_i$, we can write the GMM model as follows:
\begin{equation} \label{eq:GMM_Model}
  Z = [\mu_1 | \mu_2 | \cdots | \mu_m] \mathbf{e}_h + \eta \ .
\end{equation}
In this formulation, $\mathbf{e}_h$ acts as a selector of a Gaussian mean.  Conditioning on $h=i$, we have $Z \sim \Normal(\mu_i, \Sigma)$, which is consistent with the GMM model.

Given samples from the GMM, the goal is to recover the unknown parameters of the GMM, namely the means $\mu_1, \dots, \mu_m$ and the weights $w_1, \dots, w_m$.

\paragraph{Underdetermined ICA.}
In the basic formulation of ICA, the observed random variable $X \in \R^n$ is drawn according to the model $X = AS$, where $S \in \R^m$ is a latent random vector whose components $S_i$ are independent random variables, and $A \in \R^{n \times m}$ is an unknown \textit{mixing matrix}.
The probability distributions of the $S_i$ are unknown except that they are not Gaussian. 
The ICA problem is to
recover $A$ to the extent possible. The underdetermined ICA problem corresponds the case $m \geq n$.
We cannot hope to recover $A$ fully because if we flip the sign of the $i$\textsuperscript{th} column of $A$, or scale this column by some
nonzero factor, then the resulting mixing matrix with an appropriately scaled $S_i$ will again generate the same distribution on $X$ as before.  
There is an additional ambiguity that arises from not having an ordering on the coordinates $S_i$:
If $P$ is a permutation matrix, then $PS$ gives a new random vector with independent reordered coordinates, $AP^T$ gives a new mixing matrix with reordered columns, and $X = AP^TPS$ provides the same samples as $X = AS$ since $P^T$ is the inverse of $P$.
As $AP^T$ is a permutation of the columns of $A$, this ambiguity implies that we cannot recover the order of the columns of $A$.
However, it turns out that under certain genericity requirements, we can recover $A$ up to these necessary ambiguities, that is to say we can recover the directions (up to sign) of the columns of $A$, even in the underdetermined setting. 

In this paper, it will be important for us to work with an ICA model where there is Gaussian noise
in the data: 
$X = AS + \eta$,
where $\eta \sim \Normal(0, \Sigma)$ is an additive Gaussian noise independent of $S$, and the covariance of $\eta$ given by $\Sigma \in \R^{n \times n}$ is in general unknown and not necessarily spherical. 
We will refer to this model as the noisy ICA model.%

We define the flattening operation $\vec{\cdot}$ from a tensor to a vector in the natural way.  Namely, when and $T \in \R^{n^\ell}$ is a tensor, then $\vec{T}_{\delta(i_1, \dotsc, i_\ell)} = T_{i_1, \dotsc, i_\ell}$ where $\delta(i_1, \dotsc, i_\ell) = 1 + \sum_{j=1}^\ell n^{\ell-j} (i_j - 1)$ is a bijection with indices $i_j$ running from $1$ to $n$.
Roughly speaking, each index is being converted into a digit in a base $n$ number up to the final offset by 1.
This is the same flattening that occurs to go from a tensor outer product of vectors to the Kronecker product of vectors.

The ICA algorithm from \cite{GVX} to which we will be reducing learning a GMM relies on the shared tensor structure of the derivatives of the second characteristic function and the higher order multi-variate cumulants. This tensor structure motivates the following form of the Khatri-Rao product:
\begin{definition}
  Given matrices $A\in \R^{n_1 \times m}, B \in \R^{n_2\times m}$, a column-wise Khatri-Rao product is defined by $A \odot B := [\vec{A_1 \otimes B_1} | \cdots | \vec{A_m \otimes B_m}]$, where $A_i$ is the $i$\textsuperscript{th}
column of $A$, $B_i$ is the $i$\textsuperscript{th} column of $B$, $\otimes$ denotes the Kronecker product and 
$\vec{A_1 \otimes B_1}$ is flattening of the tensor $A_1 \otimes B_1$ into a vector.
  The related Khatri-Rao power is defined by $A^{\odot \ell} = A \odot \cdots \odot A$ ($\ell$ times).
\end{definition}
This form of the Khatri-Rao product arises when performing a change of coordinates under the ICA model using either higher order cumulants or higher order derivative tensors of the second characteristic function.

\paragraph{ICA Results.}
\vnote{We should have a brief discussion here on the typical sizes of $d$ and $k$.}
Theorem \ref{thm:UICA_noisy} (Appendix \ref{subsec:UICA_noisy}, from \cite{GVX}) allows us to recover $A$ up to the necessary ambiguities in the noisy ICA setting. 
The theorem establishes guarantees for an algorithm from \cite{GVX} for noisy underdetermined ICA, \textbf{UnderdeterminedICA}. This algorithm takes as input a tensor order parameter $d$, number of signals $m$, access to samples according to the noisy underdetermined ICA model with unknown noise, accuracy parameter $\epsilon$, confidence parameter $\delta$, bounds on moments and cumulants $M$ and $\Delta$, a bound on the conditioning parameter $\sigma_m$, and a bound on the cumulant order $k$. It returns approximations to the columns of $A$ up to sign and permutation. 

\section{Learning GMM means using underdetermined ICA: The basic idea}
\label{sec:reduction}
In this section we give an informal outline of the proof of our main result, namely learning the means of 
the components in GMMs via reduction to the underdetermined ICA problem.
Our reduction will be discussed in two parts. The first part gives the main idea of the reduction and will demonstrate how to recover the means $\mu_i$ up to their norms and signs, i.e. we will get $\pm \mu_i/\norm{\mu_i}$. We will then present the reduction in full. It combines the basic reduction with some preprocessing
of the data to recover the $\mu_i$'s themselves.
The reduction relies on some well-known properties of the Poisson distribution stated in the 
lemma below; its proof can be found in Appendix~\ref{sec:Poisson-lemmas}. 


\begin{lemma} \label{lem:Poisson-independence}
Fix a positive integer $k$, and let $p_i \geq 0$ be such that $p_1 + \dotsb + p_k =1$. If $X \sim \Pois(\lambda)$ and
$(Y_1, \ldots, Y_k)|_{X = x} \sim \Multinom(x; p_1, \ldots, p_k)$ then $Y_i \sim \Pois(p_i \lambda)$ for all $i$ and
$Y_1, \ldots, Y_k$ are mutually independent.
\end{lemma}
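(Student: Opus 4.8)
The plan is to identify the joint law of $(Y_1,\dots,Y_k)$ through its probability generating function, obtained by conditioning on $X$ via the tower property, and then to read off both the marginals and the independence from the resulting factorization. This is the cleanest route because a joint generating function that factors into a product of univariate functions of the individual arguments simultaneously certifies independence and pins down each marginal.

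First I would recall the two elementary ingredients. The generating function of $X \sim \Pois(\lambda)$ is $\E{s^X} = e^{\lambda(s-1)}$, and, conditioned on $X = x$, the multinomial theorem gives $\E{\prod_{i=1}^{k} t_i^{Y_i} \mid X = x} = \bigl(\sum_{i=1}^{k} p_i t_i\bigr)^{x}$. Combining these through the tower property yields
\[
\E{\prod_{i=1}^{k} t_i^{Y_i}}
= \E{\E{\prod_{i=1}^{k} t_i^{Y_i} \mid X}}
= \E{\Bigl(\sum_{i=1}^{k} p_i t_i\Bigr)^{X}}
= \exp\!\Bigl(\lambda\Bigl(\sum_{i=1}^{k} p_i t_i - 1\Bigr)\Bigr).
\]
Since $p_1 + \dotsb + p_k = 1$, the exponent rewrites as $\sum_{i} \lambda p_i (t_i - 1)$, so that
\[
\E{\prod_{i=1}^{k} t_i^{Y_i}} = \prod_{i=1}^{k} e^{\lambda p_i (t_i - 1)} ,
\]
which is precisely a product of the generating functions of $\Pois(p_i\lambda)$ random variables.

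From this factorization I would conclude both claims at once: setting all $t_j = 1$ except one shows $Y_i \sim \Pois(p_i\lambda)$, and the fact that the joint generating function equals the product of the marginal generating functions forces mutual independence of $Y_1,\dots,Y_k$. As an alternative I would note the direct route: the conditional mass function is supported only on tuples with $\sum_i y_i = x$, so the sum over $x$ in $\prob{Y_1=y_1,\dots,Y_k=y_k}$ collapses to the single term $x = \sum_i y_i$; cancelling the factor $(\sum_i y_i)!$, and using $e^{-\lambda} = \prod_i e^{-p_i\lambda}$ together with $\lambda^{\sum_i y_i} = \prod_i \lambda^{y_i}$, the joint mass factors into $\prod_i e^{-p_i\lambda}(p_i\lambda)^{y_i}/y_i!$.

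There is no genuine obstacle here, as this is the classical Poisson splitting identity; the only point meriting a word of care is the degenerate case $p_i = 0$, where $\Pois(0)$ is interpreted as the point mass at $0$. The generating function argument accommodates this automatically, since the corresponding factor $e^{\lambda p_i(t_i-1)}$ reduces to $1$, whereas the direct mass-function computation would require the convention $0^0 = 1$ to handle $p_i^{y_i}$ when $y_i = 0$. For this reason I would present the generating-function proof as the primary argument.
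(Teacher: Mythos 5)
Your proof is correct, but it takes a genuinely different route from the paper's. The paper argues entirely at the level of probability mass functions: it first proves the binomial-thinning fact (if $Y|_{X=x} \sim \Bin(x,p)$ then $Y \sim \Pois(p\lambda)$) by summing the conditional mass against the Poisson mass, which gives each marginal, and then establishes independence by computing the joint mass function directly --- exploiting, as in your ``alternative'' remark, that the conditional law forces $x = \sum_i y_i$ so the sum collapses to one term --- carrying this out explicitly for $k=2$ and asserting the general case is similar. Your primary argument instead computes the joint probability generating function via the tower property and the multinomial theorem, obtaining $\exp\bigl(\lambda(\sum_i p_i t_i - 1)\bigr) = \prod_i e^{\lambda p_i (t_i-1)}$, and reads off both the marginals and mutual independence from this factorization. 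What your approach buys: a single computation that handles all $k$ uniformly (no ``general case is similar''), delivers marginals and independence simultaneously, and absorbs the degenerate case $p_i=0$ without any convention about $0^0$. What it costs: you implicitly invoke the fact that a multivariate PGF determines the joint law of nonnegative integer-valued random variables (so that a factored PGF certifies independence); this is standard but is an extra ingredient the paper's bare-hands computation does not need. Both proofs are complete and correct; the paper's is the more elementary, yours the more economical.
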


\paragraph{Basic Reduction: The main idea.}
Recall the GMM from equation \eqref{eq:GMM_Model} is given by 
$Z = [\mu_1 | \cdots | \mu_m] \mathbf{e}_h + \eta$. 
Henceforth, we will set $A = [\mu_1 | \cdots | \mu_m]$.  
We can write the GMM in the form $Z = A\mathbf{e}_h + \eta$, which is similar in form to the noisy ICA model, except that $\mathbf{e}_h$ does not have independent coordinates.
We now describe how a single sample of an approximate noisy ICA problem is generated.


The reduction involves two internal parameters $\lambda$ and $\tau$ that we will set later.
We generate a Poisson random variable $R \sim \Poisson(\lambda)$, 
and we run the following experiment $R$ times: At the $i$\textsuperscript{th} step, generate
sample $Z_i$ from the GMM.
Output the sum of the outcomes of these experiments: $Y = Z_1 + \cdots + Z_R$. 

Let $S_i$ be the random variable denoting the number of times samples were taken from the $i$\textsuperscript{th} Gaussian component in the above experiment.  Thus, $S_1 + \cdots + S_m = R$.
Note that $S_1, \dots, S_m$ are not observable although we know their sum. By 
Lemma~\ref{lem:Poisson-independence}, each $S_i$ has distribution $\Poisson(w_i \lambda)$, 
and the random variables $S_i$ are mutually independent.
Let $S := (S_1, \dots, S_m)^T$.

For a non-negative integer $t$, we define $\eta(t) := \sum_{i=1}^t \eta_i$ where the $\eta_i$ 
are iid according to $\eta_i \sim \Normal(0, \Sigma)$. In this definition, $t$ can be a random variable, in 
which case the $\eta_i$ are sampled independent of $t$. 
Using $\deq$ to indicate that two random variables have the same distribution, then 
$
  Y \deq AS + \eta(R)
$. 
If there were no Gaussian noise in the GMM (i.e.{}~if we were sampling from a discrete set of points) then the model becomes simply $Y = AS$, which is the ICA model without noise,
and so we could recover $A$ up to necessary ambiguities.
However, the model $Y \deq AS + \eta(R)$ fails to satisfy even the assumptions of the noisy ICA model, both because $\eta(R)$ is not independent of $S$ and because $\eta(R)$ is not distributed as a Gaussian random vector.

%

As the covariance of the additive Gaussian noise is known, we may add additional noise to the samples of $Y$ to obtain a good
approximation of the noisy ICA model.
Parameter $\tau$, the second parameter of the reduction, is chosen so that with high probability we have $R \leq \tau$.
Conditioning on the event $R \leq \tau$ we draw $X$ according to the rule $X = Y + \eta(\tau-R) \deq AS + \eta(R) + \eta(\tau-R)$,
where $\eta(R)$, $\eta(\tau - R)$, and $S$ are drawn independently conditioned on $R$.
Then, conditioned on $R \leq \tau$, we have $X \deq AS + \eta(\tau)$.

Note that we have only created an approximation to the ICA model.
In particular, restricting $\sum_{i=1}^m S_i = R \leq \tau$ can be accomplished using rejection sampling, but the coordinate random variables $S_1, \dots, S_m$ would no longer be independent.
We have two models of interest: (1) $X \deq AS + \eta(\tau)$, a noisy ICA model with no restriction on $R = \sum_{i=1}^m S_i$, and (2) $X \deq (AS + \eta(\tau))|_{R \leq \tau}$ the restricted model.

We are unable to produce samples from the first model, but it meets the assumptions of the noisy ICA problem.
Pretending we have samples from model (1), we can apply Theorem \ref{thm:UICA_noisy} (Appendix \ref{subsec:UICA_noisy}) to recover the Gaussian means up to sign and scaling.
On the other hand, we can produce samples from model (2), and depending on the choice of $\tau$, the statistical distance between models (1) and (2) can be made arbitrarily close to zero.
It will be demonstrated that given an appropriate choice of $\tau$, running \textbf{UnderdeterminedICA} on samples from model (2) is equivalent to running \textbf{UnderdeterminedICA} on samples from model (1) with high probability, allowing for recovery of the Gaussian mean directions $\pm \mu_i/\norm{\mu_i}$ up to some error.


\paragraph{Full reduction.} To be able to recover the $\mu_i$ without sign or scaling ambiguities, we add an extra coordinate to the GMM as follows. The new
means $\mu_i'$ are $\mu_i$ with an additional coordinate whose value is $1$ for all $i$, i.e.~%
$\mu_i' := \left( \mu_i^T, 1 \right)^T$.
Moreover, this coordinate has no noise. In
other words, each Gaussian component now has an $(n+1)\times (n+1)$ covariance matrix  
$\Sigma' := \left( \begin{smallmatrix} \Sigma & 0 \\ 0 & 0 \end{smallmatrix} \right)$.
It is easy to construct samples from this new GMM given samples from the original: If the original samples were
$u_1, u_2 \ldots$, then the new samples are $u'_1, u'_2 \ldots$ where
$u'_i  := \left( u_i^T, 1 \right)^T$.
The reduction proceeds similarly to the above on the new inputs.

Unlike before, we will define the ICA mixing matrix to be 
$A' := \bigl[ {\mu_1'}/{\norm{\mu_1'}} \bigl\lvert \cdots \bigr\rvert {\mu_m'}/{\norm{\mu_m'}}\bigr]$ such that it has unit norm columns.
The role of matrix $A$ in the basic reduction will now be played by $A'$.
Since we are normalizing the columns of $A'$, we have to
scale the ICA signal $S$ obtained in the basic reduction to compensate for this: Define $S'_i := \norm{\mu'_i} S_i$. Thus, the ICA models obtained in the full reduction are:
\begin{align}
X' &= A' S' + \eta'(\tau) \ ,  \label{eq:full-ideal-model} \\
X' &= (A' S' + \eta'(\tau))|_{R \leq \tau} \ , \label{eq:full-approximate-model}
\end{align} 
where we define $\eta'(\tau) = \left( \eta(\tau)^T, 0 \right)^T$.
As before, we have an ideal noisy ICA model \eqref{eq:full-ideal-model} from which we cannot sample, and an approximate noisy ICA model \eqref{eq:full-approximate-model} which can be made arbitrarily close to 
\eqref{eq:full-ideal-model} in statistical distance by choosing $\tau$ appropriately.  
With appropriate application of Theorem~\ref{thm:UICA_noisy} to these models, we can recover estimates (up to sign) $\{\tilde A_1', \dotsc, \tilde A_m'\}$ of the columns of $A'$.
%
%
%
%

By construction, the last coordinate of each $\tilde A_i'$ now
tells us both the sign and magnitude of each $\mu_i$: Let $\tilde{A}'_i(1:n) \in \R^n$ be the vector consisting of the
first $n$ coordinates of $\tilde{A}'_i$, and let $\tilde{A}'_i(n+1)$ be the last coordinate of $\tilde{A}'_i$. Then
$\mu_i = \frac{A_i'(1:n)}{A_i'(n+1)} \approx \frac{\tilde{A}'_i(1:n)}{\tilde{A}'_i(n+1)}, $
with the sign indeterminacy canceling in the division.

\floatname{algorithm}{Subroutine}
\begin{algorithm}[tb] 
\caption{Single sample reduction from GMM to approximate ICA} \label{sub:independentsamples}
\begin{algorithmic}[1]
\REQUIRE Covariance parameter $\Sigma$,
access to samples from a mixture of $\means$ identical Gaussians in $\R^{\dim}$ with variance $\Sigma$,
Poisson threshold $\tau$,
Poisson parameter $\lambda$,
\ENSURE $Y$ (a sample from model (\ref{eq:full-approximate-model})).
\vspace{.2em}
\hrule
\vspace{.2em}
  \STATE Generate $R$ according to $\Poisson(\lambda)$.
  \STATE If {$R > \tau$} return failure.
  \STATE Let $Y = 0$.
  \FOR{$j = 1$ to $R$}
    \STATE Get a sample $Z_j$ from the GMM.
    \STATE Let $Z'_j = (Z_j^T, 1)^T$ to embed the sample in $\R^{\dim+1}$.
    \STATE $Y = Y + Z_j'$. 
  \ENDFOR
  \STATE Let $\Sigma' = \begin{pmatrix} \Sigma & 0 \\ 0 & 0 \end{pmatrix}$ (add a row and column of all zeros)
  \STATE Generate $\eta'$ according to $\Norm(0,(\tau - R) \Sigma')$.
  \STATE $Y = Y + \eta'$.
\RETURN $Y$.
\end{algorithmic}
\end{algorithm}

\floatname{algorithm}{Algorithm}
\begin{algorithm}[tb]
\caption{Use ICA to learn the means of a GMM} \label{alg:reduction}
\begin{algorithmic}[1]
\REQUIRE Covariance matrix $\Sigma$, number of components $m$,
upper bound on tensor order parameter $d$,
access to samples from a mixture of $\means$ identical,
spherical Gaussians in $\R^{\dim}$ with covariance $\Sigma$,
confidence parameter $\delta$,
accuracy parameter $\epsilon$,
upper bound $w \geq \max_{i}(w_i) / \min_{i}(w_i)$,
upper bound on the norm of the mixture means $u$,
$r \geq (\max_i\norm{\mu_i} + 1)/(\min_i\norm{\mu_i})$, and
lower bound $b$ so $0 < b \leq \sigma_m(A^{\odot d/2})$.
\ENSURE $\{\tilde{\mu}_1, \tilde{\mu}_2, \dots, \tilde{\mu}_\means\} \subseteq \R^\dim$ (approximations to the means of the GMM).
\vspace{.2em}
\hrule
\vspace{.2em}
\STATE Let $\delta_2 = \delta_1 = \delta/2$.
\STATE Let $\sigma = \sup_{v \in S^{n-1}}\sqrt{\Var(v^T\eta(1))}$, for $\eta(1) \sim \Norm(0, \Sigma)$.
\STATE Let $\lambda = m$ be the parameter to be used to generate the Poisson random variable in Subroutine \ref{sub:independentsamples}.
\STATE Let $\tau = 4\big(\log(1/\delta_2) + \log(q(\Theta))\big)\max\left((e\lambda)^2, 4Cd^2\right)$
(the threshold used to add noise in the samples from Subroutine \ref{sub:independentsamples}, $C$ is a universal constant, and $q(\Theta)$ is a polynomial defined as (\ref{eq:q-theta}) in the proof of Theorem \ref{thm:correctness}).
\STATE Let $\epsilon^* = \epsilon \bigl(\sqrt{1 + u^2} + {2 (1 + u^2)}\bigr)^{-1}$.
\STATE Let $M = \max \bigl((\tau \sigma)^{d+1}, (w/(\sqrt{1+u^2})^{d+1}\bigr) (d+1)^{d+1}$.
\STATE Let $k = d + 1$.
\STATE Let $\Delta$ = $w$.
\STATE Invoke \textbf{UnderdeterminedICA} with access to Subroutine \ref{sub:independentsamples}, parameters $\delta_1, \epsilon^*$, $\Delta$, $M$, and $k$ to obtain $\tilde{A'}$ (whose columns approximate the normalized means up to sign and permutation).  If any calls to Subroutine \ref{sub:independentsamples} result in failure, the algorithm will halt completely.
\STATE Divide each column of $\tilde{A'}$ by the value of its last entry.
\STATE Remove the last row of $\tilde{A'}$ to obtain $\tilde{B}$.
\RETURN the columns of $\tilde{B}$ as $\{\tilde{\mu}_1, \tilde{\mu}_2, \dots, \tilde{\mu}_\means\}$.
\end{algorithmic}
\end{algorithm}


\section{Correctness of the Algorithm and Reduction}

Subroutine \ref{sub:independentsamples} captures the sampling process of the reduction:
Let $\Sigma$ be the covariance matrix of the GMM, $\lambda$ be an integer chosen as input, and a threshold value $\tau$ also computed elsewhere and provided as input.
Let $R \sim \Poisson(\lambda)$.
If $R$ is larger than $\tau$, the subroutine returns a failure notice and the calling algorithm halts immediately.
A requirement, then, should be that the threshold is chosen so that the chance of failure is very small; in our case, $\tau$ is chosen so that the chance of failure is half of the confidence parameter given to Algorithm \ref{alg:reduction}.
The subroutine then goes through the process described in the full reduction: sampling from the GMM, lifting the sample by appending a 1, then adding a lifted Gaussian so that the total noise has distribution $\Norm(0, \tau \Sigma)$.
The resulting sample is from the model given by (\ref{eq:full-approximate-model}).

Algorithm \ref{alg:reduction} works as follows: it takes as input the parameters of the GMM (covariance matrix, number of means), tensor order (as required by \textbf{UnderdeterminedICA}), error parameters, and bounds on certain properties of the weights and means.
The algorithm then calculates various internal parameters: a bound on directional covariances,  Poisson parameter $\lambda$, threshold parameter $\tau$, error parameters to be split between the ``Poissonization'' process and the call to \textbf{UnderdeterminedICA}, and values explicitly needed by \cite{GVX} for the analysis of \textbf{UnderdeterminedICA}.
Other internal values needed by the algorithm are denoted by the constant $C$ and polynomial $q(\Theta)$; their values are determined by the proof of Theorem \ref{thm:correctness}.
Briefly, $C$ is a constant so that one can cleanly compute a value of $\tau$ that will involve a polynomial, called $q(\Theta)$, of all the other parameters.
The algorithm then calls \textbf{UnderdeterminedICA}, but instead of giving samples from the GMM, it allows access to Subroutine \ref{sub:independentsamples}.
It is then up to \textbf{UnderdeterminedICA} to generate samples as needed (bounded by the polynomial in Theorem \ref{thm:correctness}).
In the case that Subroutine \ref{sub:independentsamples} returns a failure, the entire algorithm process halts, and returns nothing.
If no failure occurs, the matrix returned by \textbf{UnderdeterminedICA} will be the matrix of normalized means embedded in $\R^{\dim+1}$, and the algorithm de-normalizes, removes the last row, and then has approximations to the means of of the GMM.

The bounds are used instead of actual values to allow flexibility --- in the context under which the algorithm is invoked --- on what the algorithm needs to succeed.
However, the closer the bounds are to the actual values, the more efficient the algorithm will be.


\paragraph{Sketch of the correctness argument.}
The proof of correctness of Algorithm \ref{alg:reduction} has two main parts.
For brevity, the details can be found in Appendix \ref{sec:mainproof}.
In the first part, we analyze the sample complexity of recovering the Gaussian means using \textbf{UnderdeterminedICA} when samples are taken from the ideal noisy ICA model \eqref{eq:full-ideal-model}.

In the second part, we note that we do not have access to the ideal model \eqref{eq:full-ideal-model}, and that we can only sample from the approximate noisy ICA model \eqref{eq:full-approximate-model} using the full reduction.
Choosing $\tau$ appropriately, we use total variation distance to argue that with high probability, running \textbf{UnderdeterminedICA} with samples from the approximate noisy ICA model will produce equally valid results as running \textbf{UnderdeterminedICA} with samples from the ideal noisy ICA model.
The total variation distance bound is explored in section \ref{subsec:TotalVarDist}.

These ideas are combined in section \ref{subsec:CorrectnessProof} to prove the correctness of Algorithm \ref{alg:reduction}.
One additional technicality arises from the implementation of Algorithm \ref{alg:reduction}.
Samples can be drawn from the noisy ICA model $X' = (AS' + \eta'(\tau))|_{R \leq \tau}$ using rejection sampling on $R$.
In order to guarantee Algorithm \ref{alg:reduction} executes in polynomial time, when a sample of $R$ needs to be rejected, Algorithm \ref{alg:reduction} terminates in explicit failure.
To complete the proof, we argue that with high probability, Algorithm \ref{alg:reduction} does not explicitly fail.

\section{Smoothed Analysis} \label{sec:smoothed}
We start with a base matrix $M \in \R^{n \times \binom{n}{2}}$ and add a perturbation
matrix $N \in \R^{n \times \binom{n}{2}}$ with each entry coming iid from $\Normal(0, \sigma^2)$ for some 
$\sigma > 0$. [We restrict the discussion to the second power for simplicity; extension to higher power is
straightforward.] As in \cite{GVX}, it will be convenient to work with the multilinear part of the Khatri--Rao 
product: For a column vector $A_k \in \R^n$ define $A_k^{\ominus 2} \in \R^{\binom{n}{2}}$, a subvector of 
$A_k^{\odot 2} \in \R^{n^2}$, given by $(A_k^{\ominus 2})_{ij} := (A_k)_i (A_k)_j$ for $1 \leq i < j \leq n$. Then 
for a matrix $A = [A_1, \ldots, A_m]$ we have $A^{\ominus 2} := [A_1^{\ominus 2}, \ldots, A_m^{\ominus 2}]$. 

\begin{theorem} \label{thm:smoothed-sigma-min-multilinear}
With the above notation, for any base matrix $M$ with dimensions as above, we have, for some absolute constant $C$,
\begin{align*}
\prob{\sigma_{\min}((M+N)^{\ominus 2}) \leq \frac{\sigma^2}{n^7}} \leq \frac{2C}{n}.
\end{align*}
\end{theorem}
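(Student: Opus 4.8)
The plan is to use the standard ``leave-one-out distance'' characterization of the least singular value together with an anti-concentration bound for quadratic forms in Gaussian variables. Write $A := M+N = [A_1, \dots, A_m]$ with $m = \binom{n}{2}$, so that $W := A^{\ominus 2} = [A_1^{\ominus 2}, \dots, A_m^{\ominus 2}]$ is a \emph{square} $\binom{n}{2} \times \binom{n}{2}$ matrix whose columns we index by $k$. The first ingredient is the elementary fact that for any invertible square matrix $W$ with columns $W_1,\dots,W_N$,
\[
\sigma_{\min}(W) \ge \frac{1}{\sqrt{N}}\,\min_{1\le k\le N}\dist(W_k, H_k), \qquad H_k := \spn{\{W_j : j \neq k\}},
\]
which follows by identifying the $k$th row of $W^{-1}$ with the vector orthogonal to $H_k$ normalized to have inner product $1$ with $W_k$ (so that its norm is $1/\dist(W_k,H_k)$) and then bounding $\norm{W^{-1}}_{\mathrm{op}}^2 \le \norm{W^{-1}}_F^2 = \sum_k \dist(W_k,H_k)^{-2}$. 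Since $N = \binom{n}{2} < n^2/2$, it therefore suffices to show that with probability at least $1-2C/n$ every column distance $\dist(A_k^{\ominus 2}, H_k)$ exceeds $\sigma^2/n^6$, because then $\sigma_{\min}(W) \ge (\sigma^2/n^6)/\sqrt{N} > \sigma^2/n^7$.

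For a fixed $k$, I would condition on all the other columns $\{A_j : j \neq k\}$, which fixes the subspace $H_k$ and hence a unit vector $u$ in its orthogonal complement. Since $A_k$ is independent of this conditioning, $\dist(A_k^{\ominus 2}, H_k) \ge \abs{\ip{u}{A_k^{\ominus 2}}} = \abs{q(A_k)}$, where $q(x) = \sum_{i<j} u_{ij} x_i x_j = \tfrac12 \T{x} U x$ and $U$ is the symmetric, zero-diagonal matrix encoding $u$; note $\norm{U}_F^2 = 2\norm{u}^2 = 2$. Writing $A_k = M_k + \sigma g$ with $M_k$ the $k$th column of $M$ and $g \sim \Normal(0, I_n)$, the map $g \mapsto q(M_k + \sigma g)$ is a degree-$2$ polynomial whose linear and pure-quadratic parts are uncorrelated (the relevant Gaussian third moments vanish), so its variance equals $\sigma^2\norm{U M_k}^2 + \tfrac{\sigma^4}{2}\norm{U}_F^2 \ge \sigma^4$. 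The crucial point is that this lower bound is independent of the base matrix $M$: the mean shift $M_k$ can only \emph{increase} the variance through the linear term, while the pure-quadratic term alone already contributes $\sigma^4$.

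Finally, I would invoke the Carbery--Wright anti-concentration inequality: for a degree-$2$ polynomial $q$ in Gaussian variables and any $s>0$, $\prob{\abs{q(A_k)} \le s} \le 2C\,\bigl(s/(\Vr{q})^{1/2}\bigr)^{1/2} \le 2C\sqrt{s}/\sigma$, using $(\Vr{q})^{1/2}\ge\sigma^2$. This bound depends on the conditioning only through $\norm{u} = 1$, so it holds unconditionally after averaging over the other columns. Taking $s = \sigma^2/n^6$ gives $\prob{\dist(A_k^{\ominus 2}, H_k)\le \sigma^2/n^6} \le 2C/n^3$ for each $k$, and a union bound over the $N = \binom{n}{2} < n^2/2$ columns yields a total failure probability of at most $C/n \le 2C/n$, as required. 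I expect the main obstacle to be the variance lower bound of the previous paragraph --- making precise that anti-concentration at scale $\sigma^2$ survives uniformly over \emph{every} base matrix $M$ --- rather than the (routine) distance lemma or the black-box application of Carbery--Wright.
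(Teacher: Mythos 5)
Your proposal is correct and follows essentially the same route as the paper's proof: bound $\sigma_{\min}$ via the minimum leave-one-out column distance (the paper cites the Rudelson--Vershynin lemma where you re-derive it for square matrices via the rows of $W^{-1}$), express that distance as a multilinear quadratic polynomial in the Gaussian perturbation of one column, lower-bound its variance by $\sigma^4$ using the pure-quadratic part (uniformly in $M$), and finish with Carbery--Wright plus a union bound. Your explicit conditioning on the other columns and the matrix-form variance computation $\sigma^2\norm{UM_k}^2+\sigma^4$ are, if anything, slightly cleaner statements of steps the paper performs implicitly.
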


Theorem~\ref{thm:smoothed-sigma-min-intro} follows immediately from the theorem above by noting
that $\sigma_{\min}(A^{\odot 2}) \geq \sigma_{\min}(A^{\ominus 2})$. 

\begin{proof}
In the following, for a vector space $V$ (over the reals) $\dist(v, V')$ denotes the distance between vector
$v \in V$ and subspace $V' \subseteq V$; more precisely, $\dist(v, V') := \min_{v' \in V'} \norm{v-v'}_2$.
We will use a lower bound on $\sigma_{\min}(A)$, found in Appendix \ref{subsec:rudelson-vershynin}.

With probability $1$, the columns of the matrix $(M+N)^{\ominus 2}$ are linearly 
independent. This can be proved along the lines of a similar result in \cite{GVX}. 
Fix $k \in {\binom{n}{2}}$ and let $u \in \R^{{\binom{n}{2}}}$ be a 
unit vector orthogonal to the subspace spanned by the columns of $(M+N)^{\ominus 2}$ other than column $k$. Vector
$u$ is well-defined with probability $1$. Then the distance of the $k$'th column $C_k$ from the span of the 
rest of the columns is given by 
\begin{align}
u^T C_k &= u^T (M_k+N_k)^{\ominus 2} \nonumber = \sum_{1 \leq i < j \leq n} u_{ij} (M_{ik}+N_{ik})(M_{jk}+N_{jk}) \nonumber \\
&=  \sum_{1 \leq i < j \leq n} u_{ij} M_{ik}M_{jk} + \sum_{1 \leq i < j \leq n} u_{ij}M_{ik}N_{jk} 
+ \sum_{1 \leq i < j \leq n} u_{ij}N_{ik}M_{jk} + \sum_{1 \leq i < j \leq n} u_{ij}N_{ik}N_{jk} \nonumber \\
&=: P(N_{1k}, \ldots, N_{nk}). \label{eqn:polynomial}
\end{align}

Now note that this is a quadratic polynomial in the random variables $N_{ik}$. We will apply the anticoncentration
inequality of \processifversion{vstd}{Carbery--Wright~}\cite{CarberyWright} to this polynomial to conclude that the distance between 
the $k$'th column of $(M+N)^{\ominus 2}$ and the span of the rest of the columns is unlikely to be very small (see Appendix \ref{subsec:carbery-wright} for the precise result).

Using $\norm{u}_2 = 1$, the variance of our polynomial in \eqref{eqn:polynomial} becomes 
\begin{align*}
\Vr{P(N_{1k}, \ldots, N_{nk})} 
= \sigma^2 \biggl(\sum_j \Bigl(\sum_{i:i < j} u_{ij}M_{ik}\Bigr)^2 + \sum_i \Bigl(\sum_{j: i < j} u_{ij}M_{jk}\Bigr)^2 \biggr) 
+ \sigma^4 \sum_{i < j} u_{ij}^2  \geq \sigma^4.
\end{align*}
In our application, our random variables $N_{ik}$ for $i \in [n]$ are not standard
Gaussians but are iid Gaussian with variance $\sigma^2$, and our polynomial does not have unit variance. 
After adjusting for these differences using 
the estimate on the variance of $P$ above, Lemma~\ref{lem:CarberyWright} gives 
$
\prob{\abs{P(N_{1k}, \ldots, N_{nk})-t} \leq \epsilon} \leq 
2C \sqrt{{\epsilon}/{\sigma^2}} = 2C \sqrt{\epsilon}/\sigma$.


Therefore, by the union bound over the choice of $k$
$\prob{\text{there is a } k \text{ such that } \dist(C_k, C_{-k}) \leq \epsilon} 
\leq {\binom{n}{2}} 2C  \sqrt{\epsilon}/\sigma$.

Now choosing $\epsilon = \sigma^2/n^6$, Lemma~\ref{lem:RudelsonVershynin} gives
$\prob{\sigma_{\min}((M+N)^{\ominus 2}) \leq {\sigma^2}/{n^7}} \leq {2C}/{n}$. 
\end{proof}

We note that while the above discussion is restricted to Gaussian perturbation, the same technique would work
for a much larger class of perturbations. To this end, we would require a version of the Carbery-Wright
anticoncentration inequality which is applicable in more general situations. We omit such generalizations here.


\section{The curse of low dimensionality for Gaussian mixtures}
\label{identifiability_low_dimension}
In this section we prove Theorem \ref{thm:low-dim-identifiability}, which informally says that for small $n$ there is a large class of superpolynomially close mixtures in $\R^n$ with fixed variance.  This goes beyond the specific example of exponential closeness  given in~\cite{moitra10} as we demonstrate that such mixtures are ubiquitous as long as there is no lower bound on the separation between the components.

Specifically, let $S$ be the cube $[0,1]^n \subset \R^n$. 
We will show that for any two sets of $k$ points $X$ and $Y$ in $S$, with fill $h$ (we say that $X$ has fill $h$, if there is a point of $X$ within distance $h$ of any point of $S$), there exist two mixtures $p,q$ with means on disjoint subsets of $X\cup Y$, which are exponentially close in $1/h$ in the $L^1(\R^n)$ norm. 
Note that the fill of a sample from the uniform distribution on the cube can be bounded (with high probability) by $O(\frac{\log k}{k^{1/n}})$.\lnote{fix}

We start by defining some of the key objects.  Let $K(x,z) = (2\pi)^{-n/2} e^{-\norms{x-y}/2}$ be the unit Gaussian kernel.
Let ${\cal K}$ be the integral operator corresponding to the convolution with a unit Gaussian: ${\cal K}g(z)= \int_{\R^n} K(x,z)g(x) dx$.
Let $X$ be any subset of $k$ points in $[0,1]^n$.
Let $K_X$ be the kernel matrix corresponding to $X$, $(K_X)_{ij} = K(x_i,x_j)$. 
It is known to be positive definite.
For a function $f: [0,1]^n \to \R$, the \emph{interpolant} is defined as $f_{X,k}(x) = \sum w_i K(x_i,x)$, where the coefficients $w_i$ are chosen so that  $(\forall i) f_{X,k}(x_i)=f(x_i)$. 
It is easy to see that such interpolant exists and is unique, obtained by solving a linear system involving $K_X$.

We will  need some properties of the Reproducing Kernel Hilbert Space $H$ corresponding to the kernel $K$ (see \cite[Chapter 10]{wendland2005scattered} for an introduction).
In particular, we need the bound $\|f\|_\infty \le \|f\|_H$ and  the reproducing property, $\langle f(\cdot), K(x,\cdot)\rangle_H = f(x), \forall f \in H$. For a function of the form $\sum w_i K(x_i, x)$ we have $\norms{\sum w_i K(x_i, x)}_H = \sum w_i w_j K(x_i, x_j) $.

\begin{lemma}\label{lem:linfinity}
Let $g$ be any positive function with $L_2$ norm $1$ supported on $[0,1]^n$ and let  $f={\cal K}g$.
If $X$ has fill $h$, then there exists $A>0$ such that
$$
\| f - f_{X,k}\|_{L^\infty(\R^n)}  < \exp (A \frac{\log h}{h}).
$$ 
\end{lemma}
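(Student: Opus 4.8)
The plan is to run the entire argument inside the Reproducing Kernel Hilbert Space $H$ of the Gaussian kernel $K$, reducing the lemma to two ingredients: (i) that $f = \mathcal{K}g$ actually lies in $H$ with $\norm{f}_H$ bounded by a constant depending only on $n$, and (ii) a super-algebraic (spectral) convergence estimate for Gaussian kernel interpolation at a point set of fill distance $h$. Once these are in hand, the optimal-recovery property of the interpolant does the rest, since $f_{X,k}$ is precisely the $H$-orthogonal projection of $f$ onto $V := \spn{K(x_1,\cdot),\dotsc,K(x_k,\cdot)}$: the interpolation conditions $f_{X,k}(x_i)=f(x_i)$ say exactly that the residual $u := f - f_{X,k}$ is orthogonal in $H$ to every $K(x_i,\cdot)$.

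For ingredient (i) I would pass to the Fourier side. Since $f = K * g$ is the convolution of $g$ with the unit Gaussian, $\hat f(\omega) = \hat K(\omega)\hat g(\omega)$, and $\hat K$ is itself a Gaussian proportional to $e^{-\norm{\omega}^2/2}$. For a translation-invariant kernel the native-space norm is $\norm{f}_H^2 \propto \int \abs{\hat f(\omega)}^2/\hat K(\omega)\, \d\omega$, so the Gaussian factors combine to give $\norm{f}_H^2 \propto \int e^{-\norm{\omega}^2/2}\abs{\hat g(\omega)}^2\,\d\omega$. Because $g$ is supported on the unit-volume cube $[0,1]^n$ and has $\norm{g}_{L^2}=1$, Cauchy--Schwarz gives $\abs{\hat g(\omega)} \le \norm{g}_{L^1}\le 1$, so the integral is at most $\int e^{-\norm{\omega}^2/2}\,\d\omega$, a dimensional constant. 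Hence $\norm{f}_H \le C_n$.

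Ingredient (ii) is the crux. Writing $P_X(x) := \dist_H\bigl(K(x,\cdot), V\bigr)$ for the power function, the projection structure yields the pointwise optimal-recovery bound $\abs{f(x) - f_{X,k}(x)} \le P_X(x)\,\norm{f}_H$ for every $x$. The whole difficulty is to show that, for the Gaussian kernel, the power function decays super-algebraically in the fill distance, $P_X(x) \le \exp\!\bigl(A \log h / h\bigr)$ for $x \in [0,1]^n$. This is where the analyticity of the Gaussian is essential: near each such $x$ one uses the local fill-$h$ net to reproduce $K(x,\cdot)$ up to arbitrarily high polynomial order, and the Taylor remainder of the entire function $K$ produces the stated rate. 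Rather than reprove this, I would invoke the scattered-data approximation theory for infinitely smooth kernels from \cite{wendland2005scattered}. Combining with (i) gives $\norm{f - f_{X,k}}_{L^\infty([0,1]^n)} \le C_n \exp(A\log h/h)$, and I expect this step --- quantifying the exponential convergence via the power function --- to be the main obstacle.

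It remains to upgrade the supremum from the cube to all of $\R^n$. Globally, the Pythagorean identity $\norm{u}_H^2 = \norm{f}_H^2 - \norm{f_{X,k}}_H^2 \le C_n^2$ together with $\norm{u}_{\infty}\le \norm{u}_H$ in the Gaussian RKHS gives a uniform bound $\norm{u}_{L^\infty(\R^n)}\le C_n$ everywhere. For the exterior I would combine this with Gaussian tail decay: for $x$ at distance $d := \dist(x,[0,1]^n)$ one has $\abs{f(x)} \le (2\pi)^{-n/2} e^{-d^2/2}$ directly from the integral form, while $\abs{f_{X,k}(x)} \le (2\pi)^{-n/2} e^{-d^2/2}\norm{w}_1$ from its explicit Gaussian sum, with the interpolation weights controlled by $\norm{w}_2 \le \norm{f}_H/\sqrt{\lambda_{\min}(K_X)}$ and the standard lower bound on $\lambda_{\min}(K_X)$ for the Gaussian kernel in terms of the separation of $X$. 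Matching this tail against the target rate past a suitable radius, and carrying the power-function estimate into a thin collar around the cube where a data point still lies within the controlling scale, then extends the bound to $L^\infty(\R^n)$ and completes the proof.
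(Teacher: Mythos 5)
Your on-cube analysis is sound, and it actually takes a different route from the paper's: the paper gets the interior estimate from an $L^2$ sampling inequality (Theorem 6.1 of the cited Rieger reference, giving $\|f-f_{X,k}\|_{L^2([0,1]^n)} < \exp(A\log h/h)$), whereas you get an $L^\infty([0,1]^n)$ bound from the power-function/optimal-recovery estimate for the Gaussian kernel combined with your Fourier-side argument that $\|f\|_H \le C_n$ (which is correct: $\|g\|_{L^1}\le\|g\|_{L^2}=1$ on the unit-volume cube, so $\int \hat K|\hat g|^2 \le \int \hat K < \infty$). Either ingredient works equally well for the interior.

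The genuine gap is in your extension to $L^\infty(\R^n)$. You control $|f_{X,k}(x)|$ outside the cube through the coefficient vector $w$, via $\|w\|_2 \le \|f\|_H/\sqrt{\lambda_{\min}(K_X)}$ and a lower bound on $\lambda_{\min}(K_X)$ ``in terms of the separation of $X$.'' But the lemma assumes only a fill bound on $X$; there is \emph{no} lower bound on the separation. In the intended application the centers are arbitrary (or random) points, so pairs can be arbitrarily close; for the Gaussian kernel $\lambda_{\min}(K_X)$ degrades like $\exp(-c/q^2)$ in the separation $q$, which can be arbitrarily small compared to any function of $h$, and then $\|w\|_1$ is unbounded. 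Consequently the radius past which your Gaussian-tail bound beats $\|w\|_1$ can be enormous, and in the intermediate region all you have is the constant bound $\|u\|_{L^\infty(\R^n)} \le \|u\|_H \le C_n$, which is not exponentially small. As written, the proof therefore does not establish the claimed bound on $\R^n$. The repair is to avoid weights entirely, and it is exactly the paper's key step: since $u := f - f_{X,k}$ is $H$-orthogonal to $f_{X,k}$ and $f = \mathcal{K}g$, one has $\|u\|_H^2 = \langle u, f\rangle_H = \langle u, g\rangle_{L^2([0,1]^n)} \le \|u\|_{L^2([0,1]^n)}\|g\|_{L^2([0,1]^n)}$, so your interior estimate already bounds the \emph{global} RKHS norm of the residual exponentially, and $\|u\|_{L^\infty(\R^n)} \le \|u\|_H$ finishes the proof with no reference to separation or to $\lambda_{\min}(K_X)$.
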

\begin{proof}
From~\cite{rieger2010sampling}, Theorem 6.1 (taking $\lambda=0$)\details{it seems to be already in the book for $\lambda = 0$} we have that for some $A>0$ and $h$ sufficiently small
$
\|f - f_{X,k}\|_{L^2([0,1]^n)} < \exp (A \frac{\log h}{h}).
$
Note that the norm is on $[0,1]^n$ while we need to control the norm on $\R^n$.
To do that we need a bound on the RKHS norm of $f - f_{X,k}$. This ultimately gives control of the norm over $\R^n$ because there is a canonical isometric embedding of elements of $H$ interpreted as functions over $[0,1]$ into elements of $H$ interpreted as functions over $\R^n$.
We first observe that for any $x_i \in X$, $f(x_i) - f_{X,k}(x_i) = 0$. Thus, from the reproducing property of RKHS, $\langle f - f_{X,k}, f_{X,k}\rangle_H = 0$. 
Using properties of RKHS with respect to the operator ${\cal K}$ (see, e.g., Proposition 10.28 of ~\cite{wendland2005scattered})
\begin{align*}
\|f - f_{X,k}\|_H^2 
&= \langle f - f_{X,k}, f - f_{X,k}\rangle_H 
= \langle f - f_{X,k}, f \rangle_H  
= \langle f - f_{X,k}, {\cal K}g\rangle_H\\ 
&= \langle f - f_{X,k}, g \rangle_{L_2([0,1]^n)} 
\le  \|f - f_{X,k}\|_{L^2([0,1]^n)} \|g\|_{L^2([0,1]^n)} 
< \exp (A \frac{\log h}{h}).
\end{align*}
Thus
$
\|f - f_{X,k}\|_{L^\infty(\R^n)} \le \|f - f_{X,k}\|_H < \exp (A \frac{\log h}{h}).
$
\end{proof}

\begin{theorem}\label{thm:differentmixtures}
Let $X$ and $Y$ be any two subsets of $[0,1]^n$ with fill $h$. 
Then there exist two Gaussian mixtures $p$ and $q$ (with positive coefficients summing to one, but not necessarily the same number of components), which are centered on
two disjoint subsets of $X\cup Y$ and such that\details{remember that $h<1$} for some $B>0$
$$
\|p-q\|_{L^1(\R^n)} < \exp(B \frac{\log h}{h}).
$$

\end{theorem}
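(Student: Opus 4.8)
The plan is to realize the two mixtures as the positive and negative parts of the difference of two kernel interpolants of a common smooth target, and then to upgrade the $L^\infty$ control of Lemma~\ref{lem:linfinity} to the desired $L^1(\R^n)$ bound. Concretely, take $g = \mathbf{1}_{[0,1]^n}$, which is positive with $\|g\|_{L^2}=1$, and set $f = {\cal K}g$; then $f$ is a genuine probability density on $\R^n$ (it integrates to $\int g = 1$ and has $\|f\|_\infty \le 1$) realized as a continuous mixture of unit Gaussians over the cube. Assuming, as we may in the application, that $X$ and $Y$ are disjoint, I form the interpolants $f_{X,k}=\sum_i a_i K(x_i,\cdot)$ and $f_{Y,k}=\sum_j b_j K(y_j,\cdot)$ of $f$. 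Lemma~\ref{lem:linfinity} gives $\|f-f_{X,k}\|_{L^\infty(\R^n)},\,\|f-f_{Y,k}\|_{L^\infty(\R^n)} \le \epsilon_h := \exp(A\tfrac{\log h}{h})$, so the signed combination $D := f_{X,k}-f_{Y,k} = \sum_{z\in X\cup Y} c_z K(z,\cdot)$, with $c_{x_i}=a_i$ and $c_{y_j}=-b_j$, satisfies $\|D\|_{L^\infty(\R^n)} \le 2\epsilon_h$.

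To pass from $L^\infty$ to $L^1$ over all of $\R^n$ I would split $\R^n = B_{R_0}\cup B_{R_0}^c$ for a radius $R_0$ to be chosen. On $B_{R_0}$ the $L^1$ mass of $D$ is at most $\mathrm{vol}(B_{R_0})\cdot 2\epsilon_h$; on the complement I bound $\abs{D(x)} \le \Lambda \max_{z\in[0,1]^n}K(z,x)$, where $\Lambda := \sum_i\abs{a_i}+\sum_j\abs{b_j}$ is the total coefficient mass, and integrate the Gaussian tail. The weights are $K_X^{-1}(f(x_i))_i$, so $\Lambda$ is controlled by $\lambda_{\min}(K_X)^{-1}$; even using only the crude fact that this is at most exponential in a polynomial of the number of points, $R_0$ can be taken polynomially large so that the tail term is at most $\epsilon_h$. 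Since $\mathrm{vol}(B_{R_0})$ is then only polynomial, the super-polynomial decay of $\epsilon_h$ dominates the volume factor and $\|D\|_{L^1(\R^n)} \le \exp(A'\tfrac{\log h}{h}) =: \delta_1$ for a slightly smaller constant $A'$. Applying the same upgrade to $f-f_{X,k}$ and $f-f_{Y,k}$ also yields $\int f_{X,k},\,\int f_{Y,k} \in [1-\delta_1,\,1+\delta_1]$.

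Finally I would split $D$ by the sign of its coefficients: set $\alpha=\sum_{c_z>0}c_z$ and $\beta=\sum_{c_z<0}\abs{c_z}$, and let $p,q$ be the normalizations by $\alpha,\beta$ of the positive and negative parts of $D$. By construction $p$ and $q$ are Gaussian mixtures with positive weights summing to one, centered on disjoint subsets of $X\cup Y$. Integrating $D$ gives $\alpha-\beta=\int D$, hence $\abs{\alpha-\beta}\le\delta_1$; and because $X\cap Y=\emptyset$ the $X$- and $Y$-coefficients appear undisturbed in $D$, so $\alpha+\beta=\sum_i\abs{a_i}+\sum_j\abs{b_j}\ge\abs{\int f_{X,k}}+\abs{\int f_{Y,k}}\ge 2-2\delta_1$. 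Together these force $\alpha,\beta\ge 1/2$ for $h$ small (so both mixtures are nonempty), and then $\|p-q\|_{L^1(\R^n)} \le \tfrac1\alpha\|D\|_{L^1(\R^n)} + \tfrac{\abs{\alpha-\beta}}{\alpha} \le \tfrac{2\delta_1}{\alpha} \le 4\delta_1$, which has the claimed form $\exp(B\tfrac{\log h}{h})$.

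The main obstacle is the $L^\infty(\R^n)\to L^1(\R^n)$ upgrade: it is the one place where the (potentially severe) ill-conditioning of the kernel matrix $K_X$ enters, and one must verify that the resulting large bound on the interpolation coefficients is still beaten by the Gaussian tail at a polynomial radius $R_0$ without spoiling the $\exp(B\tfrac{\log h}{h})$ rate. The secondary delicate point is the mass bookkeeping that guarantees $\alpha$ and $\beta$ are bounded away from $0$ and nearly equal, so that normalizing the two sign-parts into honest probability densities does not inflate the $L^1$ distance; this is exactly the step that requires the disjointness of $X$ and $Y$.
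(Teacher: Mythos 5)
Your construction follows the same route as the paper's own proof: interpolate a common smooth function $f=\mathcal{K}g$ on $X$ and on $Y$ using Lemma~\ref{lem:linfinity}, split the difference $f_{X,k}-f_{Y,k}$ by the sign of its coefficients, and normalize the two parts. The genuine gap is precisely the step you flag as the ``main obstacle,'' and the way you propose to close it does not work. Your $L^\infty\to L^1$ upgrade requires an \emph{absolute} bound on the total coefficient mass $\Lambda$, and the bound you invoke --- $\Lambda\lesssim \lambda_{\min}(K_X)^{-1}$, ``at most exponential in a polynomial of the number of points'' --- fails twice over. First, it is false: the fill hypothesis imposes no lower bound on the separation of the points, so two points of $X$ at distance $\epsilon$ drive $\lambda_{\min}(K_X)$ to zero (like $\epsilon^2$) and the interpolation coefficients to infinity (like divided differences of size $1/\epsilon$), with $k$ and $h$ held fixed. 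Second, even granting it, it is insufficient: adding points only improves fill, so for fixed $h$ the number of points $k$ is unbounded, and the theorem's conclusion $\exp(B\log h/h)$ must hold uniformly over all $X,Y$ with fill $h$; hence no factor of $\mathrm{poly}(k)$ (your $\mathrm{vol}(B_{R_0})$) or $e^{\mathrm{poly}(k)}$ (what the Gaussian tail must beat) can be absorbed into the rate. Since your bounds $|\alpha-\beta|\le\delta_1$, $\alpha+\beta\ge 2-2\delta_1$, and $\int f_{X,k}\ge 1-\delta_1$ all route through this same absolute $L^1$ bound, the gap propagates through the entire argument.

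The paper's proof is arranged so that the coefficient mass never needs to be bounded, and this is the essential difference. It gets $\alpha,\beta\ge 1$ by integrating over the cube, which uses only the $L^\infty$ bound there; and it only ever controls the \emph{ratio} $\beta/\alpha$: from $|\alpha-\beta|\le \|p_1-p_2\|_{L^1}\le \frac{2}{h}\epsilon_h + 2(\alpha+\beta)\,O(e^{-c/h^2})$, dividing by $\alpha\ge 1$ gives $|1-\beta/\alpha|\le \frac{2}{h}\epsilon_h + 2(1+\beta/\alpha)\,O(e^{-c/h^2})$, and solving for $\beta/\alpha$ absorbs the potentially huge factor because it is multiplied by an exponentially small tail. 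The only other quantities needed are tail masses of the \emph{normalized} mixtures $p_1/\alpha$ and $p_2/\beta$ outside the box of radius $1/h$, and these are at most the tail of a single unit Gaussian centered in $[0,1]^n$, i.e.\ $O(e^{-c/h^2})$, no matter how wild the coefficients are. That self-normalization is exactly what your restructuring loses by insisting on bounding $\|D\|_{L^1}$ in absolute terms first. If you want to salvage your version, you must add a preprocessing step: replace $X$ and $Y$ by subsets with fill $2h$ and pairwise separation at least $h$ (greedy net extraction), so that $k=O(h^{-n})$ and separation-based lower bounds on $\lambda_{\min}(K_X)$ make all your quantities functions of $h$ alone; without that (or without normalizing first as the paper does), the proposal does not prove the theorem.
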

\begin{proof}
To simplify the notation we assume that $n=1$.  The general case follows verbatim, except that the interval 
of integration, $[-1/h,1/h]$, and its complement need to be replaced by the sphere of radius $1/h$ and its complement respectively.

Let  $f_{X,k}$ and $f_{Y,k}$ be the interpolants, for some fixed sufficiently smooth (as above, $f =\mathcal{K} g$) positive function $f$ with $\int_{[0,1]} f(x) dx = 1$. 
Using Lemma \ref{lem:linfinity}, we see that $\|f_{X,k} - f_{Y,k}\|_{L^\infty(\R)} < 2\exp (A \frac{\log h}{h}) $. \details{pruning?}
Functions $f_{X,k}$ and $f_{Y,k}$ are both linear combinations of Gaussians possibly with negative coefficients and so is $f_{X,k} -  f_{Y,k}$ .  By collecting positive and negative coefficients  we write  
\begin{equation}\label{equ:shuffling}
f_{X,k} -  f_{Y,k} = p_1 - p_2, 
\end{equation}
where, $p_1$ and $p_2$ are mixtures with positive coefficients only.

Put $p_1 = \sum_{i \in S_1} \alpha_i K(x_i, x)$, $p_2 = \sum_{i \in S_2} \beta_i K(x_i, x)$, where $S_1$ and $S_2$ are disjoint subsets of $X \cup Y$. Now we need to ensure that the coefficients can be normalized to sum to $1$.

Let $\alpha = \sum \alpha_i$, $\beta = \sum \beta_i$. 
From \eqref{equ:shuffling} and by integrating over the interval $[0,1]$, and since $f$ is strictly positive on the interval,  it is easy to see that $\alpha, \beta \geq 1$.
\details{Use $\int_{[0,1]} K(x_i, x) < c < 1$ and $\int_{[0,1]} \alpha_i K(x_i, x) \geq \int_{[0,1]} f_{X,k}$.} 
We have
$$
|\alpha - \beta| = \left | \int_{\R} p_1(x) - p_2(x) dx \right| \le \|p_1 - p_2\|_{L^1(\R)} 
$$
$$
\|p_1 - p_2\|_{L^1(\R)}  \le \int_{[-1/h,1/h]} \|f_{X,k} - f_{Y,k}\|_{L^\infty(\R)} dx + 2 (\alpha + \beta) \int_{x \in [1/h,\infty)} K(0,x-1)dx.
$$

Noticing that the first summand is bounded by $\frac{2}{h} \exp (A \frac{\log h}{h})$ and the integral in the second summand is even smaller (in fact, $O(e^{-1/h^2})$) \details{at most $\frac{e^{-1/h^2}}{(1/h) - 1}$}, 
it follows immediately, that $|1 - \frac{\beta}{\alpha}| < \exp(A' \frac{\log h}{h})$ for some $A'$ and $h$ sufficiently small.

Hence,  we have
$
\left \|\frac{1}{\alpha} p_1 - \frac{1}{\beta}p_2 \right\|_{L^{1}(\R)} \le 
\left \|\frac{\beta}{\alpha} p_1 -  p_2 \right\|_{L^{1}(\R)} \le \left |1 - \frac{\beta}{\alpha}\right| \|p_1\|_{L^1(\R)}+
\left \| p_1 -  p_2 \right\|_{L^{1}(\R)}. 
$

Collecting exponential inequalities completes the proof.
\end{proof}

\begin{proof}[of Theorem \ref{thm:low-dim-identifiability}]
For convenience we will use a set of $4k^2$ points instead of $k^2$. Clearly it does not affect the exponential rate. 

By a simple covering set argument (cutting the cube into $m^n$ cubes with size $1/m$) and basic probability, we see that the fill $h$ of $n m^n \log m$ points is at most $O(\sqrt{n}/m)$ with probability $1-o(1)$. 
Hence, given $k$ points, we have $h = O((\frac{\log k}{k})^{1/n})$. 
We see, that with a smaller probability (but still close to $1$ for large $k$), 
 we can sample $k$ points $3k^2$ times and still have the same fill.

Partitioning the set of $4k^2$ points into $2k$ disjoint subsets of $2k$ points and applying Theorem \ref{thm:differentmixtures} (to $k+k$ points) we obtain 
$2k$ pairs of exponentially close mixtures with at most $2k$ components each. 
If one of the pairs has the same number of components, we are done. If not, 
by the pigeon-hole principle for at least two pairs of mixtures $p_1 \approx q_1$ and $p_2 \approx q_2$ the differences of the number of components (an integer  number between $0$ and $2k-1$) must coincide.  
Assume without loss of generality that $p_1$ has no more components that $q_1$ and $p_2$ has no more components than $q_2$.Taking $p = \frac{1}{2}(p_1 + q_2)$ and $q= \frac{1}{2}(p_2 + q_1)$ completes the proof.
\end{proof}



\begin{vstd}
\bibliographystyle{abbrv}
\end{vstd}
\begin{vCOLT}
\newpage
\end{vCOLT}
\bibliography{GMM_bibliography}

\begin{thebibliography}{10}

\bibitem{achlioptas05}
D.~Achlioptas and F.~McSherry.
\newblock On spectral learning of mixture of distributions.
\newblock In {\em The 18th Annual Conference on Learning Theory}, 2005.

\bibitem{BIOME}
L.~Albera, A.~Ferreol, P.~Comon, and P.~Chevalier.
\newblock Blind {I}dentification of {O}vercomplete {M}ixtur{E}s of sources
  ({BIOME}).
\newblock {\em Lin. Algebra Appl.}, 391:1--30, 2004.

\bibitem{alon2004probabilistic}
N.~Alon and J.~H. Spencer.
\newblock {\em The probabilistic method}.
\newblock Wiley, 2004.

\bibitem{AroraGMS12}
S.~Arora, R.~Ge, A.~Moitra, and S.~Sachdeva.
\newblock Provable {ICA} with unknown {G}aussian noise, with implications for
  {G}aussian mixtures and autoencoders.
\newblock In {\em NIPS}, pages 2384--2392, 2012.

\bibitem{arora01}
S.~Arora and R.~Kannan.
\newblock Learning {M}ixtures of {A}rbitrary {G}aussians.
\newblock In {\em 33rd ACM Symposium on Theory of Computing}, 2001.

\bibitem{Belkin2012}
M.~Belkin, L.~Rademacher, and J.~Voss.
\newblock Blind signal separation in the presence of {G}aussian noise.
\newblock In {\em JMLR W\&{}CP}, volume 30: COLT, pages 270--287, 2013.

\bibitem{BelkinFOCS10}
M.~Belkin and K.~Sinha.
\newblock Polynomial learning of distribution families.
\newblock In {\em FOCS}, pages 103--112. IEEE Computer Society, 2010.

\bibitem{DBLP:journals/corr/BhaskaraCMV13}
A.~Bhaskara, M.~Charikar, A.~Moitra, and A.~Vijayaraghavan.
\newblock Smoothed analysis of tensor decompositions.
\newblock {\em CoRR}, abs/1311.3651v4, 2014.

\bibitem{CarberyWright}
A.~Carbery and J.~Wright.
\newblock Distributional and {$L^q$} norm inequalities for polynomials over
  convex bodies in {$R^n$}.
\newblock {\em Mathematical Research Letters}, 8:233--248, 2001.

\bibitem{FOOBI}
J.-F. Cardoso.
\newblock Super-symmetric decomposition of the fourth-order cumulant tensor.
  blind identification of more sources than sensors.
\newblock In {\em Acoustics, Speech, and Signal Processing, 1991. ICASSP-91.,
  1991 International Conference on}, pages 3109--3112. IEEE, 1991.

\bibitem{CardosoS93}
J.-F. Cardoso and A.~Souloumiac.
\newblock Blind beamforming for non-gaussian signals.
\newblock In {\em Radar and Signal Processing, IEE Proceedings F}, volume 140,
  pages 362--370, 1993.

\bibitem{ComonJutten}
P.~Comon and C.~Jutten, editors.
\newblock {\em Handbook of Blind Source Separation}.
\newblock Academic Press, 2010.

\bibitem{bookDasgupta}
A.~Dasgupta.
\newblock {\em Probability for Statistics and Machine Learning}.
\newblock Springer, 2011.

\bibitem{dasgupta99}
S.~Dasgupta.
\newblock Learning {M}ixture of {G}aussians.
\newblock In {\em 40th Annual Symposium on Foundations of Computer Science},
  1999.

\bibitem{dasgupta00}
S.~Dasgupta and L.~Schulman.
\newblock A {T}wo {R}ound {V}ariant of {EM} for {G}aussian {M}ixtures.
\newblock In {\em 16th Conference on Uncertainty in Artificial Intelligence},
  2000.

\bibitem{feldman06}
J.~Feldman, R.~A. Servedio, and R.~O'Donnell.
\newblock P{AC} {L}earning {A}xis {A}ligned {M}ixtures of {G}aussians with {N}o
  {S}eparation {A}ssumption.
\newblock In {\em The 19th Annual Conference on Learning Theory}, 2006.

\bibitem{GVX}
N.~Goyal, S.~Vempala, and Y.~Xiao.
\newblock Fourier {PCA}.
\newblock {\em CoRR}, http://arxiv.org/abs/1306.5825, 2013.

\bibitem{HsuK13}
D.~Hsu and S.~M. Kakade.
\newblock Learning mixtures of spherical {G}aussians: moment methods and
  spectral decompositions.
\newblock In {\em ITCS}, pages 11--20, 2013.

\bibitem{Kendall94}
M.~Kendall, A.~Stuart, and J.~K. Ord.
\newblock {\em Kendall's advanced theory of statistics. {V}ol. 1}.
\newblock Halsted Press, sixth edition, 1994.
\newblock Distribution theory.

\bibitem{moitra10}
A.~Moitra and G.~Valiant.
\newblock Settling the polynomial learnability of mixtures of {G}aussians.
\newblock In {\em 51st Annual IEEE Symposium on Foundations of Computer Science
  (FOCS 2010)}, 2010.

\bibitem{MOS}
E.~Mossel, R.~O'Donnell, and K.~Oleszkiewicz.
\newblock Noise stability of functions with low influences: Invariance and
  optimality.
\newblock {\em Annals of Math.}, 171:295--341, 2010.

\bibitem{nielsen1997introduction}
O.~A. Nielsen.
\newblock {\em An Introduction to Integration Theory and Measure Theory}.
\newblock Wiley, 1997.

\bibitem{Stirling}
B.~Rennie and A.~Dobson.
\newblock On {Stirling} numbers of the second kind.
\newblock {\em Journal of Combinatorial Theory}, 7(2):116 -- 121, 1969.

\bibitem{rieger2010sampling}
C.~Rieger and B.~Zwicknagl.
\newblock Sampling inequalities for infinitely smooth functions, with
  applications to interpolation and machine learning.
\newblock {\em Advances in Computational Mathematics}, 32(1):103--129, 2010.

\bibitem{Riordan}
J.~Riordan.
\newblock Moment recurrence relations for binomial, poisson and hypergeometric
  frequency distributions.
\newblock {\em Annals of Mathematical Statistics}, 8:103--111, 1937.

\bibitem{royden1988real}
H.~L. Royden, P.~Fitzpatrick, and P.~Hall.
\newblock {\em Real analysis}, volume~4.
\newblock Prentice Hall New York, 1988.

\bibitem{RudelsonVershynin}
M.~Rudelson and R.~Vershynin.
\newblock Smallest singular value of a random rectangular matrix.
\newblock {\em Comm. Pure Appl. Math.}, 62(12):1707--1739, 2009.

\bibitem{vempala02}
S.~Vempala and G.~Wang.
\newblock A {S}pectral {A}lgorithm for {L}earning {M}ixtures of
  {D}istributions.
\newblock In {\em 43rd Annual Symposium on Foundations of Computer Science},
  2002.

\bibitem{wendland2005scattered}
H.~Wendland.
\newblock {\em Scattered data approximation}, volume~17.
\newblock Cambridge University Press Cambridge, 2005.

\bibitem{Winkelbauer2012}
A.~{Winkelbauer}.
\newblock {Moments and Absolute Moments of the Normal Distribution}.
\newblock {\em ArXiv e-prints}, Sept. 2012.

\end{thebibliography}

\appendix

\section{Theorem~\ref{thm:correctness} Proof Details}\label{sec:mainproof}

\subsection{Error Analysis of the Ideal Noisy ICA Model}
\label{subsec:IdealErrorAnalysis}

The proposed full reduction from Section \ref{sec:reduction} provides us with two models.
The first is a noisy ICA model from which we cannot sample:
\begin{align}
\text{(Ideal ICA)} \qquad  X' &= A'S' + \eta'(\tau) \ . \label{A'_ICAModel_ideal}
\end{align}
The second is a model that fails to satisfy the assumption that $S'$ has independent coordinates, but it is a model from which we can sample:
\begin{align}
\text{(Approximate ICA)} \qquad  X' &= (A'S' + \eta'(\tau))|_{R \leq \tau} \ . \label{A'_ICAModel_actual}
\end{align}
Both models rely on the choice of two parameters, $\lambda$ and $\tau$.
The dependence on $\tau$ is explicit in the models.  
The dependence on $\lambda$ can be summarized in the unrestricted model as $S_i = \frac 1 {\norm{\mu_i'}} S_i' \sim \Poisson(w_i \lambda)$ independently of each other, and $R = \sum_{i=1}^m S_i \sim \Poisson(\lambda)$.

The probability of choosing $R > \tau$ will be
seen to be exponentially small in $\tau$.  For this reason, running \textbf{UnderdeterminedICA}
with polynomially many samples from model
\eqref{A'_ICAModel_ideal} will with high probability be equivalent to
running the ICA Algorithm with samples from model
\eqref{A'_ICAModel_actual}.  This notion will be made
precise later using total variation distance.


For the remainder of this subsection, we proceed as if samples are drawn from
the ideal noisy ICA model \eqref{A'_ICAModel_ideal}.  Thus, to recover the columns of $A'$, it suffices to run \textbf{UnderdeterminedICA} on samples of $X'$.
Theorem \ref{thm:UICA_noisy} can be used for this analysis so long as we can
obtain the necessary bounds on the cumulants of $S'$, moments of $S'$, and the moments of $\eta'(\tau)$.
We define $w_{\min} := \min_{i} w_i$ and $w_{\max} := \max_{i} w_i$.
Then, the cumulants of $S'$ are bounded by the following lemma:

\begin{lemma} \label{lem:cumS_i'}
  Given $\ell \in \Z^+$, $\cum_\ell(S_i') \geq w_i \lambda$ for each $S_i'$.
  In particular, then $\cum_\ell(S_i') \geq w_{\min} \lambda$.
\end{lemma}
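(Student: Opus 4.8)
The plan is to reduce the statement to two elementary facts, both essentially recorded in the Preliminaries: that every cumulant of a Poisson random variable equals its rate parameter, and that cumulants are homogeneous of degree $\ell$. Recall from the reduction that $S_i \sim \Poisson(w_i \lambda)$ and that $S_i' := \norm{\mu_i'} S_i$, where $\mu_i' = (\mu_i^T, 1)^T$. So the whole proof is a short computation once the Poisson cumulant fact is in hand.

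First I would establish that for $X \sim \Poisson(\mu)$ we have $\cum_\ell(X) = \mu$ for every $\ell \geq 1$. This follows directly from the power-series definition of cumulants via the characteristic function given in Section~\ref{sec:Prelims}: since $\log \E{e^{itX}} = \mu(e^{it}-1) = \mu \sum_{j=1}^{\infty} \frac{(it)^j}{j!}$, matching coefficients against $\sum_{j=1}^{\infty} \cum_j(X)\frac{(it)^j}{j!}$ yields $\cum_j(X) = \mu$ for all $j$. Applying this with $\mu = w_i\lambda$ gives $\cum_\ell(S_i) = w_i\lambda$.

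Next I would invoke the degree-$\ell$ homogeneity of cumulants, namely $\cum_\ell(\alpha X) = \alpha^\ell \cum_\ell(X)$, to pass from $S_i$ to $S_i'$:
\[
\cum_\ell(S_i') = \cum_\ell\bigl(\norm{\mu_i'} S_i\bigr) = \norm{\mu_i'}^\ell \, \cum_\ell(S_i) = \norm{\mu_i'}^\ell \, w_i \lambda .
\]
Finally, since $\norm{\mu_i'} = \sqrt{\norm{\mu_i}^2 + 1} \geq 1$ and $\ell \geq 1$, we have $\norm{\mu_i'}^\ell \geq 1$, so $\cum_\ell(S_i') \geq w_i \lambda$; the second inequality then follows from $w_i \geq w_{\min}$.

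There is no real obstacle here: the only point requiring any care is the Poisson cumulant identity, and even that is standard and follows immediately from the cumulant generating function. The embedding coordinate (the appended $1$ in $\mu_i'$) is exactly what guarantees $\norm{\mu_i'} \geq 1$ and hence that the scaling factor $\norm{\mu_i'}^\ell$ can only help the lower bound, so no genericity or nondegeneracy assumption on the means is needed for this particular lemma.
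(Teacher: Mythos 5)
Your proof is correct and follows essentially the same route as the paper's: homogeneity of cumulants gives $\cum_\ell(S_i') = \norm{\mu_i'}^\ell \cum_\ell(S_i)$, the Poisson cumulant identity gives $\cum_\ell(S_i) = w_i\lambda$, and the appended coordinate forces $\norm{\mu_i'} \geq 1$. The only cosmetic difference is that you derive the Poisson cumulant fact from the characteristic function, whereas the paper cites its Lemma~\ref{lem:PoissonCumulants}, which does the identical computation via the moment generating function.
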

\begin{proof}
  By construction, $S'_i = \norm{\mu_i'}S_i$.
  By the homogeneity property of univariate cumulants,
  \[
    \cum_\ell(S'_i) = \cum_\ell(\norm{\mu_i'}S_i) 
      = \norm{\mu_i'}^\ell \cum_\ell(S_i)
  \]
  As $\mu_i'(n+1) = 1$, $\norm{\mu_i'} \geq 1$. \details{Aha! This is one of the steps that would kill invariance under scaling of the algorithm.}
  The cumulants of the Poisson distribution are given in Lemma \ref{lem:PoissonCumulants}.  It follows that $\cum_{\ell}(S_i') \geq \cum_{\ell}(S_i) = w_i \lambda$.
\end{proof}

The bounds on the moments of $S'_i$ for each $i$ can be computed using the following lemma:
\begin{lemma} \label{lem:expectation_S_i'}
For $\ell \in \Z^+$, we have $\E{S_i'^\ell} \leq (\norm{\mu_i'}w_i\lambda)^\ell \ell^\ell$.
\end{lemma}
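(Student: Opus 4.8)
The plan is to strip off the deterministic scaling first and then reduce everything to bounding the raw moments of a single Poisson variable. By construction $S_i' = \norm{\mu_i'} S_i$ with $S_i \sim \Pois(w_i\lambda)$ (Lemma~\ref{lem:Poisson-independence}), exactly as in the proof of Lemma~\ref{lem:cumS_i'}. Since scaling by the constant $\norm{\mu_i'}$ is homogeneous on raw moments, $\E{S_i'^\ell} = \norm{\mu_i'}^\ell \E{S_i^\ell}$. Comparing against the target $(\norm{\mu_i'}w_i\lambda)^\ell \ell^\ell = \norm{\mu_i'}^\ell (w_i\lambda)^\ell \ell^\ell$, it therefore suffices to show that a single Poisson variable $X \sim \Pois(\theta)$ with $\theta := w_i\lambda$ satisfies $\E{X^\ell} \le (\theta\ell)^\ell$.

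For the Poisson moments I would invoke the Touchard expansion in terms of Stirling numbers of the second kind. The factorial moments of a Poisson are clean, $\E{X(X-1)\cdots(X-k+1)} = \theta^k$, so expanding the power $X^\ell$ in the basis of falling factorials (which is exactly what Stirling numbers of the second kind encode) gives $\E{X^\ell} = \sum_{k=1}^\ell \stirling{\ell}{k}\theta^k$, where $\stirling{\ell}{k}$ is the number of partitions of an $\ell$-element set into $k$ nonempty blocks. As a self-contained alternative, this same identity follows by induction from the Poisson moment recurrence $\E{X^{\ell+1}} = \theta\sum_{k=0}^\ell \binom{\ell}{k}\E{X^k}$.

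It then remains to bound the Stirling sum. Using $\theta^k \le \theta^\ell$ for every $1 \le k \le \ell$, we obtain $\E{X^\ell} \le \theta^\ell \sum_{k=1}^\ell \stirling{\ell}{k} = \theta^\ell B_\ell$, where $B_\ell$ is the $\ell$th Bell number. Finally $B_\ell \le \ell^\ell$: every partition of $[\ell]$ is recorded injectively by a function $[\ell] \to [\ell]$ (send each element to the index of its block), and there are only $\ell^\ell$ such functions. Combining, $\E{X^\ell} \le \theta^\ell \ell^\ell = (\theta\ell)^\ell$, and multiplying through by $\norm{\mu_i'}^\ell$ yields the claim.

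The one delicate point—the step I expect to need the most care—is the monotonicity $\theta^k \le \theta^\ell$, which requires $\theta = w_i\lambda \ge 1$; for $\theta < 1$ the small-$k$ terms (in particular the $k=1$ term $\stirling{\ell}{1}\theta = \theta$) dominate and the clean bound $(\theta\ell)^\ell$ can fail. I would therefore confirm that the parameter regime of interest (recall $\lambda = m$) guarantees $w_i\lambda \ge 1$, so that the Stirling sum is controlled by its top term; absent that, the same argument still delivers the always-valid weaker estimate $\E{X^\ell} \le \max(\theta,\theta^\ell)\,\ell^\ell$ via $\theta^k \le \max(\theta,\theta^\ell)$. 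Everything else—the homogeneity reduction, the Touchard identity, and the elementary Bell-number bound—is routine and carries no further subtlety.
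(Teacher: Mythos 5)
Your proof is essentially the paper's: the paper also reduces to $S_i \sim \Pois(w_i\lambda)$ by homogeneity, invokes the Touchard/Stirling expansion $\E{Y^\ell} = \sum_{k=1}^{\ell}\alpha^k\stirling{\ell}{k}$, and finishes with elementary combinatorial bounds. The only cosmetic difference is how the final estimate is split: you bound $\alpha^k \le \alpha^\ell$ and then $\sum_{k=1}^{\ell}\stirling{\ell}{k} = B_\ell \le \ell^\ell$ via the Bell number, while the paper bounds each Stirling number by $\ell^{\ell-1}$ (its Lemma~\ref{lem:StirlingBound_n}) and then uses $\sum_{k=1}^{\ell}\alpha^k \le \ell\alpha^{\ell}$.

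The ``delicate point'' you flag is, however, not just a caution --- it is a genuine gap in the paper's own argument. The paper's step $\sum_{k=1}^{\ell}\alpha^k\ell^{\ell-1} \le \ell\,\alpha^{\ell}\ell^{\ell-1}$ silently assumes $\alpha = w_i\lambda \ge 1$, exactly as your step $\theta^k \le \theta^{\ell}$ does, and the lemma as stated is false when $w_i\lambda < 1$: for $\ell = 2$ and small $\theta$ one has $\E{X^2} = \theta + \theta^2 > 4\theta^2$. Since the algorithm sets $\lambda = m$ and the weights satisfy only $\sum_i w_i = 1$, one can guarantee just $w_{\max}\lambda \ge 1$; any component with $w_i < 1/m$ violates the hypothesis your proof (and, implicitly, the paper's) needs. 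The downstream application in Theorem~\ref{thm:ICA_ideal_lifted} survives because there the bound is used only after replacing $w_i$ by $w_{\max}$: monotonicity of $\sum_k \stirling{\ell}{k}\theta^k$ in $\theta$ holds term by term, and with $\theta = w_{\max}m \ge 1$ the clean bound $(\norm{\mu_i'}w_{\max}\lambda)^{\ell}\ell^{\ell}$ is valid. So your proposal, which makes the hypothesis $w_i\lambda \ge 1$ explicit and otherwise falls back to the always-valid $\max(\theta,\theta^{\ell})\,\ell^{\ell}$, is in this respect more careful than the paper's proof.
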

\begin{proof}
  Let $Y$ denote a random variable drawn from $\Poisson(\alpha)$.
  It is known (see \cite{Riordan}) that
  \[
    \E{Y^\ell} = \sum_{i=1}^\ell \alpha^i \stirling{\ell}{i}
  \]
  where $\stirling{\ell}{i}$ denotes Stirling number of the second kind.
  Using Lemma \ref{lem:StirlingBound_n}, it follows that
  \[
    \E{Y^\ell} \leq \sum_{i=1}^\ell \alpha^{i} \ell^{\ell-1}
      \leq \ell \alpha^{\ell} \ell^{\ell-1} 
      = \alpha^\ell\ell^\ell. 
  \]
  Since $S_i' = \mu_i'S_i$ where $S_i \sim \Poisson(\lambda w_i)$, it follows that $\E{S_i'^\ell} = \norm{\mu_i'}^\ell\E{S_i^\ell} \leq \norm{\mu_i'}^\ell (w_i \lambda_i)^\ell \ell^\ell$.
\end{proof}

The absolute moments of Gaussian random variables are well known.  For
completeness, the bounds are provided in Lemma \ref{lem:GaussianMoments} of
Appendix \ref{sec:GaussMoments}.

Defining $\sigma = \sup_{v \in S^{n-1}}\sqrt{\Var(v^T\eta'(1))}$; vectors $\mu_{\max}' = \max_{i}\norm{\mu_i'}$, $\mu_{\min}' = \min_{i}\norm{\mu_i'}$, and similarly $\mu_{\max}$ and $\mu_{\min}$ for later; and choosing $\lambda = m$, we can now show a polynomial bound for the error in recovering the columns of $A'$
using \textbf{UnderdeterminedICA}.
\begin{theorem}[ICA specialized to the ideal case] \label{thm:ICA_ideal_lifted}
  Suppose that samples of $X'$ are taken from the 
  unrestricted ICA model \eqref{eq:full-approximate-model} choosing parameter $\lambda = m$ and $\tau$ a constant.
  Suppose that \textbf{UnderdeterminedICA} is run using these samples.
  Suppose $\sigma_m(A'^{\odot d/2}) > 0$.  Fix $\epsilon \in (0, 1/2)$ and $\delta \in (0, 1/2)$. 
  Then with probability $1-\delta$, when the number of samples $N$ is:
   \begin{align}
    N & \geq \poly\left(n^d, m^{d^2}, 
      (\tau\sigma)^{d^2}, \norm{\mu'_{\max}}^{d^2}, \left({w_{\max}}/{w_{\min}}\right)^{d^2},
      d^{d^2}, 
      1/\sigma_m(A'^{\odot d/2})^d, 1/\epsilon, 1/\delta \right) \label{eq:poly_bound_lifted_recovery}
  \end{align}
  the columns of $A'$ are recovered within error $\epsilon$ up to their signs.
  That is, denoting the
  columns returned from \textbf{UnderdeterminedICA} by $\tilde A_1', \dotsc, \tilde A_m'$, 
  there exists $\alpha_1, \cdots, \alpha_m,  \in \{-1, +1\}$ and a permutation $p$ of $[m]$ such that
    $\norm{A'_i - \alpha_i\tilde A'_{p(i)}} < \epsilon$
  for each $i$\lnote{prime/no prime?}.
\end{theorem}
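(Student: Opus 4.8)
The plan is to invoke the black-box guarantee for underdetermined ICA, Theorem~\ref{thm:UICA_noisy}, essentially as a subroutine, after checking that the ideal model and the supplied parameters meet its hypotheses. First I would confirm that \eqref{A'_ICAModel_ideal} is a genuine noisy ICA instance. By Lemma~\ref{lem:Poisson-independence}, taking $\lambda = m$, the counts $S_i \sim \Poisson(w_i \lambda)$ are mutually independent; hence so are the scaled signals $S_i' = \norm{\mu_i'} S_i$, and the additive term $\eta'(\tau) \sim \Normal(0, \tau \Sigma')$ is an independent Gaussian. Thus $X' = A'S' + \eta'(\tau)$ matches the noisy ICA template with mixing matrix $A'$ whose columns have unit norm, and recovery of its columns up to sign and permutation is exactly what Theorem~\ref{thm:UICA_noisy} delivers.

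Next I would assemble the quantities that Theorem~\ref{thm:UICA_noisy} requires as input. The crucial structural point is that each $S_i'$ has a \emph{nonvanishing} cumulant of the relevant order: by Lemma~\ref{lem:PoissonCumulants} every Poisson cumulant equals $w_i \lambda > 0$, so the hypothesis that some low-order cumulant is bounded away from zero is met, and I may fix the cumulant order to $k = d+1$. The lower bound $\Delta$ on the cumulant then comes directly from Lemma~\ref{lem:cumS_i'}, namely $\cum_{d+1}(S_i') \geq w_{\min}\lambda$. The moment bound $M$ is obtained by combining Lemma~\ref{lem:expectation_S_i'}, which controls the moments of $S_i'$ by $(\norm{\mu_i'} w_i \lambda)^\ell \ell^\ell$, with the Gaussian absolute-moment bound of Lemma~\ref{lem:GaussianMoments} applied to $\eta'(\tau)$, using $\sigma$ for the maximal directional standard deviation and $\tau$ for the noise scale. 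The conditioning parameter fed to the algorithm is precisely the hypothesis $\sigma_m(A'^{\odot d/2}) > 0$.

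With these three ingredients (the cumulant lower bound $\Delta$, the moment upper bound $M$, and the conditioning $\sigma_m(A'^{\odot d/2})$), Theorem~\ref{thm:UICA_noisy} guarantees that, for a number of samples $N$ at least the sample-complexity polynomial it promises, the returned vectors $\tilde A_1', \dots, \tilde A_m'$ satisfy $\norm{A_i' - \alpha_i \tilde A_{p(i)}'} < \epsilon$ for suitable signs $\alpha_i$ and a permutation $p$, with probability $1-\delta$. The last step is to substitute $\lambda = m$ everywhere and collect the supplied bounds into the single polynomial \eqref{eq:poly_bound_lifted_recovery}; this is routine once the degree dependence coming from Theorem~\ref{thm:UICA_noisy} is tracked.

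The conceptual content here is light: the two facts that actually drive the result are (i) the independence of the coordinates of $S'$, which is manufactured by Poissonization via Lemma~\ref{lem:Poisson-independence}, and (ii) that Poisson signals have nonvanishing cumulants of every order, which is what makes the ICA separation applicable at all. I expect the main obstacle to be purely in the bookkeeping: correctly propagating the exponents through the moment bounds (which enter raised to powers growing with the tensor order $d$, producing the $d^2$-type dependence) so that $\tau$, $\sigma$, $\norm{\mu_{\max}'}$, $w_{\max}/w_{\min}$, and $d$ appear with the exponents matching \eqref{eq:poly_bound_lifted_recovery}, and in verifying that the choice $k = d+1$ is consistent with the tensor order used by \textbf{UnderdeterminedICA}.
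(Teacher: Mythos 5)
Your proposal is correct and follows essentially the same route as the paper's proof: verify that the ideal model is a legitimate noisy ICA instance via Poissonization, fix the cumulant order $k = d+1$, obtain the cumulant lower bound $\Delta$ from Lemma~\ref{lem:cumS_i'}, obtain the moment bound $M$ by combining Lemma~\ref{lem:expectation_S_i'} with the Gaussian moment bound of Lemma~\ref{lem:GaussianMoments}, and then invoke Theorem~\ref{thm:UICA_noisy} as a black box. The only cosmetic difference is that the paper further massages the bounds (e.g.\ rewriting $w_{\min}\lambda$ as $w_{\min}/w_{\max}$ using $w_{\max} \geq 1/m$, and noting that nonnegativity of the Poisson signals lets moments stand in for absolute moments) to express $\Delta$ and $M$ in the ``condition number'' form appearing in \eqref{eq:poly_bound_lifted_recovery}.
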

\begin{proof}
   Obtaining the sample bound is an exercise of rewriting the parameters associated with the model $X' = A'S' + \eta'(\tau)$ in a way which can be used by Theorem \ref{thm:UICA_noisy}.
  In what follows, where new parameters are introduced without being described, they will correspond to parameters of the same name defined in and used by the statement of Theorem \ref{thm:UICA_noisy}.
  
  Parameter $d$ is fixed.  We must choose $k_1, \dotsc, k_m$ and $k$ such that $d < k_i \leq k$ and $\cum_{k_i}(S_i')$ is bounded away from $0$.
  It suffices to choose $k_1 = \dotsb = k_m = k = d+1$.
  By Lemma \ref{lem:cumS_i'}, $\cum_{d+1}(S_i') \geq w_{\min} \lambda = w_{\min} m$ for each $i$.  As 
  $w_{\max} \geq \frac 1 m \sum_{i=1}^m w_i = \frac 1 m$, we have that $\cum_{d+1}(S_i') \geq \frac {w_{\min}}{w_{\max}}$ for each $i$, giving a somewhat more natural condition number.  
  In the notation of Theorem \ref{thm:UICA_noisy}, we have a constant
  \begin{equation} \label{eq:DeltaChoice}
    \Delta = \frac{w_{\min}}{w_{\max}}
  \end{equation}
  such that $\cum_{d+1}(S_i') \geq \Delta$ for each $i$.
  
  Now we consider the upper bound $M$ on the absolute moments of both 
  $\eta'(\tau)$ and on $S_i'$.  As the Poisson distribution takes on non-negative
  values, it follows that $S_i' = \norm{\mu_i'}S_i$ takes on non-negative values.
  Thus, the moments and absolute moments of $S_i'$ coincide.
  Using Lemma \ref{lem:expectation_S_i'}, we have that $\E{\abs{S_i'}^{d+1}}
  = \E{(S_i')^{d+1}} \leq (\norm{\mu_i'}w_i\lambda)^{d+1} (d+1)^{d+1}$.  Thus,
  for $M$ to bound the $(d+1)$\textsuperscript{th} moment of $S_i'$, it suffices
  that $M \geq (\norm{\mu_{\max}'}w_{\max} \lambda)^{d+1} (d+1)^{d+1}$.
  Noting that 
  \[
    w_{\max} \lambda = w_{\max} m = \frac{w_{\max}}{1/m} \leq \frac{w_{\max}}{w_{\min}}
  \]
  it suffices that $M \geq (\norm{\mu_{\max}'}\frac{w_{\max}}{w_{\min}})^{d+1} (d+1)^{d+1}$,
  giving a more natural condition number.
  
  Now we bound the absolute moments of the Gaussian distribution.  
  As $d \in 2\N$, it follows that $d+1$ is odd.
  Given a unit vector $u \in \R^n$, it follows from Lemma \ref{lem:GaussianMoments} that 
  \[
    \E{\abs{\langle u, \eta'(\tau) \rangle}^{d+1}}
      = \Var(\langle u, \eta'(\tau)\rangle)^{\frac {(d+1)} 2} 2^{d/2}\left(d / 2\right) ! \frac 1 {\sqrt \pi} = \tau^{d+1}\Var(\langle u, \eta'(1) \rangle)^{\frac{(d+1)}2} 2^{d/2}\left(d / 2\right) ! \frac 1 {\sqrt \pi} \ .
  \]
  $\sigma$ gives a clear upper bound for $\Var(\langle u, \eta'(1) \rangle)^{1/2}$, and $(d+1)^{d+1}$ 
  gives a clear upper bound to $\frac 1 {\sqrt \pi} 2^{d/2}(d / 2)!$. 
  As such, it suffices that $M \geq (\tau \sigma)^{d+1}(d+1)^{d+1}$ in order to
  guarantee that $M \geq \E{\abs{\langle u, \eta'(\tau) \rangle}^{d+1}}$.
  Using the obtained bounds for $M$ from the Poisson and Normal variables,
  it suffices that $M$ be taken such that 
  \begin{equation} \label{eq:MChoice}
        M \geq \max \left((\tau \sigma)^{d+1}, (\norm{\mu_{\max}'} \frac{w_{\max}}{w_{\min}})^{d+1}\right) (d+1)^{d+1}
  \end{equation}
  to guarantee that $M$ bounds all required order $d+1$ absolute moments.
  
  We can now apply Theorem \ref{thm:UICA_noisy}, using the parameter values $k = d+1$,
  $\Delta$ from \eqref{eq:DeltaChoice}, and $M$ from \eqref{eq:MChoice}.
  Then with probability $1 - \delta$,
  \begin{align}
    N &\geq \poly\left(n^{2d+1}, m^{d^2}, 
      (\tau \sigma)^{d^2}, \norm{\mu_{\max}'}^{d^2}, ({w_{\max}}/{w_{\min}})^{d^2},
      (d+1)^{d^2}, \right. \notag \\
      & \quad \quad \quad \quad \left.
      1/\sigma_m(A'^{\odot d/2})^{d+1}, 1/\epsilon, 1/\delta \right) \label{eq:N_poly_with_mu_prime}
  \end{align} 
  samples suffice to recover up to sign the columns of $A'$ within $\epsilon$ accuracy.  
  More precisely, letting 
  $\tilde A_1', \dotsc, \tilde A_m'$ give the columns produced by 
   \textbf{UnderdeterminedICA}, then there exists parameters $\alpha_1, \dotsc, \alpha_m$ such that $\alpha_i \in \{-1, +1\}$ captures the sign 
  indeterminacy, and a permutation $p$ on $[m]$ such that $\norm{A_i' - \tilde A_{p(i)}'} < \epsilon$ for each $i$.
  
  The poly bound in \eqref{eq:N_poly_with_mu_prime} is equivalent to the poly bound in \eqref{eq:poly_bound_lifted_recovery}.
\end{proof}

Theorem \ref{thm:ICA_ideal_lifted} allows us to recover the columns of $A'$ up
to sign.  However, what we really want to recover are the means of the original
Gaussian mixture model, which are the columns of $A$.  Recalling the
correspondence between $A'$ and $A$ laid out in section \ref{sec:reduction},
the Gaussian means $\mu_1, \dotsc, \mu_m$ which form the columns of $A$ are
related to the columns $\mu_1', \dotsc, \mu_m'$ of $A'$ by the rule 
$\mu_i = \mu_i'(1:n) / \mu_i'(n+1)$.  Using this rule, we can construct estimate
the Gaussian means from the estimates of the columns of $A'$.  By propagating
the errors from Theorem \ref{thm:ICA_ideal_lifted}, we arrive at the following
result:

\begin{theorem}[Recovery of Gaussian means in Ideal Case] \label{thm:MeanRecoveryIdeal}
  Suppose that \textbf{UnderdeterminedICA} is run using samples of $X'$ from the
  ideal noisy ICA model \eqref{A'_ICAModel_ideal} choosing parameters $\lambda = m$ and $\tau$ a constant.
  Define $B\in \R^{n \times m}$ such that $B_i = A_i / \norm{A_i}$.  
  Suppose further that $\sigma_m(B^{\odot d/2}) > 0$.  Let 
  $\tilde A_1', \cdots, \tilde A_m'$ be the returned estimates of the columns of $A'$ (from model \eqref{A'_ICAModel_ideal}) by \textbf{UnderdeterminedICA}.
  Let $\tilde \mu_i = \tilde A_i'(1:n) / \tilde A_i'(n+1)$ for each $i$.
  Fix error parameters $\epsilon \in (0, 1/2)$ and $\delta \in (0, 1/2)$.
  When at least
  \begin{align}
    N & \geq \poly\left(n^{d}, m^{d^2}, 
      (\tau \sigma)^{d^2}, \norm{\mu_{\max}}^{d^2}, \left(\frac{w_{\max}}{w_{\min}}\right)^{d^2},
      d^{d^2}, 
      \left(\frac{\norm{\mu_{\max}} + 1}{\norm{\mu_{\min}}}\right)^{d^2}, \frac 1 {\sigma_m(B^{\odot d/2})^{d}}, \frac 1 \epsilon, \frac 1 \delta \right)
    \label{eq:poly_bound_ideal_means}
  \end{align}
  samples are used, then with probability $1 - \delta$ there exists a permutation
  $p$ of $[m]$ such that $\norm{\tilde \mu_{p(i)} - \mu_i} < \epsilon$ for each $i$.
\end{theorem}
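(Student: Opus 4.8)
The plan is to deduce this from Theorem~\ref{thm:ICA_ideal_lifted} in two moves: propagate the column-recovery error of $A'$ through the de-normalization map $v \mapsto v(1{:}n)/v(n{+}1)$, and then rewrite the sample bound, which Theorem~\ref{thm:ICA_ideal_lifted} phrases in terms of $\sigma_m(A'^{\odot d/2})$ and $\norm{\mu_{\max}'}$, in terms of the quantities $\sigma_m(B^{\odot d/2})$ and $\norm{\mu_{\max}}$ appearing in the statement. First I would invoke Theorem~\ref{thm:ICA_ideal_lifted} with an auxiliary target accuracy $\epsilon^\ast$ (to be fixed below), obtaining, with probability $1-\delta$, signs $\alpha_i \in \{-1,+1\}$ and a permutation $p$ with $\norm{A_i' - \alpha_i \tilde A_{p(i)}'} < \epsilon^\ast$. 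This application needs $\sigma_m(A'^{\odot d/2}) > 0$, which I will obtain from the hypothesis $\sigma_m(B^{\odot d/2}) > 0$ in the third step.

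Next I would bound the error of $\tilde\mu_i = \tilde A_i'(1{:}n)/\tilde A_i'(n{+}1)$. The useful observation is that this quotient is invariant under the global sign $\alpha_i$, since the sign cancels between numerator and denominator; hence the ICA sign indeterminacy disappears and we may compare $\tilde A_{p(i)}'$ directly to $A_i' = (\mu_i^T,1)^T/\norm{\mu_i'}$. Here $A_i'(n{+}1) = 1/\norm{\mu_i'} = 1/\sqrt{\norm{\mu_i}^2+1} \ge 1/\sqrt{1+u^2}$, and $\norm{A_i'(1{:}n)} \le 1$. A first-order quotient-perturbation estimate, valid once $\epsilon^\ast < 1/2$ keeps $\tilde A_i'(n{+}1)$ bounded away from $0$, then gives $\norm{\tilde\mu_i - \mu_i} \le \epsilon^\ast\bigl(\sqrt{1+u^2} + 2(1+u^2)\bigr)$. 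Choosing $\epsilon^\ast = \epsilon\bigl(\sqrt{1+u^2} + 2(1+u^2)\bigr)^{-1}$, exactly as in Algorithm~\ref{alg:reduction}, yields $\norm{\tilde\mu_i - \mu_i} < \epsilon$ for each $i$. Since $\epsilon^\ast$ differs from $\epsilon$ only by a $\poly(u)$ factor, substituting it into the bound of Theorem~\ref{thm:ICA_ideal_lifted} changes that bound only by polynomial factors that are absorbed into \eqref{eq:poly_bound_ideal_means}.

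The main obstacle is the third step: relating $\sigma_m(A'^{\odot d/2})$ and $\sigma_m(B^{\odot d/2})$, which are singular values of matrices living in different ambient dimensions. The key is that the rows of $A'^{\odot d/2}$ indexed by degree-$d/2$ monomials that never touch the appended coordinate form, after flattening, precisely the matrix $B^{\odot d/2} D$, where $D$ is the diagonal matrix with $D_{ii} = \bigl(\norm{\mu_i}/\norm{\mu_i'}\bigr)^{d/2}$ (both columns carry the factor $(\mu_i)^{\otimes d/2}$, differing only in whether the normalization uses $\norm{\mu_i}$ or $\norm{\mu_i'}$). Deleting rows cannot increase the smallest singular value, so $\sigma_m(A'^{\odot d/2}) \ge \sigma_m(B^{\odot d/2} D) \ge \sigma_m(B^{\odot d/2})\,\sigma_{\min}(D)$, and $\sigma_{\min}(D) = \min_i\bigl(\norm{\mu_i}/\sqrt{\norm{\mu_i}^2+1}\bigr)^{d/2}$ is bounded below by a power of $1/r$ because $r \ge (\norm{\mu_{\max}}+1)/\norm{\mu_{\min}}$ forces $\norm{\mu_{\min}} \ge 1/(r-1)$. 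This simultaneously certifies $\sigma_m(A'^{\odot d/2}) > 0$ (closing the gap left in the first step) and lets me replace $1/\sigma_m(A'^{\odot d/2})^{d+1}$ by a factor polynomial in $r^{d^2}$ times $1/\sigma_m(B^{\odot d/2})^{d}$. Finally, converting $\norm{\mu_{\max}'}^{d^2} = (\norm{\mu_{\max}}^2+1)^{d^2/2}$ into $\norm{\mu_{\max}}^{d^2}$ and folding all the extra polynomial factors into the bound produces exactly \eqref{eq:poly_bound_ideal_means}, completing the argument.
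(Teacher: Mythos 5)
Your proposal is correct and takes essentially the same route as the paper's proof: invoke Theorem~\ref{thm:ICA_ideal_lifted} at an auxiliary accuracy $\epsilon^*$, propagate the error through the quotient map (with the sign indeterminacy canceling in the division) using exactly the paper's choice $\epsilon^* = \epsilon\bigl(\sqrt{1+u^2}+2(1+u^2)\bigr)^{-1}$, and control $\sigma_m(A'^{\odot d/2})$ in terms of $\sigma_m(B^{\odot d/2})$ via row-deletion monotonicity of the least singular value plus diagonal rescaling. Your one-step identification of the rows of $A'^{\odot d/2}$ that avoid the appended coordinate with $B^{\odot d/2}D$, $D_{ii}=(\norm{\mu_i}/\norm{\mu_i'})^{d/2}$, is just a compressed form of the paper's three-link chain through $A^{\odot d/2}$ and the unnormalized lifted matrix, and it yields the same $r^{d/2}$-type factor absorbed into \eqref{eq:poly_bound_ideal_means}.
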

\begin{proof}
  Let $\epsilon^* > 0$ (to be chosen later) give a desired bound on the 
  errors of the columns of $A'$.  Then, from Theorem \ref{thm:ICA_ideal_lifted},
  using
  \begin{align}
    N &\geq \poly\left(n^{d}, m^{d^2}, 
      (\tau\sigma)^{d^2}, \norm{\mu_{\max}'}^{d^2}, \left({w_{\max}}/{w_{\min}}\right)^{d^2},
      d^{d^2}, 
      1/\sigma_m(A'^{\odot d/2})^{d}, 1/\epsilon^*, 1/\delta \right) \label{eq:sample_bound_for_err_prop0}
  \end{align}
  samples suffices with probability $1 - \delta$ to produce column estimates $\tilde A_1', \dotsc, \tilde A_m'$ such
  that for an unknown permutation $p$ and signs $\alpha_1, \dotsc, \alpha_m$,
  $\alpha_{p(1)}\tilde A_{p(1)}', \dotsc, \alpha_{p(m)}\tilde A_{p(m)}'$ give
  $\epsilon^*$-close estimates of the columns $A_1', \dotsc, A_m'$ 
  respectively of $A'$.
  In order to avoid notational clutter, we will assume without
  loss of generality that $p$ is the identity map, and hence that 
  $\norm{\alpha_i\tilde A_i' - \alpha A_i'} < \epsilon^*$ holds.
  
  This proof proceeds in two steps.  
  First, we replace the dependencies in \eqref{eq:sample_bound_for_err_prop0} on parameters from the lifted GMM model generated by the full reduction with dependencies based on the GMM model we are trying to learn.
  Then, we propagate the error from recovering the columns $\tilde A_i'$ to that of recovering $\tilde \mu_i$.
  
  \paragraph{Step 1:  GMM Dependency Replacements.}
  In the following two claims, we consider alternative lower bounds for $N$ for recovering column estimators $\tilde A_1', \dotsc, \tilde A_m'$ which are $\epsilon^*$-close up to sign to the columns of $A'$.
  In particular, so long as we use at least as many samples of $X'$ as in \eqref{eq:sample_bound_for_err_prop0} when calling \textbf{UnderdeterminedICA}, then $A'$ will be recovered with the desired precision with probability $1 - \delta$.
\begin{claim*}
The $\poly(\norm{\mu_{\max}'}^{d^2}, d^{d^2})$ dependence in \eqref{eq:sample_bound_for_err_prop0} can be replaced by a $\poly(\norm{\mu_{\max}}^{d^2}, \allowbreak d^{d^2})$ dependence.
 \end{claim*}
\begin{innerproof}[Proof of Claim.] 
    By construction, $\mu_{\max}' = \left( \begin{array}{c} \mu_{\max} \\ 1 \end{array} \right)$.
    By the triangle inequality,
    \[
      \norm{\mu_{\max}'}^{d^2} \leq (\norm{\mu_{\max}} + 1)^{d^2}
    \]
    where $(\norm{\mu_{\max}} + 1)^{d^2}$ is a polynomial $q$ of $\norm{\mu_{\max}}$ with coefficients bounded by $(d^2)^{d^2} = d^{2d^2} = \allowbreak \poly(d^{d^2})$.  
   The maximal power of $\norm{\mu_{\max}}$ in $q(\norm{\mu_{\max}})$ is $d^{d^2}$. It follows that $q(\norm{\mu_{\max}}) = \poly(\norm{\mu_{\max}}^{d^2}, d^{d^2})$. 
\end{innerproof} \processifversion{vCOLT}{\vskip -28pt\phantom{hi}}
\begin{claim*}
The $\poly(1/\sigma_m(A'^{\odot d/2})^d)$ in \eqref{eq:sample_bound_for_err_prop0} can be replaced by a $\poly((\frac{\norm{\mu_{\max}} + 1}{\norm{\mu_{\min}}})^{d^2}, \ \allowbreak 1/\sigma_m(B^{\odot d/2})^{d})$ dependence.
\end{claim*}
\begin{innerproof}[Proof of Claim] 
    First define $\underbar A'$ to be the unnormalized version of $A'$.  
    That is, $\underbar A'_i := \mu_i'$.
    Then, $\underbar A' = A' \diag{\norm{\mu_1'}, \dotsc, \norm{\mu_m'}}$ implies $\underbar A'^{\odot d/2} = A'^{\odot d/2} \diag{ \norm{\mu_1'}^{d/2}, \dotsc \norm{\mu_m'}^{d/2}}$.
    Thus, $\sigma_m(\underbar A'^{\odot d/2}) \leq \sigma_m(A'^{\odot d/2})\norm{\mu_{\max}'}^{d/2}$.
    
    Next, we note that $\underbar A' = \left( \begin{array}{c} A \\ \mathbf 1 \end{array}\right)$ where $\mathbf 1$ is an all ones row vector.
    It follows that the rows of $A^{\odot d/2}$ are a strict subset of the rows of $\underbar A'^{\odot d/2}$.  Thus,
    \[
      \sigma_m(A^{\odot d/2}) 
        = \inf_{\norm{u} = 1} \norm{A^{\odot d/2}u}
        \leq \inf_{\norm{u} = 1} \norm{\underbar A'^{\odot d/2}u}
        = \sigma_m(\underbar A'^{\odot d/2}) \ .
    \]
    
    Finally, we note that $B = A \diag{\frac 1 {\norm{\mu_1}}, \dotsc, \frac 1 {\norm{\mu_m}}}$ and $B^{\odot d/2} = A^{\odot d/2} \diag{\frac 1 {\norm{\mu_1}^{d/2}}, \dotsc, \frac 1 {\norm{\mu_m}^{d/2}}}$.
    It follows that $\sigma_m(B^{\odot d/2}) \leq \sigma_m(A^{\odot d/2})\frac 1 {\norm{\mu_{\min}}^{d/2}}$.
    Chaining together inequalities yields:  
    \begin{align*}
      \sigma_m(B^{\odot d/2}) &\leq \frac{\norm{\mu_{\max}'}^{d/2}}{\norm{\mu_{\min}}^{d/2}} \sigma_m(A'^{\odot d/2}) & \text{or alternatively} & & \frac{\norm{\mu_{\max}'}^{d/2}}{\norm{\mu_{\min}}^{d/2}} \cdot \frac 1 {\sigma_m(B^{\odot d/2})} &\geq \frac 1 {\sigma_m(A'^{\odot d/2})} \ .
    \end{align*}
    As $\mu_{\max}' = (\mu_{\max}^T\ 1)^T$, the triangle inequality implies $\norm{\mu'_{\max}} \leq \norm{\mu_{\max}} + 1$.
    As we require the dependency of at least $N > \poly((1/\sigma_m(A'^{\odot d/2}))^d)$ samples, it suffices to have the replacement dependency of $N > \poly((\frac{\norm{\mu_{\max}} + 1}{\norm{\mu_{\min}}})^{\frac d 2 \cdot d}(1/\sigma_m(B^{\odot d/2})^d) = \allowbreak \poly((\frac{\norm{\mu_{\max}} + 1}{\norm{\mu_{\min}}})^{d^2}(1/\sigma_m(B^{\odot d/2})^d)$ samples. 
\end{innerproof}
  
  Thus, it is sufficient to call \textbf{UnderdeterminedICA} with
  \begin{align}
    N &\geq \poly\left(n^{d}, m^{d^2}, 
      (\tau\sigma)^{d^2}, \norm{\mu_{\max}}^{d^2}, \Big(\frac {w_{\max}} {w_{\min}}\Big)^{d^2},
      d^{d^2}, 
      \left(\frac {{\norm{\mu_{\max}} + 1}}{\norm{\mu_{\min}}}\right)^{d^2}, \frac 1{\sigma_m(B^{\odot d/2})^{d}}, \frac 1 {\epsilon^*}, \frac 1 \delta \right) \label{eq:sample_bound_for_err_prop}
  \end{align}
  samples to achieve the desired $\epsilon^*$ accuracy on the returned estimates of the columns of $A'$ with probability $1-\delta$.

  \paragraph{Step 2:  Error propagation.}
  
    What remains to be
  shown is that an appropriate choice of $\epsilon^*$ enforces 
  $\norm{\mu_i - \tilde \mu_i} < \epsilon$ by propagating the error.
  
  Recall that $A_i' = \left(\begin{array}{c}\mu_i \\ 1\end{array}\right) \cdot \norm{\left(\begin{array}{c}\mu_i \\ 1\end{array}\right)}^{-1}$, making
  $A_i'(n+1) = \frac 1 {\sqrt{1 + \norm{\mu_i}^2}}$.  Thus,
  \begin{align} 
    A_i'(n+1) & \geq \frac 1 {\sqrt{1 + \norm{\mu_{\max}}^2}}
         \ . \label{eq:ineq1}
  \end{align}
  
  We have that:
\begin{align*}
  \norm{\mu_i - \tilde \mu_i} 
  &= \norm{\frac{A_i'(1:n)}{A_i'(n+1)} - \frac{\tilde A_i'(1:n)}{\tilde A_i'(n+1)}} \\
  &= \norm{\frac{A_i'(1:n)}{A_i'(n+1)} - \frac{\alpha_i\tilde A_i'(1:n)}{A_i'(n+1)} + \frac{\alpha_i\tilde A_i'(1:n)}{A_i'(n+1)} - \frac{\alpha_i\tilde A_i'(1:n)}{\alpha_i\tilde A_i'(n+1)}} \\
  & \leq \frac{\norm{A_i'(1:n)-\alpha_i\tilde A_i'(1:n)}}{\abs{A_i'(n+1)}} +
\frac{\norm{\tilde A_i'(1:n)}\abs{\alpha_i\tilde A_i'(n+1)-A_i'(n+1)}}{\abs{A_i'(n+1)\alpha_i\tilde A_i'(n+1)}} \\
  & \leq \epsilon^* \sqrt{1 + \norm{\mu_{\max}}^2} + \frac {\abs{\alpha_i\tilde A_i'(n+1)-A_i'(n+1)}}{\abs{A_i'(n+1)}\left[\abs{A_i'(n+1)} - \abs{\alpha_i\tilde A_i'(n+1) - A_i'(n+1)}\right]}
\end{align*}
which follows in part by applying \eqref{eq:ineq1} for the left summand and noting that $\tilde A_i'$ is a unit vector for the right summand, giving the bound $\norm{\tilde A_i'(1:n)} \leq 1$.  Continuing with the restriction that $\epsilon^* < \frac 1 2 \frac 1 {\sqrt{1 + \norm{\mu_{\max}}^2}}$,
\begin{align*}
  \norm{\mu_i - \tilde\mu_i} 
  & \leq \epsilon^* \sqrt{1 + \norm{\mu_{\max}}^2} + \frac {\epsilon^*\sqrt{1 + \norm{\mu_{\max}}^2}}{\left[\frac 1 {\sqrt{1 + \norm{\mu_{\max}}^2}} - \epsilon^*\right]} \\
  & \leq \epsilon^* \left(\sqrt{1 + \norm{\mu_{\max}}^2} + 2 (1 + \norm{\mu_{\max}}^2)\right).
\end{align*}

Then, in order to guarantee that $\norm{\mu_i - \tilde\mu_i} < \epsilon$, it suffices to choose $\epsilon^*$ such that 
\[
  \epsilon^*\left(\sqrt{1 + \norm{\mu_{\max}}^2} + {2 (1 + \norm{\mu_{\max}}^2)}\right) \leq \epsilon, 
\]
which occurs when
\begin{equation} \label{eq:eps*choice}
  \epsilon^* \leq \frac {\epsilon}{\left(\sqrt{1 + \norm{\mu_{\max}}^2} + {2 (1 + \norm{\mu_{\max}}^2)}\right)}  \ .
\end{equation}
As $\epsilon < \frac 1 2$, the restriction $\epsilon^* < \frac 1 2 \sqrt{1 + \norm{\mu_{\max}^2}}$ holds automatically for the choice of $\epsilon^*$ in \eqref{eq:eps*choice}. 
The sample bound from \eqref{eq:sample_bound_for_err_prop} contains the dependency $N > \poly(\frac 1 {\epsilon^*}, \norm{\mu_{\max}}^{d^2})$.  Propagating the error gives a replacement dependency of $N > \poly\left(\frac 1 \epsilon, \sqrt{1 + \norm{\mu_{\max}}^2}, \norm{\mu_{\max}}^{d^2}\right) = \poly( \frac 1 \epsilon, \norm{\mu_{\max}}^{d^2})$ as $d$ is non-negative.  This propagated dependency is reflected in \eqref{eq:poly_bound_ideal_means}.
\end{proof}


\subsection{Distance of the Sampled Model to the Ideal Model}
\label{subsec:TotalVarDist}

An important part of the reduction is that the coordinates of $S$ are mutually independent. Without the threshold $\tau$, this is true (c.f. Lemma \ref{lem:Poisson-independence}).
However, without the threshold, one cannot know how to add more noise so that the total noise on each sample is iid.
We show that we can choose the threshold $\tau$ large enough that the samples still come from a distribution with arbitrarily small total variation distance to the one with truly independent coordinates.

\begin{lemma} \label{lem:poisson-threshold}
Fix $\delta > 0$. Let $S \sim \Poisson(\lambda)$ for $\lambda \geq \ln \delta$.  Let $b = e \lambda$, If $\tau > e \lambda$, $\tau \geq 1$, and $\tau \geq \ln(1/\delta) - \lambda$, then $\prob{S > \tau} < \delta$.
\end{lemma}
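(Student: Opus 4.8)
The plan is to bound the upper tail of the Poisson variable $S$ by a standard Chernoff (exponential Markov) argument and then translate the resulting exponential estimate into the stated linear threshold condition on $\tau$. The three hypotheses are exactly the ingredients needed to make that translation go through: $\tau > e\lambda$ controls the shape of the tail bound, $\tau \geq 1$ keeps the optimizing exponent in a usable range, and $\tau \geq \ln(1/\delta) - \lambda$ supplies the actual decay past $\delta$.

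First I would recall that for $S \sim \Pois(\lambda)$ the moment generating function is $\E{e^{tS}} = e^{\lambda(e^t - 1)}$. Since $S$ is integer-valued, $\prob{S > \tau} \leq \prob{S \geq \tau}$, and for any $t > 0$ Markov's inequality applied to $e^{tS}$ gives $\prob{S \geq \tau} \leq e^{-t\tau}\E{e^{tS}} = \exp\bigl(\lambda(e^t - 1) - t\tau\bigr)$. Optimizing over $t$ (the minimizer is $t = \ln(\tau/\lambda)$, which is positive precisely because $\tau > e\lambda > \lambda$) yields the familiar Poisson tail bound
\[
  \prob{S > \tau} \leq \exp\bigl(\tau - \lambda - \tau\ln(\tau/\lambda)\bigr) = e^{-\lambda}\Bigl(\frac{e\lambda}{\tau}\Bigr)^{\tau}.
\]
The role of the hypothesis $\tau > e\lambda$ (i.e.\ $\tau > b$) is now visible: it makes the base $e\lambda/\tau$ strictly less than $1$, so the factor $(e\lambda/\tau)^\tau$ decays, and it guarantees $\ln(\tau/\lambda) > 1$ so that the exponent $-\lambda - \tau\bigl(\ln(\tau/\lambda) - 1\bigr)$ is strictly negative.

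It then remains to show $e^{-\lambda}(e\lambda/\tau)^{\tau} < \delta$, i.e.\ after taking logarithms, $\lambda + \tau\bigl(\ln(\tau/\lambda) - 1\bigr) > \ln(1/\delta)$. I would split on whether $\lambda$ already dominates $\ln(1/\delta)$: if $\lambda \geq \ln(1/\delta)$ then $e^{-\lambda} \leq \delta$ and the strictly negative correction term (nonzero since $\tau > \lambda$) makes the inequality strict; otherwise the active hypothesis is $\tau \geq \ln(1/\delta) - \lambda$, and this lower bound on $\tau$ must be combined with the geometric decay $(e\lambda/\tau)^{\tau}$ to push the bound below $\delta$.

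I expect the main obstacle to be exactly this last translation step: cleanly handling the $(e\lambda/\tau)^{\tau}$ factor so that the exponential tail estimate implies the linear condition $\tau \geq \ln(1/\delta) - \lambda$. Keeping the base below $1$ (from $\tau > e\lambda$) and the optimizing exponent $t = \ln(\tau/\lambda)$ positive (from $\tau \geq 1$ and $\tau > \lambda$) is routine; the delicate point is ensuring that the geometric decay of the tail, which is governed by the size of $\ln(\tau/\lambda) - 1$, is strong enough to absorb $\ln(1/\delta)$. This is precisely where the margin between $\tau$ and $e\lambda$ enters, and where I would be most careful to track constants.
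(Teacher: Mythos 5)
Your Chernoff step is exactly the paper's: the paper invokes the bound $\prob{S > \lambda(1+\epsilon)} \leq \bigl(e^{\epsilon}(1+\epsilon)^{-(1+\epsilon)}\bigr)^{\lambda}$ with $\epsilon = \tau/\lambda - 1$, which is precisely your optimized MGF bound $\prob{S>\tau} \leq e^{-\lambda}(e\lambda/\tau)^{\tau}$, and your easy case $\lambda \geq \ln(1/\delta)$ is fine. But your proposal stops at the step you yourself flag as the ``main obstacle,'' and that is a genuine gap: the proof is not finished. Moreover, your caution there is justified in the strongest sense, because that step cannot be closed from the stated hypotheses --- the lemma as stated is false. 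What must be shown is $\tau\bigl(\ln(\tau/\lambda)-1\bigr) \geq \ln(1/\delta) - \lambda$, and the hypothesis $\tau > e\lambda$ only guarantees $\ln(\tau/\lambda) - 1 > 0$; as $\tau \downarrow e\lambda$ the left side tends to $0$ while the right side stays fixed. Concretely, take $\lambda = 10$, $\tau = 27.19 > 10e \approx 27.18$, $\delta = 10^{-6}$: every hypothesis holds (in particular $\ln(1/\delta) - \lambda \approx 3.82 \leq \tau$), yet $\prob{S > \tau} = \prob{S \geq 28} \approx 2.3 \times 10^{-6} > \delta$, and even the Chernoff bound itself evaluates to roughly $4.5 \times 10^{-5}$ here. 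So no amount of care in ``tracking constants'' can complete your plan as stated.

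The reason the paper appears to succeed where you stalled is that its proof contains an invalid inequality: it asserts $(e\lambda)^{\tau}/\tau^{\tau} \leq \lambda^{\tau}/\tau^{\tau}$, which silently discards a factor of $e^{\tau}$ and is false for $\tau \geq 1$. Both your plan and the paper's chain become correct if the hypothesis $\tau > e\lambda$ is strengthened to $\tau \geq e^{2}\lambda$: then $(e\lambda/\tau)^{\tau} \leq e^{-\tau} \leq \delta e^{\lambda}$ by the hypothesis $\tau \geq \ln(1/\delta) - \lambda$, hence $\prob{S>\tau} \leq e^{-\lambda}(e\lambda/\tau)^{\tau} \leq \delta$. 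This repair is harmless downstream, since Algorithm \ref{alg:reduction} sets $\tau \geq 4\bigl(\log(1/\delta_2)+\log(q(\Theta))\bigr)(e\lambda)^{2}$ with $\lambda = m \geq 1$, which satisfies $\tau \geq e^{2}\lambda$. In short: your derivation coincides with the paper's up to the final inequality, your instinct that the final inequality is the crux is correct, and the resolution is not a cleverer estimate but a stronger hypothesis (equivalently, fixing the erroneous step in the paper's own proof).
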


\begin{proof}
By the Chernoff bound (See Theorem A.1.15 in \cite{alon2004probabilistic}),
\[
  \prob{S > \lambda(1+\epsilon)} \leq \left( e^\epsilon (1+\epsilon)^{-(1+\epsilon)} \right)^\lambda.
\]
For any $\tau > \lambda$, letting $\epsilon = \tau/\lambda - 1$, we get $$\prob{S > \tau} \leq \frac{e^{-\lambda} (e\lambda)^{\tau}}{\tau^\tau}.$$
To get $\prob{S > \tau} < \delta$, it suffices that $\tau - \tau \log_b \tau \leq \log_b(\delta e^\lambda)$. Note that $$\tau(1 - \log_b \tau) = \tau - \tau \log_b \tau =
\log_b \big(b^{\tau} (1/\tau)^{\tau}\big).$$ If $\tau - \tau \log_b \tau \leq \log_b \left( \delta e^{\lambda} \right)$, then we have
\begin{align*}
\log_b \big(b^{\tau} (1/\tau)^{\tau}\big) &\leq \log_b(\delta e^{\lambda})
\end{align*}
which then implies it suffices that
\begin{align*}
\frac{b^{\tau}}{\tau^{\tau}} = \frac{(e\lambda)^\tau}{\tau^\tau} \leq \lambda^\tau / \tau^\tau \leq (1/e)^\tau &\leq \delta e^{\lambda}
\end{align*}
which holds for $\tau \geq \ln \left( \frac{1}{\delta e^\lambda} \right) = \ln(1/\delta) - \lambda$, giving the desired result.
\end{proof}

\begin{lemma} \label{lem:union-bound}
Let $N, \delta > 0$, $N \in \N$, and $T_1, T_2, \dots, T_N$ be iid with distribution $\Poisson(\lambda)$.
If $\tau \geq \ln (N/\delta) - \lambda$ then $$\prob{ \bigcup_{i} \left\{ T_i > \tau \right\}} < \delta.$$
\end{lemma}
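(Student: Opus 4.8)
The plan is to reduce this to the single-variable tail bound of Lemma~\ref{lem:poisson-threshold} together with a union bound. First I would invoke Lemma~\ref{lem:poisson-threshold} with the confidence parameter rescaled to $\delta/N$: applied to a single $T_i \sim \Poisson(\lambda)$, that lemma guarantees $\prob{T_i > \tau} < \delta/N$ provided $\tau \geq \ln(N/\delta) - \lambda$ (this is exactly the hypothesis of the present lemma, since $\ln(1/(\delta/N)) - \lambda = \ln(N/\delta) - \lambda$), along with the remaining side conditions $\tau > e\lambda$, $\tau \geq 1$, and $\lambda \geq \ln(\delta/N)$ inherited from Lemma~\ref{lem:poisson-threshold}.

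Next I would apply the union bound directly. Since the events $\{T_i > \tau\}$ are identically distributed (the independence is not even needed here, only subadditivity of probability), we have
\[
  \prob{\bigcup_{i=1}^N \{T_i > \tau\}} \leq \sum_{i=1}^N \prob{T_i > \tau} < N \cdot \frac{\delta}{N} = \delta,
\]
which is the claimed bound.

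The argument is essentially immediate once Lemma~\ref{lem:poisson-threshold} is in hand, so there is no serious obstacle; the only thing requiring care is bookkeeping on the side hypotheses. The substantive content — the exponential Poisson tail estimate via the Chernoff bound — has already been discharged in Lemma~\ref{lem:poisson-threshold}, and all the present lemma does is pay a factor of $N$ in the failure probability, which is absorbed by strengthening the threshold requirement from $\ln(1/\delta) - \lambda$ to $\ln(N/\delta) - \lambda$. I would simply state that applying Lemma~\ref{lem:poisson-threshold} with $\delta/N$ in place of $\delta$ yields the per-coordinate bound and that the union bound completes the proof.
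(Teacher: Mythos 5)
Your proof is correct and is exactly the paper's argument: invoke Lemma~\ref{lem:poisson-threshold} with confidence parameter $\delta/N$ to get $\prob{T_i > \tau} < \delta/N$ for each $i$, then apply the union bound. If anything, you are slightly more careful than the paper's two-line proof, since you explicitly flag the side hypotheses of Lemma~\ref{lem:poisson-threshold} ($\tau > e\lambda$, $\tau \geq 1$) that both statements tacitly assume.
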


\begin{proof}
By Lemma \ref{lem:poisson-threshold} $\tau \geq \ln (N/\delta) - \lambda$ implies $\prob{T_i > \tau} < \delta/N$ for every $i$.
The union bound gives us the desired result.
\end{proof}

It should now be easy to see that if we choose our threshold $\tau$ large enough, our samples can be statistically close (See Appendix \ref{app:total-variation}) to ones that would come from the truly independent distribution.
This claim is made formal as follows:

\begin{lemma} \label{lem:total-variation}
Fix $\delta > 0$.
Let $\tau > 0$.
Let $F$ be a Poisson distribution with parameter $\lambda$ and have corresponding density $f$.
Let $G$ be a discrete distribution with density $g(x) = f(x) / F(\tau)$ when $0 \leq x \leq \tau$ and 0 otherwise.
Then $\d_{TV}(F,G) = 1 - F(\tau)$.
\end{lemma}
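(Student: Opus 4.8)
The plan is to compute the total variation distance directly from its definition, exploiting the fact that $G$ is simply the renormalized restriction of $F$ to $\{0,1,\dots,\lfloor\tau\rfloor\}$. I would work from the identity $\d_{TV}(F,G) = \tfrac12 \sum_{x \in \N} \abs{f(x) - g(x)}$ and split the sum into the two ranges $\{x \le \tau\}$ and $\{x > \tau\}$, on which the two densities behave very differently.

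On the range $0 \le x \le \tau$, I would substitute $g(x) = f(x)/F(\tau)$ and observe that since $F(\tau) \le 1$ we have $g(x) \ge f(x)$, so $\abs{f(x)-g(x)} = f(x)\bigl(1/F(\tau) - 1\bigr)$. Summing over this range and using $\sum_{x \le \tau} f(x) = F(\tau)$ collapses the contribution to exactly $1 - F(\tau)$. On the range $x > \tau$ the density $g$ vanishes, so $\abs{f(x) - g(x)} = f(x)$, and the contribution there is $\sum_{x > \tau} f(x) = 1 - F(\tau)$ as well. Adding the two halves gives $\sum_{x} \abs{f(x)-g(x)} = 2\bigl(1-F(\tau)\bigr)$, so the factor $\tfrac12$ yields the claimed $\d_{TV}(F,G) = 1 - F(\tau)$.

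Equivalently, and perhaps more transparently, I would confirm the value through the set-based definition $\d_{TV}(F,G) = \sup_A \abs{F(A)-G(A)}$: choosing $A = \{x : x > \tau\}$ gives $F(A) = 1 - F(\tau)$ and $G(A) = 0$, which already realizes the value $1 - F(\tau)$, and since $A$ is precisely the set on which $f$ exceeds $g$ this choice is optimal. There is no substantive obstacle here; the only point needing a little care is that $\tau$ is an arbitrary positive real while the support is integer-valued, so every ``$x \le \tau$'' should be read as $x \le \lfloor \tau \rfloor$ and $F(\tau)$ as the cumulative probability $\prob{S \le \tau}$. This is purely bookkeeping and does not affect the computation.
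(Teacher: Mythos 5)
Your proof is correct and follows essentially the same route as the paper's: both compute $\d_{TV}(F,G) = \tfrac12\sum_x \abs{f(x)-g(x)}$ by splitting into the ranges $x \le \tau$ and $x > \tau$, with each range contributing $\tfrac12\bigl(1-F(\tau)\bigr)$. Your extra set-based verification via $A = \{x : x > \tau\}$ and the floor-function bookkeeping are sound additions but do not change the argument.
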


\begin{proof}
Since we are working with discrete distributions, we can write $$d_{TV}(F,G) = \frac{1}{2} \sum_{i=0}^{\infty}| f(i) - g(i) |.$$
Then we can compute
\begin{align*}
\d_{TV}(F,G) &= \frac{|F(\tau) - 1|}{2F(\tau)} \sum_{i=0}^{\tau} f(i) + \frac{1}{2} \sum_{i=\tau+1}^{\infty} f(i) = \frac{|F(\tau) - 1|}{2} + \frac{1 - F(\tau)}{2} = 1 - F(\tau) \ .
\end{align*}
\end{proof}

\subsection{Proof of Theorem \ref{thm:correctness}}
\label{subsec:CorrectnessProof}

We now show that after the reduction is applied, we can use the \textbf{UnderdeterminedICA} routine given in \cite{GVX} to learn the GMM.
Instead of requiring exact values of each parameter, we simply require a bound on each.
The algorithm remains polynomial on those bounds, and hence polynomial on the true values.

\begin{proof}
The algorithm is provided parameters:
Covariance matrix $\Sigma$,
upper bound on tensor order $d$,
access to samples from a mixture of $\means$ identical spherical Gaussians in $\R^{\dim}$ with covariance $\Sigma$,
confidence $\delta$,
accuracy $\epsilon$,
upper bound $w \geq \max_{i}(w_i) / \min_{i}(w_i)$,
upper bound on the norm of the mixture means $u$,
lower bound $v$ so $0 < b \leq \sigma_m(A^{\odot d/2})$, and
$r \geq \big(\max_i\norm{\mu_i} + 1)/(\min_i\norm{\mu_i})\big)$.

The algorithm then needs to fix the number of samples $N$, sampling threshold $\tau$, Poisson parameter $\lambda$, and two new errors $\delta_1$ and $\delta_2$ so that $\delta_1 + \delta_2 \leq \delta$.
For simplicity, we will take $\delta_1 = \delta_2 = \delta/2$. Then fix $\sigma = \sup_{v \in S^{n-1}}\sqrt{\Var(v^T\eta(1))}$ for $\eta(1) \sim \Norm(0, \Sigma)$.
Recall that $B$ is the matrix whose $i$th column is $\mu_i/\norm{\mu_i}$. Let $A'$ be the matrix whose $i$th column is $(\mu_i, 1) / \norm{(\mu_i, 1)}$.

\paragraph{Step 1}
Assume that after drawing samples from Subroutine \ref{sub:independentsamples}, the signals $S_i$ are mutually independent (as in the ``ideal'' model given by (\ref{eq:full-ideal-model})) and the mean matrix $B$ satisfies $\sigma_m(B^{\odot d/2}) \geq b > 0$.
Then by Theorem \ref{thm:MeanRecoveryIdeal}, with probability of error $\delta_1$, the call to \textbf{UnderdeterminedICA} in Algorithm \ref{alg:reduction} recovers the columns of $B$ to within $\epsilon$ and up to a permutation using $N$ samples of complexity
\begin{align*} p\left(\tau^{d^2}, \Theta \right) = \poly\left(n^{d}, m^{d^2},
      (\tau \sigma)^{d^2}, u^{d^2}, w^{d^2},
      d^{d^2}, r^{d^2}, 1/b^{d}, 1/\epsilon, 1/\delta_1 \right)
\end{align*}
where $p(\tau^{d^2}, \Theta)$ is the bound on $N$ promised by Theorem \ref{thm:MeanRecoveryIdeal} and $\Theta$ is all its arguments except the dependence in $\tau$.
So then we have that with at least $N$ samples in this ``ideal'' case, we can recover approximations to the true means in $\R^\dim$ up to a permutation and within $\epsilon$ distance.

\paragraph{Step 2}
We need to show that after getting $N$ samples from the reduction, the resulting distribution is still close in total variation to the independent one.
We will choose a new $\delta' = \delta_2/(2N)$.
Let $R \sim \Pois(\lambda)$.
Given $\delta'$, Lemma \ref{lem:total-variation} shows that for $\tau \geq \ln(1/\delta') - \lambda$, with probability $1-\delta'$, $R \leq \tau$.

Take $N$ iid random variables $X_1, X_2, \dots, X_N$ from the $\Poisson(\lambda)$ distribution.
Let $G$ be a distribution given by density function $g(x) = (f(x) \mathds{1}_{0 \leq x \leq \tau})/F(\tau)$.
Let $Y_1, Y_2, \dots, Y_N$ be iid random variables with distribution $G$.
Denote the joint distribution of the $X_i$'s by $F'$ with density $f'$, and the joint distribution of the $Y_i$'s as $G'$ with density $g'$.
By the union bound and the fact that total variation distance satisfies the triangle inequality, $$\d_{TV}(F', G') \leq \sum_{i=1}^{N} \d_{TV}(F,G) = N d_{TV}(F,G).$$
Then for our choice of $\tau$, by Lemma \ref{lem:poisson-threshold} and Lemma \ref{lem:total-variation}, we have $$\d_{TV}(F',G') \leq N d_{TV}(F, G) = N \prob{X_1 > \tau} \leq N\delta' = \delta_2/2. $$

By the same union bound argument, the probability that the algorithm fails (when $R > \tau$) is at most $\delta_2/2$, since it has to draw $N$ samples.
So with high probability, the algorithm does not fail; otherwise, it still does not take more than polynomial time, and will terminate instead of returning a false result.

\paragraph{Step 3}
We know that $N$ is at least a polynomial which can be written in terms of the dependence on $\tau$ as $p(\tau^{d^2}, \Theta)$.
This means there will be a power of $\tau$ which dominates all of the $\tau$ factors in $p$, and in particular, will be $\tau^{Cd^2}$ for some $C$.
It then suffices to choose $C$ so that $p\left(\tau^{d^2}, \Theta \right) \leq \tau^{Cd^2} q(\Theta) \leq N$, where
\begin{align} q(\Theta) & = \poly\left(n^{d}, m^{d^2},
      \sigma^{d^2}, u^{d^2}, w^{d^2},
      d^{d^2}, r^{d^2}, 1/b^{d}, 1/\epsilon, 1/\delta_1\right) \label{eq:q-theta}.
\end{align}
Then, with the proper choice of $\tau$ (to be specified shortly), from step 2 we have
$$p\left(\tau^{d^2}, \Theta \right) \leq \tau^{Cd^2}q(\Theta) \leq N = \frac{\delta_2}{\delta'} \leq \frac{\delta_2 \tau^\tau e^\lambda}{(e\lambda)^\tau} = \frac{\delta \tau^\tau e^\lambda}{2(e\lambda)^\tau}.$$
Since $\lambda \geq 1$ it suffices to choose $\tau$ so that
\begin{equation} \label{eq:tau-sample-bound}
\frac{2}{\delta}q(\Theta)\tau^{Cd^2} \leq \frac{\tau^\tau}{\tau^{Cd^2}(e\lambda)^\tau}.
\end{equation}
Finally, we claim that
$$
\tau = 4\big(\log(2/\delta) + \log(q(\Theta))\big)\max\left((e\lambda)^2, 4Cd^2\right) = O\left((\lambda^2 + d^2)\log \frac{q(\Theta)}{\delta}\right)
$$
is enough for the desired bound on the sample size. Observe that $4(\log(2/\delta) + \log(q(\Theta))) \geq 1$.

An useful fact is that for general $x, a, b \geq 1$, $x \geq \max(2a, b^2)$ satisfies $x^a \leq x^x/b^x$.
This captures the essence of our situation nicely.
Letting $e\lambda$ play the role of $b$, $Cd^2$ play the role of $a$ and $x$ play the role of $\tau$, to satisfy (\ref{eq:tau-sample-bound}), it suffices that
\begin{align*}
\frac{2}{\delta}q(\Theta) &\leq \frac{\tau^{\tau/2} \tau^{\tau/4} \tau^{\tau/4}}{\tau^{Cd^2}(e\lambda)^2}.
\end{align*}
We can see that $\tau^{\tau/2} \geq (e\lambda)^2$ and $\tau^{\tau/4} \geq \tau^{Cd^2}$ by construction.
But we also get $\tau/4 \geq \log(2/\delta) + \log{q(\Theta)}$ which implies $\tau^{\tau/4} \geq e^{\tau/4} \geq \frac{2}{\delta} q(\Theta)$.
Thus for our choice of $\tau$, which also preserves the requirement in Step 2, there is a corresponding set of choices for $N$, where the required sample size remains polynomial as
\begin{align*}
& \poly\left(n^{d}, m^{d^2},
      (\tau \sigma)^{d^2}, u^{d^2}, w^{d^2},
      d^{d^2}, r^{d^2}, 1/b^{d}, 1/\epsilon, 1/\delta \right)
\end{align*}
where we used the bound $q(\Theta) \leq (n^d m^{d^2} \sigma^{d^2} u^{d^2} w^{d^2} (d+1)^{d^2} r^{d^2} / b^{d} \delta_1 \epsilon)^{O(1)}$.
By the choice of $\tau$, one can absorb $\tau^{d^2}$ into the above $\poly(\cdot)$ expression, giving the result.
\end{proof}

\section{Lemmas on the Poisson Distribution} \label{sec:Poisson-lemmas}
The following lemmas are well-known; see, e.g., \cite{bookDasgupta}. We provide proofs for 
completeness.
\begin{lemma} \label{lem:appendix-Poisson-basic}
If $X \sim \Pois(\lambda)$ and $Y |_{X=x} \sim \Bin(x,p)$ then $Y \sim \Pois(p \lambda)$.
\end{lemma}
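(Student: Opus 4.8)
The plan is to compute the marginal law of $Y$ directly by marginalizing over the conditioning variable $X$ and recognizing the resulting series as an exponential. Since $Y$ conditioned on $X=x$ is supported on $\{0,\dots,x\}$, for a fixed nonnegative integer $y$ I would write
$$\prob{Y=y} = \sum_{x=y}^{\infty} \prob{X=x}\,\prob{Y=y \mid X=x} = \sum_{x=y}^{\infty} \frac{e^{-\lambda}\lambda^x}{x!}\binom{x}{y} p^y (1-p)^{x-y}.$$
The key algebraic step is to cancel the $x!$ from the Poisson pmf against the $x!$ inside the binomial coefficient $\binom{x}{y} = x!/(y!(x-y)!)$, and then pull all factors independent of the summation index outside the sum.

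After that cancellation, I would substitute $j = x-y$ so that the remaining sum ranges over all $j \geq 0$, giving
$$\prob{Y=y} = \frac{e^{-\lambda}(p\lambda)^y}{y!} \sum_{j=0}^{\infty} \frac{\big(\lambda(1-p)\big)^{j}}{j!} = \frac{e^{-\lambda}(p\lambda)^y}{y!}\, e^{\lambda(1-p)} = \frac{e^{-p\lambda}(p\lambda)^y}{y!},$$
where I used the Taylor series of the exponential to collapse the sum and then combined the exponents $-\lambda + \lambda(1-p) = -p\lambda$. This is precisely the pmf of $\Pois(p\lambda)$, which finishes the argument.

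A cleaner alternative I would mention is the probability generating function route: conditioning on $X$ and using that the binomial PGF is $\E{s^Y \mid X=x} = (1-p+ps)^x$, the tower property gives $\E{s^Y} = \E{(1-p+ps)^X}$, and substituting the Poisson PGF $\E{t^X} = e^{\lambda(t-1)}$ with $t = 1-p+ps$ yields $e^{\lambda p(s-1)}$, the PGF of $\Pois(p\lambda)$; uniqueness of generating functions then closes the proof. In either route there is essentially no genuine obstacle — the only point requiring care is recognizing the tail series as the exponential $e^{\lambda(1-p)}$ (equivalently, supplying $t=1-p+ps$ in the PGF), and the result then falls out immediately.
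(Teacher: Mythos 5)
Your main argument is correct and is essentially identical to the paper's own proof: both marginalize over $X$, cancel the $x!$ against the binomial coefficient, reindex the sum, and recognize the exponential series $e^{\lambda(1-p)}$ before combining exponents. The probability generating function route you mention as an alternative is also valid, but the primary computation matches the paper step for step.
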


\begin{proof}
\begin{align*}
\prob{Y=y}
&= \sum_{x: x \geq y}^{\infty} \prob{Y = y \suchthat X = x} \prob{X = x} \\
&= \sum_{x: x \geq y}^{\infty} {x \choose y} p^y (1-p)^{x-y} \frac{\lambda^x e^{-\lambda}}{x!} \\
&= p^y e^{-\lambda} \sum_{x: x \geq y}^{\infty} \frac{\lambda^x}{x!} {x \choose y} (1-p)^{x-y} \\
&= \frac{(p \lambda)^y e^{-\lambda}}{y!} \sum_{x: x \geq y}^{\infty} \frac{(\lambda(1-p))^{x-y}}{(x-y)!} \\
&= \frac{(p \lambda)^y e^{-\lambda}}{y!} e^{(1-p)\lambda} \\
&= \frac{(p \lambda)^y e^{-p \lambda}}{y!}.
\end{align*}
\end{proof}

\begin{lemma} \label{lem:appendix-poisson-independence}
Fix a positive integer $k$, and let $p_i \geq 0$ be such that $p_1 + \dotsb + p_k =1$. If $X \sim \Pois(\lambda)$ and
$(Y_1, \ldots, Y_k)|_{X = x} \sim \Multinom(x; p_1, \ldots, p_k)$ then $Y_i \sim \Pois(p_i \lambda)$ for all $i$ and
$Y_1, \ldots, Y_k$ are mutually independent.
\end{lemma}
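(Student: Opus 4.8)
The plan is to compute the joint probability mass function of $(Y_1, \ldots, Y_k)$ directly and observe that it factors into a product of univariate Poisson pmfs; this single computation yields both the marginal claim and mutual independence at once. The key structural observation is that conditioned on $X = x$, the multinomial counts deterministically satisfy $Y_1 + \cdots + Y_k = x$, so the event $\{Y_1 = y_1, \ldots, Y_k = y_k\}$ forces $X = y_1 + \cdots + y_k$. Hence, for any nonnegative integers $y_1, \ldots, y_k$, setting $x := \sum_{i} y_i$, the joint pmf is unambiguously given by the product of the multinomial conditional probability and the Poisson probability of $X = x$.

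Concretely, I would write
\begin{align*}
\prob{Y_1 = y_1, \ldots, Y_k = y_k}
&= \prob{(Y_1,\ldots,Y_k) = (y_1,\ldots,y_k) \suchthat X = x}\, \prob{X = x} \\
&= \frac{x!}{y_1! \cdots y_k!}\, p_1^{y_1} \cdots p_k^{y_k} \cdot \frac{\lambda^{x} e^{-\lambda}}{x!}
= \frac{p_1^{y_1} \cdots p_k^{y_k}\, \lambda^{y_1 + \cdots + y_k}\, e^{-\lambda}}{y_1! \cdots y_k!}.
\end{align*}
The crux is then to use the normalization $p_1 + \cdots + p_k = 1$, which gives $\lambda = \sum_i p_i \lambda$, so that $e^{-\lambda} = \prod_{i} e^{-p_i \lambda}$, while $\lambda^{\sum_i y_i} = \prod_i \lambda^{y_i}$. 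Substituting, the whole expression factors as $\prod_{i=1}^{k} \frac{(p_i \lambda)^{y_i} e^{-p_i \lambda}}{y_i!}$, i.e.\ a product of $\Pois(p_i \lambda)$ pmfs.

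Finally I would read off both conclusions from this factored form. Since the joint pmf equals a product of functions each depending on a single coordinate $y_i$, the random variables $Y_1, \ldots, Y_k$ are mutually independent; and summing out all coordinates but the $i$th (each of the other factors being a genuine Poisson pmf that sums to $1$) shows $Y_i \sim \Pois(p_i \lambda)$. An alternative route would be to induct on $k$ using Lemma~\ref{lem:appendix-Poisson-basic} to peel off one coordinate at a time (collapsing the remaining multinomial into a binomial for $Y_i$ versus the rest), but the direct factorization above is cleaner and makes independence transparent. I do not anticipate a real obstacle here: the only point requiring care is recognizing that the conditioning makes $X = \sum_i y_i$ deterministic, so the joint pmf is correctly evaluated at that single value of $x$, and that it is exactly $\sum_i p_i = 1$ that makes the exponential factor split.
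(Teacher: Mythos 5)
Your proof is correct, and its core computation is the same one the paper uses: writing the joint pmf as the multinomial conditional probability times the Poisson probability of $X = y_1 + \cdots + y_k$ and watching it factor into $\prod_{i=1}^{k} \frac{(p_i\lambda)^{y_i} e^{-p_i\lambda}}{y_i!}$ via $e^{-\lambda} = \prod_i e^{-p_i\lambda}$. The difference is in organization, and it is mildly in your favor. The paper splits the statement into two parts: the marginals $Y_i \sim \Pois(p_i\lambda)$ are obtained from a separate binomial-thinning lemma (Lemma~\ref{lem:appendix-Poisson-basic}), proved by summing $\prob{Y = y \suchthat X = x}\prob{X = x}$ over all $x \geq y$, and independence is then shown by the factorization argument, but only written out for $k=2$ with the remark that ``the general case is similar.'' You instead perform the factorization once, for general $k$, and read off both conclusions from it: mutual independence because the joint pmf is a product of single-coordinate functions, and the marginals because each factor is itself a normalized Poisson pmf, so summing out the other coordinates leaves exactly $\frac{(p_i\lambda)^{y_i}e^{-p_i\lambda}}{y_i!}$. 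This makes the separate summation lemma unnecessary for the present statement (the paper still needs its Lemma~\ref{lem:appendix-Poisson-basic} as a stand-alone fact, but your proof does not), avoids the unproved ``general case is similar'' step, and correctly isolates the one delicate point, namely that conditioning forces $X = \sum_i y_i$, so the joint pmf is evaluated at a single deterministic value of $x$ rather than summed over $x$.
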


\begin{proof}
  The first part of the lemma (i.e., $Y_i \sim \Pois(p_i \lambda)$ for all $i$) follows from Lemma~\ref{lem:appendix-Poisson-basic}.
For the second part, let's prove it for the binomial case ($k=2$); the general case is similar.
\begin{align*}
\prob{Y_1 = y_1, Y_2 = y_2} &= \prob{Y_1 = y_1, Y_2 = y_2 \suchthat X = y_1+y_2} \prob{X = y_1+y_2} \\
&= {y_1+y_2 \choose y_1} p^{y_1}(1-p)^{y_2} \cdot \frac{\lambda^{y_1+y_2} e^{-\lambda}}{(y_1+y_2)!} \\
&= \frac{(p\lambda)^{y_1} e^{-p\lambda}}{y_1!} \cdot \frac{((1-p)\lambda)^{y_2} e^{-(1-p)\lambda}}{y_2!} \\
&= \prob{Y_1 = y_1} \cdot \prob{Y_2 = y_2}.
\end{align*}
\end{proof}

\section{Properties of Cumulants}\label{sec:cumulant-properties}
The following properties of multivariate cumulants are well known and are largely inherited from the definition of the cumulant generating function:
\begin{itemize}
  \item (Symmetry) Let $\sigma$ give a permutation of $k$ indices.  
  Then, $\cumtns Y {i_1, \cdots, i_\ell} = \cumtns Y {\sigma(i_1), \cdots, \sigma(i_\ell)}$.
  \item (Multilinearity of coordinate random variables)
  Given constants $\alpha_1, \cdots, \alpha_\ell$, then
  \[
    \cum(\alpha_1 Y_{i_1}, \cdots, \alpha_\ell Y_{i_\ell}) = 
      \left(\prod_{i=1}^\ell \alpha_i\right)\cum(Y_{i_1}, \cdots, Y_{i_\ell}) \ .
  \]
  Also, given a scalar random variable $Z$, then
  \[
    \cum(Y_{i_1}+Z, Y_{i_2}, \cdots, Y_{i_\ell}) = 
      \cum(Y_{i_1}, Y_{i_2}, \cdots, Y_{i_\ell}) + \cum(Z, Y_{i_2}, \cdots, Y_{i_\ell})
  \]
  with symmetry implying the additive multilinear property for all other coordinates.
  \item (Independence)
  If there exists $i_j, i_k$ such that $Y_{i_j}$ and $Y_{i_k}$ are independent
  random variables, then the cross-cumulant $\cumtns Y {i_1, \cdots, i_\ell} = 0$.
  Combined with multilinearity, it follows that when there are two independent random vectors $Y$ and $Z$, then $\cumtns {Y+Z} {} = \cumtns Y {} + \cumtns Z {}$.
  \item (Vanishing Gaussians)
  When $\ell \geq 3$, then for the Gaussian random variable $\eta$, $\cumtns \eta {} = 0$.
\end{itemize}

\section{Bounds on Stirling Numbers of the Second Kind}
The following bound comes from \cite[Theorem 3]{Stirling}.
\begin{lemma} \label{lem:StirlingBound_nr}
  If $n \geq 2$ and $1 \leq r \leq n-1$ are integers, then $\stirling{n}{r} \leq \frac 1 2 {n \choose r} r^{n-r}$.
\end{lemma}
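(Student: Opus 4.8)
The bound is attributed to a reference, but a short self-contained proof proceeds by induction on $n$ using the standard recurrence $\stirling{n}{r} = r\stirling{n-1}{r} + \stirling{n-1}{r-1}$ together with Pascal's rule. The plan is to take as inductive hypothesis the statement of the lemma at level $n-1$ (for all admissible $r$) and to push it to level $n$, treating the two extreme columns of $r$ separately from the interior ones.

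First I would dispose of the base case $n=2$, where only $r=1$ is admissible and both sides equal $1$. For the inductive step, I would handle the boundary columns $r=1$ and $r=n-1$ by direct evaluation: $\stirling{n}{1}=1 \le \tfrac12 n = \tfrac12\binom{n}{1}1^{\,n-1}$, and $\stirling{n}{n-1}=\binom{n}{2}=\tfrac12 n(n-1)=\tfrac12\binom{n}{n-1}(n-1)^{\,1}$, the latter in fact holding with equality.

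For the interior range $2 \le r \le n-2$ I would apply the recurrence and invoke the induction hypothesis on both terms on the right (both indices, $\stirling{n-1}{r}$ and $\stirling{n-1}{r-1}$, fall in the admissible range for $n-1$), obtaining
\begin{align*}
\stirling{n}{r} &\le r\cdot\tfrac12\binom{n-1}{r}r^{\,n-1-r} + \tfrac12\binom{n-1}{r-1}(r-1)^{\,n-r} \\
&\le \tfrac12\binom{n-1}{r}r^{\,n-r} + \tfrac12\binom{n-1}{r-1}r^{\,n-r} = \tfrac12\binom{n}{r}r^{\,n-r},
\end{align*}
where I have bounded $(r-1)^{\,n-r}\le r^{\,n-r}$ (valid since $n-r\ge 0$) and used Pascal's rule $\binom{n-1}{r}+\binom{n-1}{r-1}=\binom{n}{r}$.

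The main obstacle is precisely the boundary column $r=n-1$: there the recurrence drags in the diagonal term $\stirling{n-1}{n-1}=1$, which is exactly the value for which the claimed inequality fails (at $r=n$ the right-hand side is only $\tfrac12<1$). This is why the lemma restricts to $r\le n-1$, and why $r=n-1$ must be checked by hand rather than inductively; fortunately the direct computation above gives equality there, so no slack is lost. An alternative route is a purely combinatorial bound: after designating the $r$ block-minima, each of the remaining $n-r$ elements may be placed in one of the $r$ blocks, yielding the crude estimate $\binom{n}{r}r^{\,n-r}$; extracting the extra factor $\tfrac12$ combinatorially is delicate (one must exploit that the global minimum is forced to be a block-minimum), so the inductive argument is the cleaner option.
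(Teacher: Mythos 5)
Your proof is correct, and it differs from the paper in the strongest possible sense: the paper offers no proof at all, merely citing the bound as Theorem~3 of its reference \cite{Stirling}, so your induction supplies a self-contained argument that the paper delegates to the literature. The details check out: the base case $n=2$ and the boundary columns are verified directly ($\stirling{n}{1}=1\le \tfrac12 n$, and $\stirling{n}{n-1}=\binom{n}{2}=\tfrac12\binom{n}{n-1}(n-1)$ with equality), and in the interior range $2\le r\le n-2$ both indices $r$ and $r-1$ lie in the admissible range $\{1,\dots,n-2\}$ for level $n-1$, so the inductive hypothesis legitimately applies to both terms of the recurrence $\stirling{n}{r}=r\stirling{n-1}{r}+\stirling{n-1}{r-1}$; the crude estimate $(r-1)^{n-r}\le r^{n-r}$ and Pascal's rule then close the step. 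Your diagnosis of the delicate point is also accurate: one cannot push the recurrence at $r=n-1$ through the induction, since that would require bounding $\stirling{n-1}{n-1}=1$ by $\tfrac12$, which is exactly where the inequality fails and why the lemma excludes $r=n$; checking that column by hand (where equality happens to hold) is the correct repair. What your route buys is self-containedness and transparency about where the factor $\tfrac12$ and the restriction $r\le n-1$ come from; what the paper's citation buys is only brevity.
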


From this, we can derive a somewhat looser bound on the Stirling numbers of the second kind which does not depend on $r$:
\begin{lemma} \label{lem:StirlingBound_n}
  If $n, r \in \Z^+$ such that $r \leq n$, then $\stirling{n}{r} \leq n^{n-1}$.
\end{lemma}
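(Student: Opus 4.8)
I want to prove Lemma~\ref{lem:StirlingBound_n}: for positive integers $r \leq n$, we have $\stirling{n}{r} \leq n^{n-1}$.

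The plan is to reduce everything to the sharper bound already quoted as Lemma~\ref{lem:StirlingBound_nr}, namely $\stirling{n}{r} \leq \frac{1}{2}\binom{n}{r}r^{n-r}$, and then bound each of the two factors crudely by powers of $n$. First I would dispose of the boundary cases that Lemma~\ref{lem:StirlingBound_nr} does not cover. The hypothesis of that lemma requires $n \geq 2$ and $1 \leq r \leq n-1$, so I must handle $r = n$ and the case $n = 1$ separately. When $r = n$, the only set partition of $[n]$ into $n$ blocks is the partition into singletons, so $\stirling{n}{n} = 1 \leq n^{n-1}$ since $n \geq 1$. When $n = 1$, the only admissible $r$ is $r = 1$, and $\stirling{1}{1} = 1 = 1^{0} = n^{n-1}$. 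These trivial cases are immediate.

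For the main range $n \geq 2$ and $1 \leq r \leq n-1$, I would start from Lemma~\ref{lem:StirlingBound_nr} and bound the binomial coefficient by $\binom{n}{r} \leq 2^{n} \leq n^{n}$ (or more simply $\binom{n}{r} \leq n^{r}$), and bound $r^{n-r} \leq n^{n-r}$ since $r \leq n$. Combining, $\stirling{n}{r} \leq \frac{1}{2}\binom{n}{r}r^{n-r}$, and I would choose the factorization of the bound so that the total power of $n$ comes out to at most $n-1$ after using the factor of $\frac12$. The cleanest route is: $\binom{n}{r} \leq n^{r-1}$ is false in general, so instead I would use $\binom{n}{r} r^{n-r} \leq n^{r} \cdot n^{n-r} = n^{n}$ and then rely on the factor $\frac12$ together with a comparison $\frac12 n^{n} \leq n^{n-1}$, which holds precisely when $n \geq 2$. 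This last inequality is the crux of getting the exponent down from $n$ to $n-1$.

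The one step that needs a little care — and which I expect to be the only real obstacle — is verifying that $\frac{1}{2}n^{n} \leq n^{n-1}$, i.e. that $n \leq 2$ is \emph{not} what is required; rearranging gives $n \geq 2$, which is exactly the standing hypothesis in this range. So the factor of $\frac12$ in Lemma~\ref{lem:StirlingBound_nr} is doing essential work: it is what converts the natural bound $n^{n}$ into the claimed $n^{n-1}$ for all $n \geq 2$. Assembling the boundary cases with this main estimate completes the proof, and none of the steps requires anything beyond elementary manipulation of the quoted inequality.
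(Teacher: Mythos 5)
Your boundary cases ($r=n$ and $n=1$) and your overall plan --- reduce to Lemma~\ref{lem:StirlingBound_nr} and bound its two factors by powers of $n$ --- match the paper exactly. But your crux step contains a genuine error: the inequality $\tfrac{1}{2}n^{n} \leq n^{n-1}$ is \emph{not} equivalent to $n \geq 2$. Dividing both sides by $n^{n-1}$ gives $\tfrac{n}{2} \leq 1$, i.e.\ $n \leq 2$, so the step holds only at $n=2$ (with equality) and fails for every $n \geq 3$; for $n=3$ it would assert $13.5 \leq 9$. The factor $\tfrac12$ in Lemma~\ref{lem:StirlingBound_nr} cannot by itself lower the exponent from $n$ to $n-1$: you must recover a full factor of $n$, and the crude bounds $\binom{n}{r} \leq n^{r}$ and $r^{n-r} \leq n^{n-r}$ throw that factor away.

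The repair is what the paper does: bound the binomial coefficient as $\binom{n}{r} = \frac{n!}{r!(n-r)!} \leq \frac{n^{r}}{r!} \leq \frac{n^{r}}{r}$, so that for $n \geq 2$ and $1 \leq r \leq n-1$,
\[
\stirling{n}{r} \;\leq\; \tfrac{1}{2}\binom{n}{r}\,r^{n-r} \;\leq\; \tfrac{1}{2}\, n^{r}\, r^{n-r-1} \;\leq\; \tfrac{1}{2}\, n^{r}\, n^{n-r-1} \;=\; \tfrac{1}{2}\, n^{n-1}.
\]
The point is that the $r!$ in the denominator cancels one power of $r$ out of $r^{n-r}$, leaving $r^{n-r-1} \leq n^{n-r-1}$ (valid since $n-r-1 \geq 0$ in this range); this is where the missing factor of $n$ is recovered, and it makes the $\tfrac12$ a bonus (yielding a slightly stronger bound) rather than the load-bearing ingredient. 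With that single replacement, the rest of your argument goes through verbatim.
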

\begin{proof}
  The Stirling number $\stirling{n}{k}$ of the second kind gives a count of the number of ways of splitting a set of $n$ labeled objects into $k$ unlabeled subsets.
  In the case where $r = n$, then $\stirling{n}{r} = 1$  
  As $n \geq 1$, it is clear that for these choices of $n$ and $r$, $\stirling{n}{r} \leq n^{n-1}$.
  By the restriction $1 \leq r \leq n$, when $n=1$, then $n=r$ giving that $\stirling{n}{r} = 1$.  As such, the only remaining cases to consider are when $n \geq 2$ and $1 \leq r \leq n-1$, the cases where Lemma \ref{lem:StirlingBound_nr} applies.
  
  When $n \geq 2$ and $1 \leq r \leq n-1$, then
  \begin{align*}
\stirling{n}{r} &\leq \frac 1 2 {n \choose r}r^{n-r} = \frac 1 2 \frac {n!}{r!(n-r)!} r^{n-r}
      \leq \frac 1 2 n^{r}r^{n-r-1} < \frac 1 2 n^r n^{n-r-1}
       = \frac 1 2 n^{n-1} \ ,
  \end{align*}
  which is slightly stronger than the desired upper bound.
\end{proof}

\section{Values of Higher Order Statistics}\label{sec:GaussMoments} \label{app:HOS}

In this appendix, we gather together some of the explicit values for higher order
statistics of the Poisson and Normal distributions required for the analysis
of our reduction from learning a Gaussian Mixture Model to learning an ICA model
from samples.

\begin{lemma}[Cumulants of the Poisson distribution] \label{lem:PoissonCumulants}
  Let $X \sim \Poisson(\lambda) $.  
  Then, $\cum_{\ell}(X) = \lambda$ for every positive integer $\ell$.
\end{lemma}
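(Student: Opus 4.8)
The plan is to compute the cumulant generating function of $X$ in closed form and then read off the individual cumulants by matching its power series against the defining series $\log(\E{e^{itX}}) = \sum_{j=1}^{\infty}\cum_j(X)\,(it)^j/j!$ from Section~\ref{sec:Prelims}. First I would evaluate the characteristic function directly from the Poisson probability mass function: summing $e^{itx}\,\lambda^x e^{-\lambda}/x!$ over $x \geq 0$ and recognizing the resulting exponential series gives $\E{e^{itX}} = e^{-\lambda}\sum_{x=0}^{\infty}(\lambda e^{it})^x/x! = e^{\lambda(e^{it}-1)}$.

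Taking logarithms collapses this to $\lambda(e^{it}-1)$, and expanding $e^{it}-1 = \sum_{j=1}^{\infty}(it)^j/j!$ yields $\log(\E{e^{itX}}) = \sum_{j=1}^{\infty}\lambda\,(it)^j/j!$. Matching the coefficient of $(it)^j/j!$ with the defining series from the preliminaries immediately gives $\cum_j(X) = \lambda$ for every positive integer $j$, which is exactly the claim.

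There is essentially no obstacle here: the only mild care needed is to justify the interchange of summation with the expectation and the convergence of the series, but since the Poisson characteristic function is entire and the exponential series converges everywhere, all the manipulations are fully rigorous. I would also note in passing that this computation recovers the familiar facts that the mean and the variance of a Poisson variable both equal $\lambda$ (the $j=1$ and $j=2$ cases), consistent with the remarks on cumulants in Section~\ref{sec:Prelims}.
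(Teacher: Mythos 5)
Your proof is correct and takes essentially the same route as the paper's: both compute the cumulant generating function of $\Poisson(\lambda)$ in closed form as $\lambda(e^{t}-1)$ and read off that every cumulant equals $\lambda$. The only cosmetic difference is that you extract the coefficients by matching the power series of the log-characteristic function (as in the paper's definition of cumulants), while the paper's appendix proof differentiates the log of the moment generating function $\ell$ times at $t=0$; these are equivalent operations.
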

\begin{proof}
  The moment generating function of the Poisson distribution is given by $M(t) = \exp(\lambda (e^{t} - 1))$.
  The cumulant generating function is thus $g(t) = \log(M(t)) = \lambda(e^{t} - 1)$.
  The $\ell$\textsuperscript{th} derivative $(\ell \geq 1)$ is given by $g^{(\ell)}(t) = \lambda e^{t}$.
  
  By definition, $\cum_{\ell}(X) = g^{(\ell)}(0) = \lambda$.
\end{proof}

\begin{lemma}[Absolute moments of the Gaussian distribution] \label{lem:GaussianMoments}
The absolute moments of the Gaussian random variable $\eta \sim N(0, \sigma^2)$\anonnote{xxx} are given by:
\[
\E{\abs{\eta}^\ell} =  
\begin{cases}
    	\sigma^\ell \frac{\ell!}{2^{\sfrac \ell 2}(\sfrac \ell 2)!} & \text{if $\ell$ is even} \\
    	\sigma^\ell 2^{\sfrac \ell 2}(\frac{\ell-1}2)! \frac 1 {\sqrt \pi } & \text{if $\ell$ is odd}.
\end{cases}
\]
\end{lemma}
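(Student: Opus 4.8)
The plan is to reduce to the standard Gaussian by scaling and then evaluate a single Gamma integral. Writing $\eta = \sigma Z$ with $Z \sim \Normal(0,1)$, homogeneity of the expectation under scaling gives $\E{\abs{\eta}^\ell} = \sigma^\ell \E{\abs{Z}^\ell}$, so it suffices to compute $\E{\abs{Z}^\ell}$. Since the standard Gaussian density is symmetric about the origin, $z \mapsto \abs{z}^\ell$ is even, and hence
\[
\E{\abs{Z}^\ell} = \frac{2}{\sqrt{2\pi}} \int_0^\infty z^\ell e^{-z^2/2}\,dz .
\]

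Next I would apply the substitution $u = z^2/2$ (so $z = \sqrt{2u}$ and $dz = du/\sqrt{2u}$), which turns the integrand into $2^{(\ell-1)/2} u^{(\ell-1)/2} e^{-u}$ and produces a Gamma integral:
\[
\E{\abs{Z}^\ell} = \frac{2}{\sqrt{2\pi}}\, 2^{(\ell-1)/2} \int_0^\infty u^{(\ell-1)/2} e^{-u}\,du = \frac{2^{\ell/2}}{\sqrt{\pi}}\, \Gamma\!\left(\frac{\ell+1}{2}\right),
\]
after collapsing the leading constant via $\tfrac{2}{\sqrt{2\pi}}\, 2^{(\ell-1)/2} = 2^{\ell/2}/\sqrt{\pi}$ and recognizing $\int_0^\infty u^{(\ell-1)/2} e^{-u}\,du = \Gamma\!\left(\tfrac{\ell+1}{2}\right)$.

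Finally I would split on the parity of $\ell$ and substitute the standard values of the Gamma function at integers and half-integers. When $\ell$ is odd, $(\ell+1)/2$ is a positive integer, so $\Gamma\!\left(\tfrac{\ell+1}{2}\right) = \left(\tfrac{\ell-1}{2}\right)!$, giving $\E{\abs{Z}^\ell} = 2^{\ell/2}\left(\tfrac{\ell-1}{2}\right)!/\sqrt{\pi}$, which is the odd case. When $\ell$ is even, writing $\ell = 2m$ and using the half-integer value $\Gamma\!\left(m + \tfrac{1}{2}\right) = \sqrt{\pi}\,(2m)!/(4^m m!)$ gives $\E{\abs{Z}^\ell} = (2m)!/(2^m m!) = \ell!/(2^{\ell/2}(\ell/2)!)$. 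Multiplying both cases by $\sigma^\ell$ recovers the claimed formula. There is no genuine obstacle here; the only mild subtlety is recalling the half-integer value of $\Gamma$ in the even case, and this can be sidestepped entirely by noting that for even $\ell$ one has $\abs{Z}^\ell = Z^\ell$, so the even absolute moments coincide with the ordinary even moments $(\ell-1)!! = \ell!/(2^{\ell/2}(\ell/2)!)$, which serves as an independent check of the computation.
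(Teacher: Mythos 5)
Your proposal is correct, and it is more self-contained than what the paper does. The paper does not actually compute anything beyond a parity simplification: it cites the literature twice, once for the even case (Kendall's book) and once for the general master formula $\E{\abs{\eta}^{\ell}} = \sigma^\ell 2^{\ell/2}\,\Gamma\bigl(\tfrac{\ell+1}{2}\bigr)\tfrac{1}{\sqrt{\pi}}$ (Winkelbauer), and then only observes that for odd $\ell$ the Gamma value collapses to the factorial $\bigl(\tfrac{\ell-1}{2}\bigr)!$. You instead derive that master formula from scratch: reduce to the standard Gaussian by homogeneity, use symmetry to write the absolute moment as twice the one-sided integral, and evaluate it with the substitution $u = z^2/2$, which produces exactly $\tfrac{2^{\ell/2}}{\sqrt{\pi}}\Gamma\bigl(\tfrac{\ell+1}{2}\bigr)$; your constant bookkeeping ($\tfrac{2}{\sqrt{2\pi}}2^{(\ell-1)/2} = 2^{\ell/2}/\sqrt{\pi}$) is right, and your parity split, including the half-integer Gamma value in the even case, reproduces both branches of the lemma. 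What the paper's route buys is brevity, at the cost of an external dependency; what yours buys is a fully verifiable two-line integral argument plus the useful sanity check that the even case agrees with the classical double-factorial moments $(\ell-1)!! = \ell!/(2^{\ell/2}(\ell/2)!)$, which the paper's citation-based proof gives no way to confirm internally.
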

The case that $\ell$ is even in Lemma \ref{lem:GaussianMoments} is well known, and can be found for instance in \cite[Section 3.4]{Kendall94}. For general $\ell$, it is known (see \cite{Winkelbauer2012}) that 
  \[
    \E{\abs{\eta}^{\ell}} = \sigma^\ell 2^{\sfrac \ell 2} \Gamma\left(\frac {\ell+1} 2 \right) \frac 1 {\sqrt \pi }  \ .
  \]
  When $\ell$ is odd, $\frac{\ell + 1} 2$ is an integer, allowing the Gamma function to simplify to a factorial:  $\Gamma\left(\frac {\ell + 1} 2\right) = \left(\frac{\ell - 1}{2}\right)!$.
  This gives the case where $\ell$ is odd in Lemma \ref{lem:GaussianMoments}.

\section{Total Variation Distance} \label{app:total-variation}

Total variation is a type of statistical distance metric between probability distributions.
In words, the total variation between two measures is the largest difference between the measures on a single event.
Clearly, this distance is bounded above by 1.

For probability measures $F$ and $G$ on a sample space $\Omega$ with sigma-algebra $\Sigma$, the total variation is denoted and defined as:
$$\d_{TV}(F,G) := \sup_{A \in \Sigma} |F(A) - G(A)|. $$

Equivalently, when $F$ and $G$ are distribution functions having densities $f$ and $g$, respectively, $$ \d_{TV}(F,G) = \frac{1}{2} \int_{\Omega} |f - g| d\mu $$
where $\mu$ is an arbitrary positive measure for which $F$ and $G$ are absolutely continuous.

More specifically, when $F$ and $G$ are discrete distributions with known densities, we can write $$ \d_{TV}(F,G) = \frac{1}{2} \sum_{k=0}^{\infty} |f(k) - g(k)| $$
where we choose $\mu$ that simply assigns unit measure to each atom of $\Omega$ (in this case, absolute continuity is trivial since $\mu(A) = 0$ only when $A$ is empty and thus $F(A)$ must also be 0). For more discussion, one can see Definition 15.3 in \cite{nielsen1997introduction} and Sect. 11.6 in \cite{royden1988real}.



\section{Sketch for the proof of Theorem~\ref{thm:low-dim-identifiability-ica}}
\label{sec:sketch-pf-thm-ICA-bound}

\paragraph{Lower bound for ICA.} 
We can use our Poissonization technique to embed difficult instances of learning GMMs into the ICA setting to prove that 
ICA is information-theoretically hard when the observed dimension $n$ is a constant using the lower bound for learning GMMs. We are not 
aware of any existing lower bounds in the literature for this problem. We only
provide an informal outline of the argument. 

Theorem~\ref{thm:low-dim-identifiability} gives us two GMMs $p$ and $q$ of identity covariance Gaussians that are exponentially close with respect to $k^2$ (the number of points used to generate the Gaussian means) in $L^1$ distance but far in parameter distance.
We apply the basic reduction from Section~\ref{sec:reduction} with $\lambda$ set to the number of Gaussian means associated with the respective GMMs $p$ and $q$ to obtain the ideal noisy ICA models $X_p = A_pS_p + \eta(\tau)$ and $X_q = A_qS_q + \eta(\tau)$ (model (1) from Section~\ref{sec:reduction}).
Then, we let $S_p$ and $S_q$ take on the scaling information of the ICA model by replacing $S_{pi}$ and $S_{qj}$ by $\norm{A_{pi}}S_{pi}$ and $\norm{A_{qj}}S_{qj}$ respectively, and replacing the columns of $A_p$ and $A_q$ with their unit-normalized versions.
While Theorem~\ref{thm:low-dim-identifiability} is proven in the setting where Gaussian means are drawn uniformly at random from the unit hypercube, it can be reformulated to have Gaussian means drawn uniformly at random from the unit ball.
Under such a reformulation, the columns of $A_q$ and $B_q$ are chosen from a set of $k^2$ points taken uniformly from the unit sphere $S^{n-1} \subset \R^n$, which are thus well separated with high probability.

Recall that $R_p = \sum_{i} S_{pi}$ and $R_q = \sum_{i} S_{qi}$ are Poisson distributed with parameters $m_p$ and $m_q$ denoting the number of columns of $A_p$ and $A_q$ respectively.
Lemma~\ref{lem:poisson-threshold} implies that for a choice of $\tau_p$ which is linear in $m_p$, the probability of a draw with $R_p > \tau_p$ is exponentially small, and similarly for $\tau_q$.  
In particular, we choose $\tau = \max(\tau_p, \tau_q)$ for the above ICA models.

Now since the $L^1$ (and hence total variation) distance between $p$ and $q$ is exponentially small in $k^2$ (upper bound on the number of components), 
the distance between the two resulting ICA models produced by the reduction is also exponentially small (specifically, the total variation distance between the random variables $X_p$ and $X_q$).
To see this, we must condition on several cases.  First, conditioning either model on $R > \tau$, we have that $\Pr(R > \tau)$ is exponentially small, and hence its contribution to the overall total variation distance between $X_p$ and $X_q$ is exponentially small.
Conditioning on $R = z$ where $z \in \{0, 1, \ldots, \tau\}$, then the facts that $p$ and $q$ are close in total variation distance and that total variation distance satisfies a version of the triangle inequality (that is for random variables $C, D, E, F$, $d_{TV}(C + D, E + F) \leq d_{TV}(C, E) + d_{TV}(D, F)$) imply that by viewing $X_p$ (and similarly for $X_q$) as the sum of $z$ draws from the distribution $p$ and $\tau-z$ draws from the additive Gaussian noise distribution, the total variation distance between $X_p$ and $X_q$ conditioned on $R=z$ is still exponentially small.
Thus, the non-conditional distributions of $X_p$ and $X_q$ will be exponentially close in $k$ in total variation distance.
In particular, the sample complexity of distinguishing between $X_p$ and $X_q$ is exponential in $k$.

One can also interpret ICA with Gaussian noise as ICA without noise by treating the noise as extra signals: If $X = AS + \eta$
is an ICA model where $A \in \R^{n \times m}$ and $\eta \in \R^n$ is spherical Guassian noise, then by defining 
$A' := [A | I_n]$, and $S' := [S^T, \eta^T]^T$ we get
$X = A'S'$ which is a noiseless model with some of the signals being Gaussian. In such cases, algorithms (such as that of \cite{GVX}) 
are able to still recover
the non-Gaussian portion $A$ of $A'$. Our result shows that such algorithms cannot be efficient if the observations are in small 
dimensions (i.e. $n$ is small). 

\section{}

\subsection{Underdetermined ICA theorem}\label{subsec:UICA_noisy}

\begin{theorem}[\cite{GVX}]\label{thm:UICA_noisy}\lnote{move to appendix}
Let a random vector $x \in \R^n$ be given by an underdetermined ICA model with unknown Gaussian noise $x= As + \eta$ where $A \in \R^{n \times m}$ has unit norm columns, and both $A$ and the covariance matrix $\Sigma \in \R^{n \times n}$
are unknown. Let $d \in 2 \N$ be such that $\sigma_m(A^{\odot d/2}) > 0$. Let $k > d$ be such that
 for each $s_i$, there is a $k_i$ satisfying $d < k_i \le k$ and $\abs{\cum_{k_i}(s_i)} \ge
  \Delta$, and $\E{ \abs{s_i}^k} \le M$.  
Moreover, suppose that the noise also satisfies the same moment condition: $\E{\abs{\angles{u, \eta_i}}^k} \le M$ for any unit
vector $u \in \R^n$ (this is satisfied if we have $k! \sigma^k \leq M$ where $\sigma^2$ is the maximum eigenvalue of 
$\Sigma$).  
Then algorithm \textbf{UnderdeterminedICA} returns a set of $n$-dimensional vectors $(\tilde A_i)_{i=1}^m$ so that for some permutation $\pi$ of $[m]$ and signs $\alpha_i \in \{-1, 1\}$ we have $\norm{\alpha_i \tilde A_{\pi(i)} - A_i} \leq \eps$ for all $i \in [m]$.
Its sample and time complexity are $\text{poly}\left( n^{k} , m^{k^2}, M^k, 1/ \Delta^k, 1/\sigma_m(A^{\odot d/2})^k, 1/\eps, 1/\delta \right)$.
\end{theorem}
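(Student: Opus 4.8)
The plan is to exploit the tensor structure of the higher-order derivatives of the second characteristic function $\psi(u) = \log \E{e^{i\langle u, x\rangle}}$, together with the fact that additive Gaussian noise is invisible to statistics of order $\geq 3$. First I would observe that under the model $x = As + \eta$ with independent coordinates $s_i$ and Gaussian $\eta$, the function $\psi$ separates as $\psi(u) = \sum_{i=1}^m \psi_i(\langle A_i, u\rangle) + \psi_\eta(u)$, where each $\psi_i$ is the second characteristic function of $s_i$ and $\psi_\eta$ is a quadratic (the second characteristic function of a Gaussian). Consequently, for $d \geq 3$ the order-$d$ derivative tensor of $\psi$ at a point $u$ has no contribution from $\eta$ and equals $\sum_{i=1}^m \psi_i^{(d)}(\langle A_i, u\rangle)\, A_i^{\otimes d}$. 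This cancellation is exactly the feature that lets us handle unknown Gaussian noise for free, and it is why the hypotheses are stated in terms of cumulants $\cum_{k_i}(s_i)$ of order exceeding $d$.

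Next I would reshape (matricize) this symmetric order-$d$ tensor into an $n^{d/2} \times n^{d/2}$ matrix, splitting the $d$ indices into two blocks of $d/2$. Under this reshaping, $A_i^{\otimes d}$ becomes $\vec{A_i^{\otimes d/2}}\,\vec{A_i^{\otimes d/2}}^T = A_i^{\odot d/2}(A_i^{\odot d/2})^T$, so the reshaped derivative matrix at $u$ equals $M\,\mathrm{diag}\bigl(\psi_1^{(d)}(\langle A_1, u\rangle), \ldots, \psi_m^{(d)}(\langle A_m, u\rangle)\bigr)\,M^T$ with $M := A^{\odot d/2}$. The hypothesis $\sigma_m(A^{\odot d/2}) > 0$ says precisely that $M$ has full column rank $m$, making this a well-posed pencil. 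Evaluating at two generic points $u_1, u_2$ and forming the matrix pencil $\bigl(M D_1 M^T,\, M D_2 M^T\bigr)$, a generalized eigendecomposition (simultaneous diagonalization) recovers the columns of $M$ up to scale and permutation, i.e.\ the vectors $A_i^{\odot d/2}$. From each $A_i^{\odot d/2} = \vec{A_i^{\otimes d/2}}$ the direction $A_i$ is read off by unfolding the rank-one tensor and extracting its leading singular vector; together with the unit-norm normalization this pins down $A_i$ up to sign, matching the stated ambiguities.

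Making this robust to finite samples is where the real work — and the main obstacle — lies. The derivative tensors must be replaced by empirical estimates built from sample averages of Fourier-weighted monomials of $x$; the moment bound $M$ and the cumulant lower bound $\Delta$ control the variance and the nondegeneracy of these estimates, and hence the sample size. The delicate point is the stability of the generalized eigendecomposition: its error scales inversely with the minimum separation between the generalized eigenvalues (the ratios $\psi_i^{(d)}(\langle A_i, u_1\rangle)/\psi_i^{(d)}(\langle A_i, u_2\rangle)$) and with $\sigma_m(A^{\odot d/2})$. I would therefore draw $u_1, u_2$ at random and use an anti-concentration argument to guarantee, with high probability, that these eigenvalues are well-separated, then invoke a Davis--Kahan / matrix-perturbation bound to convert the tensor-estimation error into a bound on the recovered columns. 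Tracking all the resulting dependencies — $n^{k}$ from the ambient tensor dimension, $m^{k^2}$ from the pairwise gaps across $m$ columns, $M^{k}$, $1/\Delta^{k}$, and $1/\sigma_m(A^{\odot d/2})^{k}$ — yields the claimed polynomial complexity. The hardest part is precisely this quantitative perturbation analysis under the combined effects of near-degenerate eigenvalues and sampling noise, which is the technical heart of the result of \cite{GVX} that we invoke.
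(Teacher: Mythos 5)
This theorem is not proved in the paper at all: it is stated as an imported result from \cite{GVX} and used as a black box, so the only ``proof'' to compare against is that citation. Your sketch accurately reconstructs the route taken in \cite{GVX} itself --- derivative tensors of the second characteristic function (which annihilate the unknown Gaussian noise at orders $\geq 3$), reshaping the order-$d$ tensor into a pencil $A^{\odot d/2}\, D\, (A^{\odot d/2})^T$ whose well-posedness is exactly the hypothesis $\sigma_m(A^{\odot d/2})>0$, simultaneous diagonalization at random evaluation points, and anti-concentration plus matrix-perturbation bounds for the finite-sample guarantees --- so it is essentially the same approach, with the quantitative perturbation analysis (which you rightly flag as the technical heart) deferred to the source rather than carried out.
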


\subsection{Rudelson-Vershynin subspace bound}\label{subsec:rudelson-vershynin}

\begin{lemma}[\processifversion{vstd}{Rudelson--Vershynin~}\cite{RudelsonVershynin}] \label{lem:RudelsonVershynin}
If $A \in \R^{n \times m}$ has columns $C_1, \ldots, C_m$, then denoting $C_{-i} = \spn{C_j: j \neq i}$, we have 

\begin{align*}
\frac{1}{\sqrt{m}} \min_{i \in [m]}\dist(C_i, C_{-i}) \leq \sigma_{\min}(A),
\end{align*}

where as usual $\sigma_{\min}(A) = \sigma_{\min(m,n)}(A)$.
\end{lemma}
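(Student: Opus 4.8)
The plan is to reduce the statement to the variational characterization of the smallest singular value together with a pigeonhole argument on the largest coordinate of a minimizing vector. First I would dispose of the wide regime $m > n$: there the $m$ columns $C_1, \dots, C_m$ live in $\R^n$ and are linearly dependent, so some nontrivial combination $\sum_j a_j C_j = 0$ exists; picking any $i$ with $a_i \ne 0$ expresses $C_i$ as an element of $C_{-i}$, whence $\dist(C_i, C_{-i}) = 0$ and the claimed inequality holds trivially. Thus I may assume $m \le n$, so that $\min(m,n) = m$ and $\sigma_{\min}(A) = \sigma_m(A) = \min_{\norm{x} = 1}\norm{Ax}$ (the last equality because the eigenvalues of $\T{A}A$ are exactly the squared singular values).

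Next, let $x \in \R^m$ with $\norm{x} = 1$ attain the minimum, so that $\norm{Ax} = \sigma_{\min}(A)$ and $Ax = \sum_j x_j C_j$. The key step is pigeonhole: choose $i$ with $|x_i| = \max_j |x_j|$; since $\max_j x_j^2 \ge \frac1m\sum_j x_j^2 = \frac1m$, we get $|x_i| \ge 1/\sqrt{m}$. Factoring out $x_i$,
\[
Ax = x_i\Bigl(C_i + \sum_{j \ne i}\tfrac{x_j}{x_i}C_j\Bigr),
\]
and the inner sum $\sum_{j\ne i}(x_j/x_i)C_j$ lies in $C_{-i} = \spn{C_j : j \ne i}$ by construction.

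Finally, because $C_{-i}$ is a subspace (hence closed under negation), $\norm{C_i + v} \ge \dist(C_i, C_{-i})$ for every $v \in C_{-i}$. Combining this with $|x_i| \ge 1/\sqrt{m}$ yields
\[
\sigma_{\min}(A) = \norm{Ax} = |x_i|\,\Bigl\| C_i + \sum_{j\ne i}\tfrac{x_j}{x_i}C_j \Bigr\| \ge \frac{1}{\sqrt{m}}\,\dist(C_i, C_{-i}) \ge \frac{1}{\sqrt{m}}\min_k \dist(C_k, C_{-k}),
\]
which is exactly the asserted bound. There is no genuine obstacle in this argument; the only points demanding care are making the $m$ versus $n$ case split explicit so that the identity $\sigma_{\min}(A) = \min_{\norm{x}=1}\norm{Ax}$ is legitimate, and invoking closedness of $C_{-i}$ under negation so that the factored-out vector can be absorbed into the distance term.
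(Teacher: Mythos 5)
Your proof is correct. Note that the paper itself contains no proof of this lemma: it is imported directly from the cited Rudelson--Vershynin reference, so there is no internal argument to compare against. Your route --- the variational characterization $\sigma_{\min}(A)=\min_{\norm{x}=1}\norm{Ax}$ (valid once you reduce to $m\le n$), pigeonhole to find a coordinate with $|x_i|\ge 1/\sqrt{m}$, and then absorbing $\sum_{j\ne i}(x_j/x_i)C_j$ into $\dist(C_i,C_{-i})$ using that $C_{-i}$ is a subspace --- is the standard proof of this fact, essentially the one in the cited source. Your handling of the two regimes is also the right fix for the only subtle point: when $m>n$ the columns are linearly dependent, so the left-hand side vanishes and the bound is trivial, while for $m\le n$ the variational identity for $\sigma_{\min(m,n)}(A)=\sigma_m(A)$ holds because $\T{A}A$ is $m\times m$ with eigenvalues exactly the squared singular values.
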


\subsection{Carbery-Wright anticoncentration}\label{subsec:carbery-wright}

The version of the anticoncentration inequality we use is explicitly given in \cite{MOS} which
in turn follows immediately from \cite{CarberyWright}:

\begin{lemma}[\cite{MOS}]\label{lem:CarberyWright}
Let $Q(x_1, \ldots, x_n)$ be a multilinear polynomial of degree $d$. 
Suppose that $\Vr{Q} = 1$ when $x_i \sim \Normal(0,1)$ for all $i$. Then there exists an absolute constant $C$
such that for $t \in \R$ and $\epsilon > 0$,
\begin{align*}
\Pr_{(x_1, \ldots, x_n) \sim \Normal(0,I_n)}(\abs{Q(x_1, \ldots, x_n)-t} \leq \epsilon) \leq C d \epsilon^{1/d}. 
\end{align*}
\end{lemma}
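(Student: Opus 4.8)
The plan is to derive this multilinear statement directly from the general Carbery--Wright anticoncentration inequality of \cite{CarberyWright} (restated in \cite{MOS}); the hypothesis of multilinearity plays no essential role here and is retained only because it is the form actually used in the application to \eqref{eqn:polynomial}. The general inequality asserts that there is a universal constant $C$ so that for every polynomial $p$ of degree at most $d$ in $x \sim \Normal(0, I_n)$ normalized to have unit $L^2$ norm, i.e. $\mathbb{E}[p(x)^2] = 1$, one has $\prob{\abs{p(x)} \le \delta} \le C d\, \delta^{1/d}$ for all $\delta > 0$. Since a multilinear polynomial of degree $d$ is in particular a polynomial of degree at most $d$, this general statement is available to us as a black box.

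First I would reduce the shifted, variance-normalized quantity in the lemma to the origin-centered, $L^2$-normalized quantity to which the general inequality applies. Fix $t$ and set $p_0 := Q - t$, a polynomial of degree at most $d$. Because $\Var(Q) = 1$ by hypothesis, $\mathbb{E}[p_0(x)^2] = \Var(Q) + (\mathbb{E}[Q(x)] - t)^2 \ge \Var(Q) = 1$, so $s := \sqrt{\mathbb{E}[p_0(x)^2]} \ge 1$. Put $p := p_0/s$, which has degree at most $d$ and unit $L^2$ norm, exactly as required by the general inequality.

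Next I would apply the general inequality to $p$ with $\delta = \epsilon/s$. This gives
\[
  \prob{\abs{Q(x) - t} \le \epsilon} = \prob{\abs{p(x)} \le \epsilon/s} \le C d\, (\epsilon/s)^{1/d} \le C d\, \epsilon^{1/d},
\]
where the final inequality uses $s \ge 1$. Keeping the same constant $C$ yields the claim, uniformly in $t$.

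The only genuine content lies in the general Carbery--Wright inequality itself, which we invoke from \cite{CarberyWright}; the reduction above is elementary. The one point that requires care---and is the main obstacle to a fully self-contained argument---is matching the normalization conventions: the variance normalization $\Var(Q) = 1$ must be converted into a lower bound on the $L^2$ norm of the recentered polynomial $Q - t$, which is precisely what the computation of $\mathbb{E}[(Q-t)^2]$ supplies. It is exactly the resulting bound $s \ge 1$ that lets us discard the $s$-dependence and recover the clean $\epsilon^{1/d}$ rate uniformly over all shifts $t$.
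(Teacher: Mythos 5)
Your proposal is correct and takes essentially the same route as the paper: the paper gives no proof of its own, stating the lemma as a black-box citation of \cite{MOS}, which it says ``follows immediately from'' \cite{CarberyWright}, and your argument simply makes that immediate derivation explicit via the shift-and-normalization reduction. Both steps of that reduction are sound---$\mathbb{E}[(Q-t)^2]=\Var(Q)+(\mathbb{E}[Q]-t)^2\geq 1$ handles the variance-versus-second-moment convention and the shift $t$ simultaneously, and $s\geq 1$ gives $(\epsilon/s)^{1/d}\leq\epsilon^{1/d}$---so nothing is missing beyond the general Carbery--Wright inequality itself, which the paper likewise invokes without proof.
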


\section{Recovery of Gaussian Weights} \label{sec:WeightRecovery}

\paragraph{Multivariate cumulant tensors and their properties.}
Our technique for the recovery of the Gaussian weights relies on the tensor properties of multivariate cumulants that have been used in the ICA literature.

Given a random vector $Y \in \R^n$, the moment generating function of $Y$ is defined as $M_Y(t) := \mathbb{E}_Y(\exp(t^TY))$.
The \textit{cumulant generating function} is the logarithm of the moment generating function: $g_{\ds Y}(t) := \log(\mathbb{E}_{\ds Y}(\exp(t^T Y))$.


Similarly to the univariate case, multivariate cumulants are defined using the Taylor expansion of the cumulant generating function.
We use both $\cumtns{Y}{j_1, \dots, j_\ell}$ and $\cum(Y_{j_1}, \dots, Y_{j_\ell})$ to denote the order-$\ell$ cross cumulant between the random variables $Y_{j_1}, Y_{j_2}, \dots, Y_{j_\ell}$.
Then, the cross-cumulants $\cumtns{Y}{j_1, \dots, j_\ell}$ are defined as the coefficients of the Taylor expansion of $g_{\ds Y}(t)$ around 0, and can be obtained using the formula $\cumtns{Y}{j_1, \dots, j_\ell} = \frac{\partial}{\partial t_{j_1}} \cdots \frac{\partial}{\partial t_{j_\ell}} g_{\ds Y}(t) \big|_{t=0}$.
When unindexed, $\cumtns{Y}{}$ will denote the full order-$\ell$ tensor containing all cross-cumulants, with the order of the tensor being made clear by context.
In the special case where $j_1 = \cdots = j_\ell = j$, we obtain the order-$\ell$ univariate cumulant $\cum_{\ell}(Y_j) = \cumtns{Y}{j, \dots, j}$ ($j$ repeated $\ell$ times) previously defined.
We will use some well known properties of multivariate cumulants, found in Appendix~\ref{sec:cumulant-properties}.

The most theoretically justified ICA algorithms have relied on the tensor structure of multivariate cumulants, including
the early, popular practical algorithm JADE \cite{CardosoS93}.
In the fully determined ICA setting in which the number source signals does not exceed the ambient dimension, the papers \cite{AroraGMS12} and \cite{Belkin2012} demonstrate that ICA with additive Gaussian noise can be solved in polynomial time and using polynomial samples.  The tensor structure of the cumulants was (to the best of our knowledge) first exploited in \cite{FOOBI} and later in \cite{BIOME} to solve underdetermined ICA.
Finally, \cite{GVX} provides an algorithm with rigorous polynomial time and sampling bounds \vnote{same exponential caveats as our proposed technique in this paper} for underdetermined ICA in the presence of Gaussian noise.

\paragraph{Weight recovery (main idea).}
Under the basic ICA reduction (see section~\ref{sec:reduction}) using the Poisson distribution with parameter $\lambda$, we have that $X = AS + \eta$ is observed such that $A = [\mu_1 | \cdots | \mu_m]$ and $S_i \sim \Poisson (w_i \lambda)$.
As $A$ has already been recovered, what remains to be recovered are the weights $w_1, \cdots, w_m$.
These can be recovered using the tensor structure of higher order cumulants.
The critical relationship is captured by the following Lemma:
\begin{lemma} \label{lem:cum_COV}
  Suppose that $X = AS + \eta$ gives a noisy ICA model.
  When $\cumtns{X}{}$ is of order $\ell > 2$, then $ \vec{\cumtns{X}{}} = A^{\odot \ell} (\cum_\ell(S_1), \dotsc, \cum_\ell(S_m))^T$.
\end{lemma}
\begin{proof}
  It is easily seen that the Gaussian component has no effect on the cumulant:
  \begin{equation*}
    \cumtns{X}{} = \cumtns{AS + \eta}{} = \cumtns{AS}{}  + \cumtns{\eta}{}
      = \cumtns{AS}{}
  \end{equation*}
  Then, we expand $\cumtns X {}$:
  \begin{align*}
    \cumtns{X}{i_1, \cdots, i_\ell} &= \cumtns{AS}{i_1, \cdots, i_\ell} =\cum((AS)_{i_1}, \cdots, (AS)_{i_\ell}) \\
    &= \cum\left(\sum_{j_1 = 1}^m A_{i_1j_1}S_{j_1}, \cdots, \sum_{j_\ell = 1}^m A_{i_\ell j_\ell}S_{j_\ell}\right) \\
    &= \sum_{j_1, \cdots, j_\ell \in [m]} \left(\prod_{k=1}^\ell A_{i_kj_k}\right) \cum(S_{j_1}, \cdots, S_{j_\ell}) & \text{by multilinearity}
  \end{align*}
  But, by independence, $\cum(S_{j_1}, \cdots, S_{j_m}) = 0$ whenever $j_1 = j_2 = \cdots = j_\ell$ fails to hold.
  Thus, 
  \begin{align*}
    \cumtns{X}{i_1, \cdots, i_\ell} &= \sum_{j=1}^m \left(\prod_{k=1}^\ell A_{i_kj}\right) \cum_\ell(S_j) = \sum_{j=1}^m \big((A_j)^{\otimes \ell}\big)_{i_1, \cdots, i_\ell} \cum_\ell(S_j)
  \end{align*}
  Flattening yields: $\vec{\cumtns{X}{}} = A^{\odot \ell}(\cum_\ell(S_1), \cdots, \cum_\ell(S_m))^T$.
\end{proof}



In particular, we have
that $S_i \sim \Poisson(w_i \lambda)$ with $w_i$ the probability of sampling from the $i$\textsuperscript{th} Gaussian.  Given knowledge of $A$ and the cumulants
of the Poisson distribution, we can recover the Gaussian weights.
  \begin{theorem}
    Suppose that $X = AS + \eta(\tau)$ is the unrestricted noisy ICA model from the basic reduction (see section~\ref{sec:reduction}).
    Let $\ell > 2$ be such that $A^{\odot \ell}$ has linearly independent columns, 
and let $(A^{\odot \ell})^\dagger$ be its Moore-Penrose pseudoinverse.  
    Let $\cumtns{X}{}$ be of order $\ell$.
    Then $\frac 1 \lambda (A^{\odot \ell})^\dagger \vec{\cumtns{X}{}}$ is the vector of mixing weights $(w_1, \ldots, w_m)^T$ of the Gaussian mixture model.
  \end{theorem}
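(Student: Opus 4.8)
The plan is to combine the structural identity for cumulant tensors of an ICA model (Lemma~\ref{lem:cum_COV}) with the explicit formula for the cumulants of a Poisson random variable (Lemma~\ref{lem:PoissonCumulants}), and then invoke the left-inverse property of the Moore--Penrose pseudoinverse. The whole argument is essentially a substitution followed by a single linear-algebra observation, so I do not expect any serious obstacle; the only point requiring a moment of care is justifying that $(A^{\odot \ell})^\dagger$ acts as a genuine left inverse, which is exactly where the hypothesis of linearly independent columns is used.

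First I would apply Lemma~\ref{lem:cum_COV} to the model $X = AS + \eta(\tau)$. Since $\cumtns{X}{}$ is of order $\ell > 2$ and the additive Gaussian noise $\eta(\tau)$ is independent of $S$, the Gaussian contribution vanishes and we obtain
\begin{equation*}
  \vec{\cumtns{X}{}} = A^{\odot \ell}\,\bigl(\cum_\ell(S_1), \dotsc, \cum_\ell(S_m)\bigr)^{\mathrm{T}}.
\end{equation*}
Next I would use that, under the basic reduction, each coordinate signal satisfies $S_i \sim \Poisson(w_i \lambda)$ and that the $S_i$ are mutually independent (Lemma~\ref{lem:Poisson-independence}). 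By Lemma~\ref{lem:PoissonCumulants}, the $\ell$th cumulant of $\Poisson(w_i\lambda)$ equals its parameter, so $\cum_\ell(S_i) = w_i \lambda$ for every $i$. Substituting and pulling out the common factor $\lambda$ gives
\begin{equation*}
  \vec{\cumtns{X}{}} = \lambda\, A^{\odot \ell}\,(w_1, \dotsc, w_m)^{\mathrm{T}}.
\end{equation*}

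Finally, because $A^{\odot \ell}$ has linearly independent columns by hypothesis, its Moore--Penrose pseudoinverse is a left inverse, i.e.\ $(A^{\odot \ell})^\dagger A^{\odot \ell} = I_m$. Multiplying the previous display on the left by $\tfrac{1}{\lambda}(A^{\odot \ell})^\dagger$ therefore yields
\begin{equation*}
  \frac{1}{\lambda}(A^{\odot \ell})^\dagger \vec{\cumtns{X}{}}
    = (A^{\odot \ell})^\dagger A^{\odot \ell}\,(w_1, \dotsc, w_m)^{\mathrm{T}}
    = (w_1, \dotsc, w_m)^{\mathrm{T}},
\end{equation*}
which is the claimed vector of mixing weights. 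This completes the argument; the step most worth stating carefully is the left-inverse identity, since it is the sole place the linear-independence assumption enters and it is what makes the recovery exact rather than merely a least-squares approximation.
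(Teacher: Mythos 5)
Your proposal is correct and follows essentially the same route as the paper's proof: apply Lemma~\ref{lem:cum_COV} to kill the Gaussian noise and obtain $\vec{\cumtns{X}{}} = A^{\odot \ell}(\cum_\ell(S_1),\dotsc,\cum_\ell(S_m))^{\mathrm{T}}$, substitute $\cum_\ell(S_i)=w_i\lambda$ from Lemma~\ref{lem:PoissonCumulants}, and cancel via the pseudoinverse. The only difference is that you make explicit the left-inverse identity $(A^{\odot \ell})^\dagger A^{\odot \ell} = I_m$ that the full-column-rank hypothesis provides, which the paper leaves implicit; this is a welcome clarification, not a deviation.
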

  \begin{proof}
  From Lemma \ref{lem:PoissonCumulants}, $\cum_{\ell}(S_i) = \lambda w_i$.  
  Lemma \ref{lem:cum_COV} implies that  $\vec{\cumtns{X}{}} = \lambda A^{\odot \ell} (w_1, \dotsc, w_m)^T$.
  Multiplying on the left by $\frac 1 \lambda (A^{\odot \ell})^\dagger$ gives the result.
  \end{proof}


\end{document}